\DeclarePairedDelimiter\floor{\lfloor}{\rfloor}
\newcommand*{\rom}[1]{\expandafter\@slowromancap\romannumeral #1@}
\newcommand{\squeezeup}{\vspace{-2.5mm}}
\newcommand{\upperRomannumeral}[1]{\uppercase\expandafter{\romannumeral#1}}
\newtheorem{lemma}{Lemma}{}
  \newtheorem{thm}{Theorem}
\theoremstyle{remark} \newtheorem{remark}{Remark}
\newtheorem{theorem}{Theorem}
\newtheorem{corollary}[theorem]{Corollary}
\title{Signal and Noise Statistics Oblivious  Sparse Reconstruction using OMP/OLS.} 
\author{Sreejith Kallummil,  \hspace{0cm} Sheetal Kalyani  \\
 Department of Electrical Engineering \\  Indian Institute of Technology Madras\\
  Chennai, India 600036 \\
  \{ee12d032,skalyani\}@ee.iitm.ac.in
  }
\begin{document}
\maketitle
\begin{abstract}
Orthogonal matching pursuit (OMP) and orthogonal least squares (OLS)  are widely used for  sparse signal reconstruction in under-determined linear regression problems. The performance of these compressed sensing (CS) algorithms depends crucially on the \textit{a priori} knowledge of either the sparsity of the signal ($k_0$)  or noise variance ($\sigma^2$). Both $k_0$ and $\sigma^2$  are unknown in general and extremely difficult to estimate in under determined models. This limits the application of OMP and OLS in many practical situations. In this article, we develop two computationally efficient frameworks namely TF-IGP and RRT-IGP  for using OMP and OLS  even when $k_0$ and $\sigma^2$  are unavailable.   Both TF-IGP and RRT-IGP are analytically shown to accomplish successful sparse recovery under the same set of restricted isometry conditions on the design matrix required for OMP/OLS with \textit{a priori} knowledge of $k_0$ and $\sigma^2$. Numerical simulations also indicate a highly competitive performance of TF-IGP and RRT-IGP in comparison to OMP/OLS with \textit{a priori} knowledge of $k_0$ and $\sigma^2$.
\end{abstract}

\section{Introduction}
C\footnote{This article  is a substantial revision of  an earlier article \cite{kallummil2017tuning}   titled "Tuning Free Orthogonal Matching Pursuit" submitted to arXiv with ID arXiv:1703.05080. A substantial portion of \cite{kallummil2017tuning} is dropped and entirely new algorithms and analyses are included in this article. }onsider the linear regression model ${\bf y}={\bf X}\boldsymbol{\beta}+{\bf w}$, where ${\bf X} \in \mathbb{R}^{n \times p}$ , $n<p$ is a known design matrix, ${\bf w}$ is the noise vector and ${\bf y}$ is the observation vector. Since $n<p$, the design matrix is  rank deficient, i.e., $rank({\bf X})< p$.   Further, the columns of ${\bf X}$ are normalised to have unit Euclidean $(l_2)$ norm.  The vector $\boldsymbol{\beta}\in \mathbb{R}^{ p}$ is sparse, i.e., the support of $\boldsymbol{\beta}$ given by $\mathcal{I}=supp(\boldsymbol{\beta})=\{k:\boldsymbol{\beta}_k\neq 0\}$ has cardinality $k_0=|\mathcal{I}|\ll p$. The noise vector ${\bf w}$ is assumed to have  a Gaussian distribution with mean ${\bf 0}_n$ and covariance $\sigma^2{\bf I}_n$, i.e., ${\bf w} \sim \mathcal{N}({\bf 0}_n,\sigma^2{\bf I}_n)$ or ${\bf w}$ is assumed to be  $l_2$ bounded, i.e., $\|{\bf w}\|_2\leq \epsilon_2$. The signal to noise ratio (SNR) in this regression model is defined  as  SNR$=\frac{\|{\bf X\boldsymbol{\beta}}\|_2^2}{n\sigma^2}$ for Gaussian noise and SNR=$\frac{\|{\bf X\boldsymbol{\beta}}\|_2^2}{\epsilon_2^2}$ for $l_2$ bounded noise.  In this article, we consider the following  two  problems  in the context of recovering sparse vectors.

P1). Estimate ${\boldsymbol{\beta}}$ with the objective of minimizing the normalized mean squared error NMSE$(\hat{{\boldsymbol{\beta}}})=\frac{\mathbb{E}(\|{\boldsymbol{\beta}}-\hat{{\boldsymbol{\beta}}}\|_2^2)}{\|\boldsymbol{\beta}\|_2^2}$.

P2). Estimate  ${\boldsymbol{\beta}}$ with the objective of minimizing  support recovery error $PE(\hat{\boldsymbol{\beta}})=\mathbb{P}(\hat{\mathcal{I}}\neq \mathcal{I})$, where $\hat{\mathcal{I}}=supp(\hat{\boldsymbol{\beta}})$.

  These problems  known in the  signal processing community under the compressed sensing\cite{eldar2012compressed} paradigm has large number of applications like face recognition\cite{eldar2012compressed}, direction of arrival estimation\cite{single_snap}, MIMO detection\cite{MLOMP} etc.   A number of algorithms like least absolute shrinkage and selection operator (LASSO)\cite{tropp2006just}, Dantzig selector (DS)\cite{candes2007dantzig}, subspace pursuit (SP)\cite{subspacepursuit}, compressive sampling matching pursuit (CoSaMP)\cite{cosamp},  OMP \cite{tropp2004greed,cai2011orthogonal,OMP_wang,extra,omp_sharp,omp_necess}, OMP with replacement (OMPR)\cite{prateek}, OLS\cite{ERC-OMP,OLSarxiv} etc. are proposed to solve the above mentioned problems. For the efficient performance of these algorithms, a number of tuning parameters (or hyper parameters) need to be fixed. These tuning parameters require \textit{a priori} knowledge of signal parameters like sparsity $k_0$ or noise statistics like $\{\sigma^2,\epsilon_2\}$ or both. Further, a level of user subjectivity is often required  even when these statistics are known \textit{a priori}. \\
{\bf Definition 1:-} A CS algorithm \textit{Alg} is called signal and noise statistic oblivious (SNO) if the efficient performance  of \textit{Alg} does not require \textit{a priori} knowledge of signal or noise parameters.  Further, a SNO CS algorithm \textit{Alg} is  called tuning free (TF), if the optimal performance  of \textit{Alg} does not depend on any user defined hyper parameters. 

Algorithms like LASSO, DS etc. are  noise statistic dependent in the sense that the optimal choice of hyper parameters in these algorithms require knowledge of $\{\sigma^2,\epsilon_2\}$. Greedy algorithms like SP, CoSaMP, OMPR etc. are signal statistic dependent in the sense that they require \textit{a priori} knowledge of $k_0$ for their optimal performance. Algorithms  like OMP, OLS etc. can be operated as  either  signal dependent  with $k_0$ as input or  noise dependent  with $\{\sigma^2,\epsilon_2\}$ as input. However, neither $k_0$ nor $\sigma^2$ are \textit{a priori} known in most practical applications. Further, unlike the case of full rank linear regression models $(n>p)$ where one can readily estimate $\sigma^2$ using the maximum likelihood  estimator $\sigma^2_{ML}=\frac{\|\left({\bf I}_n-{\bf X}({\bf X}^T{\bf X})^{-1}{\bf X}^T\right){\bf y}\|_2^2}{n-p}$, estimating $\sigma^2$ in under determined linear regression models $(n<p)$ is extremely difficult\cite{giraud2012}. Hence, signal/noise statistic dependent algorithms are not useful in most practical applications where the user is oblivious to signal and noise statistics. This led to the development of many SNO CS algorithms recently.
\subsection{SNO algorithms: Prior art.}
  A significant breakthrough in the design of SNO CS algorithms  is the development of square root LASSO (sq-LASSO)\cite{sqlasso}.  The optimal NMSE performance of sq-LASSO does not require \textit{a priori} knowledge of $\{\sigma^2,\epsilon_2\}$ thereby overcoming a major drawback of LASSO. However, the choice of hyper parameter in sq-LASSO is still subjective with few guidelines. On the contrary, the sparse iterative covariance-based estimation \textit{aka} SPICE\cite{spice,spice_like,spicenote} is a  tuning free CS algorithm. Both sq-LASSO, SPICE and their derivatives are based on convex optimization and hence are computationally complex.  Greedy algorithms like OMP, SP etc.  have significantly lower complexity when compared to  sq-LASSO, SPICE etc. This motivated  the low complexity  PaTh framework  in  \cite{vats2014path} that can use OMP, SP etc. in a SNO fashion. PaTh is shown to have nice asymptotic properties. However, PaTh requires the setting of a parameter $c>0$, the choice of which in finite $n$ and $p$ is subjective. Hence, PaTh is a SNO algorithm but not TF. Unfortunately, PaTh performs poorly in many SNR-sparsity regimes (see Section.\rom{7}).  To summarize, SNO algorithms like SPICE that can perform efficiently are computationally  complex, whereas, low complexity SNO frameworks like PaTh performs highly sub-optimally. This motivates the OMP/OLS based SNO frameworks  developed in this article that can deliver a highly competitive  performance with significantly lower complexity in comparison with SPICE, sq-LASSO etc. 
\subsection{Contribution of this article.}
 This article propose two novel computationally efficient frameworks for using a particular class of greedy algorithms which we call incremental greedy pursuits (IGP) in a SNO fashion, i.e., without knowing $k_0$ or $\{\sigma^2,\epsilon_2\}$ \textit{a priori}. IGP includes popular algorithms like OMP, OLS etc.  The first framework called tuning free IGP (TF-IGP)    is  devoid of any tuning parameters. Both analytical results and numerical simulations indicates a degraded performance of TF-IGP when the dynamic range of $\boldsymbol{\beta}$ given by $DR(\boldsymbol{\beta})={\underset{j \in \mathcal{I}}{\max}|\boldsymbol{\beta}_j|}/{\underset{j \in \mathcal{I}}{\min}|\boldsymbol{\beta}_j|}$ is high. Hence, TF-IGP framework is more suited for applications like \cite{MLOMP} where $DR(\boldsymbol{\beta})\approx 1$.  This motivated the development of  residual ratio threshold IGP (RRT-IGP) framework which can perform efficiently even when $DR(\boldsymbol{\beta})$ is high. RRT-IGP depends very weakly  on a tuning parameter which can be set independently of $k_0$ or $\{\sigma^2,\epsilon_2\}$. Hence, RRT-IGP is SNO, but not TF.   Both TF-IGP and RRT-GP are analytically shown to recover  the true support $\mathcal{I}$  under the same set of  conditions on the matrix ${\bf X}$ for IGP to recover the true support if $k_0$ or $\{\sigma^2,\epsilon_2\}$ are known \textit{a priori}. Unlike PaTh framework, our analysis of TF-IGP and RRT-IGP are finite sample in nature  and hence more general.  Numerical simulations  indicate that the   performance of TF-IGP (when $DR(\boldsymbol{\beta})\approx 1$) and RRT-IGP closely matches the performance of   IGP with \textit{a priori} knowledge of $k_0$ or $\{\sigma^2,\epsilon_2\}$ throughout the moderate to high SNR regime. Even in the low SNR regime, the performance gap between TF-IGP/RRT-IGP and IGP with \textit{a priori} knowledge of $k_0$ or $\{\sigma^2,\epsilon_2\}$ are not  significant. Further, we  analytically and empirically demonstrate that  TF-IGP/RRT-IGP  can outperform IGP with \textit{a priori} knowledge of $k_0$ in certain sparsity and SNR regimes. By providing a performance comparable to that of  IGP which has \textit{a priori} knowledge of $k_0$ and $\{\sigma^2,\epsilon_2\}$, TF-IGP/RRT-IGP can extend the scope of  IGP  to applications where $k_0$ or $\{\sigma^2,\epsilon_2\}$ are not known \textit{a priori}. 
\subsection{Notations used.}
{$\mathbb{E}()$  and $\mathbb{P}()$ represents the expectation and probability respectively. $A|B$ denotes the event $A$ conditioned on the event $B$.  $\|{\bf x}\|_q=\left( \sum\limits_{k=1}^p|{\bf x}_k|^q\right)^{\frac{1}{q}} $ is the $l_q$ norm of ${\bf x}\in \mathbb{R}^{p}$.
${\bf 0}_n$ is the $n\times 1$ zero vector and ${\bf I}_n$ is the $n\times n$ identity matrix. ${\bf X}^T$ is the transpose and ${\bf X}^{-1}$ is the inverse of ${\bf X}$.}
$col({\bf X})$ is the column space of ${\bf X}$. ${\bf X}^{\dagger}=({\bf X}^T{\bf X})^{-1}{\bf X}^T$ is the  Moore-Penrose pseudo inverse of ${\bf X}$. ${\bf P}_{\bf X}={\bf X}{\bf X}^{\dagger}$ is the projection matrix onto $col({\bf X})$.  ${\bf X}_{\mathcal{J}}$ denotes the sub-matrix of ${\bf X}$ formed using  the columns indexed by $\mathcal{J}$. ${\bf X}_{i,j}$ is the $[i,j]^{th}$ entry of ${\bf X}$. If ${\bf X}$ is clear from the context, we use the shorthand ${\bf P}_{\mathcal{J}}$ for ${\bf P}_{{\bf X}_{\mathcal{J}}}$. ${\bf a}_{\mathcal{J}}$  denotes the  entries of ${\bf a}$ indexed by $\mathcal{J}$.   $\chi^2_j$ denotes a central chi square random variable (R.V) with $j$ degrees of freedom (d.o.f). $\mathbb{B}(a,b)$ denotes a Beta R.V with parameters $a$ and $b$\cite{ravishanker2001first}.   ${\bf a}\sim{\bf b}$ implies that ${\bf a}$ and ${\bf b}$ are identically distributed.    $[p]$ denotes the set $\{1,\dotsc,p\}$. $\floor{x}$ denotes the floor function. $\phi$ represents the null set. For any two index sets $\mathcal{J}_1$ and $\mathcal{J}_2$, the set difference  $\mathcal{J}_1/\mathcal{J}_2=\{j:j \in \mathcal{J}_1\& j\notin  \mathcal{J}_2\}$. For any index set $\mathcal{J}\subseteq [p]$, $\mathcal{J}^C$ denotes the  complement of $\mathcal{J}$ with respect to $[p]$. $f(n)=O(g(n))$ iff $\underset{n \rightarrow \infty}{\lim}\frac{f(n)}{g(n)}<\infty$. TF-Alg/RRT-Alg represents the application of a particular algorithm `Alg' in the TF-IGP/RRT-IGP framework. $Alg({\bf y},{\bf X},k)$ represents any CS algorithm \textit{Alg} with inputs ${\bf y},{\bf X}$ and sparsity level $k$ that produce a support estimate $\hat{\mathcal{I}}_k=Alg({\bf y},{\bf X},k)$ of cardinality $|\hat{\mathcal{I}}_k|= k$ as output. $\boldsymbol{\beta}_{max}=\underset{j \in \mathcal{I}}{\max}|\boldsymbol{\beta}_j|$ and  $\boldsymbol{\beta}_{min}=\underset{j \in \mathcal{I}}{\min}|\boldsymbol{\beta}_j|$ denotes the maximum and minimum  non zero values in $\boldsymbol{\beta}$.  $DR(\boldsymbol{\beta})=\boldsymbol{\beta}_{max}/\boldsymbol{\beta}_{min}$ is the dynamic range of $\boldsymbol{\beta}$.
 \subsection{ Organization of this article:-} Section \rom{2}  discuss the concept of restricted isometry constants (RIC). Section \rom{3} discuss IGP. Section \rom{4} and \rom{5} present the TF-GP and  RRT-IGP frameworks. Section \rom{6} relates the performance of TF-OMP/RRT-OMP and  OMP  with \textit{a priori} knowledge of $k_0$.    Section \rom{7} present  numerical simulations. 
\section{ Qualifiers for CS matrices.}
 Estimating sparse vectors in under determined regression models is ill posed in general. It is known that the  efficient estimation of  $\boldsymbol{\beta}$ is possible when  ${\bf X}$  satisfies  regularity conditions like restricted isometry property (RIP)\cite{eldar2012compressed,OMP_wang},  exact recovery condition (ERC)\cite{tropp2004greed}, mutual incoherence condition (MIC)\cite{cai2011orthogonal} etc.  The analysis based on RIP is more popular in literature. Hence, this article will focus on RIP based analysis. \\
{\bf Definition 2:-} RIC of order $k$ denoted  by $\delta_k$ is defined as the smallest value of $0\leq \delta\leq 1$ that satisfies
\begin{equation}
(1-\delta)\|{\bf b}\|_2^2\leq \|{\bf Xb}\|_2^2\leq (1+\delta)\|{\bf b}\|_2^2 
\end{equation}
for all $k$-sparse ${\bf b}\in \mathbb{R}^p$\cite{eldar2012compressed}. ${\bf X}$ satisfy RIP of order $k$ if $\delta_k<1$. \\
 Lemma 1 summarizes certain useful properties of  RIC $\delta_k$. 
\begin{lemma}\label{eigenvalue}
 Let the matrix  ${\bf X}$ satisfy RIP of order $k$. Then the following results hold true.\cite{OMP_wang,subspacepursuit} \\
 a). $\delta_{k_1}\leq \delta_{k_2}$ whenever $k_1\leq k_2$. \\
 b).$\dfrac{1}{1+\delta_{k}}\|{\bf a}\|_2\leq \|\left({\bf X}_{\mathcal{J}}^T{\bf X}_{\mathcal{J}}\right)^{-1}{\bf a}\|_2\leq\dfrac{1}{ 1-\delta_k}\|{\bf a}\|_2$, $\forall \mathcal{J}\subset[p]$ with $|\mathcal{J}|\leq k$. \\
 c). $ \|{\bf X}_{\mathcal{J}}^{\dagger}{\bf a}\|_2\leq\dfrac{1}{\sqrt{ 1-\delta_k}}\|{\bf a}\|_2$, $\forall \mathcal{J}\subset[p]$ with $|\mathcal{J}|\leq k$. \\
 d). If $\mathcal{J}_1\cap\mathcal{J}_2=\phi$ and $|\mathcal{J}_1\cup\mathcal{J}_2|\leq k$, then $\|{\bf X}_{\mathcal{J}_1}^T {\bf X}_{\mathcal{J}_2}{\bf a}\|_2\leq \delta_k\|{\bf a}\|_2$. \\
 e).  If $\mathcal{J}_1\cap\mathcal{J}_2=\phi$ and $|\mathcal{J}_1\cup\mathcal{J}_2|\leq k$, then $(1-\delta_k)\|{\bf a}\|_2^2\leq \|({\bf I}_n-{\bf P}_{\mathcal{J}_1}){\bf X}_{\mathcal{J}_2}{\bf a}\|_2^2\leq (1+\delta_k)\|{\bf a}\|_2^2$ [Lemma 4, \cite{omp_sharp}].
 \end{lemma}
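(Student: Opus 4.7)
The plan is to view all five claims as direct consequences of the defining RIP inequality $(1-\delta_k)\|{\bf b}\|_2^2 \le \|{\bf Xb}\|_2^2 \le (1+\delta_k)\|{\bf b}\|_2^2$ applied to carefully chosen sparse vectors. Parts (a)--(c) are essentially spectral, while (d) and (e) each require one additional packaging step.

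For part (a), every $k_1$-sparse vector is also $k_2$-sparse when $k_1 \le k_2$, so the inequality defining $\delta_{k_2}$ must hold for a superset of the vectors constraining $\delta_{k_1}$; the smallest admissible $\delta$ therefore cannot decrease with $k$. For part (b), writing ${\bf M} := {\bf X}_{\mathcal{J}}^T {\bf X}_{\mathcal{J}}$, RIP applied to any vector supported on $\mathcal{J}$ (which is $k$-sparse since $|\mathcal{J}|\le k$) yields $(1-\delta_k){\bf I} \preceq {\bf M} \preceq (1+\delta_k){\bf I}$ in the Loewner order, and operator monotonicity of the matrix inverse gives $\tfrac{1}{1+\delta_k}{\bf I} \preceq {\bf M}^{-1} \preceq \tfrac{1}{1-\delta_k}{\bf I}$; the claimed two-sided norm bound then follows since the extremal eigenvalues of the symmetric positive definite matrix ${\bf M}^{-1}$ control $\|{\bf M}^{-1}{\bf a}\|_2/\|{\bf a}\|_2$. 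Part (c) drops out of the thin SVD of ${\bf X}_{\mathcal{J}}$: the smallest singular value is at least $\sqrt{1-\delta_k}$ by RIP, so the largest singular value of ${\bf X}_{\mathcal{J}}^{\dagger}$ is at most $1/\sqrt{1-\delta_k}$.

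Part (d), the near-orthogonality property, I would handle via a parallelogram identity. For unit vectors ${\bf b}_1,{\bf b}_2$ supported on $\mathcal{J}_1,\mathcal{J}_2$ respectively, disjointness plus $|\mathcal{J}_1 \cup \mathcal{J}_2| \le k$ make ${\bf b}_1 \pm {\bf b}_2$ a $k$-sparse vector with $\|{\bf b}_1 \pm {\bf b}_2\|_2^2 = 2$, so RIP sandwiches $\|{\bf X}({\bf b}_1 \pm {\bf b}_2)\|_2^2$ between $2(1-\delta_k)$ and $2(1+\delta_k)$. The identity $4\langle {\bf X}_{\mathcal{J}_1} {\bf b}_1, {\bf X}_{\mathcal{J}_2} {\bf b}_2 \rangle = \|{\bf X}({\bf b}_1+{\bf b}_2)\|_2^2 - \|{\bf X}({\bf b}_1-{\bf b}_2)\|_2^2$ then forces $|\langle {\bf X}_{\mathcal{J}_1} {\bf b}_1, {\bf X}_{\mathcal{J}_2} {\bf b}_2\rangle| \le \delta_k$, and maximizing over unit ${\bf b}_1$ at fixed ${\bf b}_2$ yields $\|{\bf X}_{\mathcal{J}_1}^T {\bf X}_{\mathcal{J}_2} {\bf b}_2\|_2 \le \delta_k$, which is (d) by homogeneity in ${\bf a}$.

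Finally, for part (e) the upper bound is immediate: ${\bf I}_n - {\bf P}_{\mathcal{J}_1}$ is an orthogonal projection and hence a contraction, so $\|({\bf I}_n - {\bf P}_{\mathcal{J}_1}) {\bf X}_{\mathcal{J}_2} {\bf a}\|_2^2 \le \|{\bf X}_{\mathcal{J}_2} {\bf a}\|_2^2 \le (1+\delta_k)\|{\bf a}\|_2^2$. The lower bound is the only mildly nontrivial piece, and I expect it to be the main obstacle; the trick is to repackage the projection as an RIP instance. Setting ${\bf c} := {\bf X}_{\mathcal{J}_1}^{\dagger} {\bf X}_{\mathcal{J}_2} {\bf a}$ so that ${\bf P}_{\mathcal{J}_1} {\bf X}_{\mathcal{J}_2} {\bf a} = {\bf X}_{\mathcal{J}_1} {\bf c}$, I form the composite vector ${\bf b}$ supported on $\mathcal{J}_1 \cup \mathcal{J}_2$ with ${\bf b}_{\mathcal{J}_2} = {\bf a}$ and ${\bf b}_{\mathcal{J}_1} = -{\bf c}$; then ${\bf X}{\bf b} = ({\bf I}_n - {\bf P}_{\mathcal{J}_1}) {\bf X}_{\mathcal{J}_2} {\bf a}$ while $\|{\bf b}\|_2^2 = \|{\bf a}\|_2^2 + \|{\bf c}\|_2^2 \ge \|{\bf a}\|_2^2$ because the two blocks sit on disjoint index sets. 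Applying RIP to the $k$-sparse vector ${\bf b}$ now delivers $\|({\bf I}_n - {\bf P}_{\mathcal{J}_1}) {\bf X}_{\mathcal{J}_2} {\bf a}\|_2^2 \ge (1-\delta_k)\|{\bf b}\|_2^2 \ge (1-\delta_k)\|{\bf a}\|_2^2$, closing (e).
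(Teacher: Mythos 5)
Your proof is correct in all five parts. Note, however, that the paper does not actually prove this lemma --- it is stated as a collection of known facts with citations to \cite{OMP_wang,subspacepursuit} and, for part (e), to Lemma~4 of \cite{omp_sharp} --- so there is no in-paper argument to compare against; what you have written is a self-contained derivation of the cited results. Your arguments are the standard ones from that literature: (a) by inclusion of constraint sets, (b) and (c) by reading RIP as a two-sided eigenvalue/singular-value bound on ${\bf X}_{\mathcal{J}}^T{\bf X}_{\mathcal{J}}$ (the Loewner sandwich plus operator monotonicity of the inverse is exactly right, and $\delta_k<1$ guarantees invertibility), (d) by the polarization identity applied to the $k$-sparse vectors ${\bf b}_1\pm{\bf b}_2$ followed by a supremum over unit ${\bf b}_1$, and (e) by the composite-vector trick ${\bf b}_{\mathcal{J}_2}={\bf a}$, ${\bf b}_{\mathcal{J}_1}=-{\bf X}_{\mathcal{J}_1}^{\dagger}{\bf X}_{\mathcal{J}_2}{\bf a}$, which is precisely the device used in the cited Lemma~4; the key observations that ${\bf Xb}=({\bf I}_n-{\bf P}_{\mathcal{J}_1}){\bf X}_{\mathcal{J}_2}{\bf a}$ and that disjoint supports force $\|{\bf b}\|_2^2\geq\|{\bf a}\|_2^2$ are both stated and both correct. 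The only thing your write-up buys beyond the paper is completeness: a reader gets a full proof rather than a pointer.
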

 \section{Incremental Greedy Pursuits}
The proposed SNO frameworks are based on a particular class of greedy algorithms   called incremental greedy pursuits (IGP) which is formally defined  below. \\
{\bf Definition 3:- }Consider  a CS algorithm \textit{Alg} with inputs ${\bf y},{\bf X}$ and sparsity level $k$ producing a support estimate $\hat{\mathcal{I}}_k=Alg({\bf y},{\bf X},k)$ of cardinality $|\hat{\mathcal{I}}_k|= k$ as output. \textit{Alg} is an IGP  iff it satisfies   conditions (A1) and (A2) at all SNR. Let $\hat{\mathcal{I}}_{1},\dotsc, \hat{\mathcal{I}}_{K}$ be a sequence of support estimates produced by $Alg({\bf y},{\bf X},k)$ as $k$ varies from $k=1$ to $k= K$. \\
A1).{\bf Monotonicity:} $\hat{\mathcal{I}}_{k_1} \subset \hat{\mathcal{I}}_{k_2}$  whenever $k_1<k_2\leq K$. \\
A2).{\bf Reproducibility:}  For any $k<K$, the output of  $Alg\left(({\bf I}_n-{\bf P}_{\hat{\mathcal{I}}_k}){\bf y},{\bf X},j\right) $ for $j=1,\dotsc,K-k$ should be $\hat{\mathcal{I}}_{k+1}/\hat{\mathcal{I}_k},\dotsc, \hat{\mathcal{I}}_{K}/\hat{\mathcal{I}_k}$. \\

 A1) implies that $\{\hat{\mathcal{I}}_{k}\}_{k=1}^K$ can be written as ordered sets $\hat{\mathcal{I}}_1=\{t_1\}$, $\hat{\mathcal{I}}_2=\{t_1,t_2\}$ and $\hat{\mathcal{I}}_K=\{t_1,t_2, \dotsc,t_K\}$. Reproducibility  property A2) implies that output of IGP with input $({\bf I}_n-{\bf P}_{\hat{\mathcal{I}}_k}){\bf y}$ and  sparsity level $j=1,2,\dotsc K-k$ will be of the form $\{t_{k+1}\}, \{t_{k+1}, t_{k+2}\},\dotsc, \{t_{k+1},t_{k+2},\dotsc,t_{K}\}$.  A2)  implies that the new index selected at  sparsity level $j= k+1$ depends on ${\bf y}$ only through the residual at sparsity level $k$, i.e., $({\bf I}_n-{\bf P}_{\hat{\mathcal{I}}_k}){\bf y}$.   A2) also  means that the  output of IGP for sparsity levels $j\geq k+1$  can be recreated with the residual in the $k^{th}$ level, i.e., $({\bf I}_n-{\bf P}_{\hat{\mathcal{I}}_k}){\bf y}$ as input.  We next consider some positive and negative examples for IGP. 
 \subsection{Popular IGP: Algorithms like OMP, OLS  etc.}
OMP and OLS  described in TABLE \ref{tab:omp}  are among the most popular algorithms in CS literature.  OMP starts with a null model and add that column index to the current support  that is the most correlated with the current residual.  OLS like OMP also starts with the null model, however, OLS add the column index $t^k$ that will result in the maximum reduction in the residual error $\|{\bf r}^{(k)}\|_2$.  For OMP/OLS to behave like $Alg({\bf y},{\bf X},k)$, i.e., to return a support estimate of cardinality $k$, one should run precisely $k$ iterations in TABLE \ref{tab:omp}, i.e., $\hat{\mathcal{I}}_k=Alg({\bf y},{\bf X},k)$ is equal to $\mathcal{J}_k$, the  support estimate of OMP/OLS after the $k^{th}$ iteration. Since, $\hat{\mathcal{I}}_k= \hat{\mathcal{I}}_{k-1}\cup t^k$, OMP/OLS are monotonic. Note that the index selected in the $k^{th}$ iteration of OMP/OLS, i.e.,  $t^k$ depends only on  $\mathcal{J}_{k-1}$ which is same as $\hat{\mathcal{I}}_{k-1}=Alg({\bf y},{\bf X},k-1)$. Hence,  OMP/OLS   satisfies the  reproducibility condition of IGP also.   
\begin{table}
\begin{tabular}{|l|}
\hline
  {\bf Step 1:-} Initialize the residual ${\bf r}^{(0)}={\bf y}$. $\hat{\boldsymbol{\beta}}={\bf 0}_p$,\\ 	\ \ \ \ \ \ \ \ \ \ \   Support estimate ${\mathcal{J}_0}=\phi$, Iteration counter $k=1$; \\
  {\bf Step 2:-} Update support estimate: ${\mathcal{J}_k}={\mathcal{J}_{k-1}}\cup t^k$ \\
\ \ \ \ \ \ \ \ \ \ \ \ OMP: $t^k=\underset{t \notin {\mathcal{J}_{k-1}}}{\arg\max}|{\bf X}_t^T{\bf r}^{(k-1)}|.$ \\

\ \ \ \ \ \ \ \ \ \ \ \ OLS: $t^k=\underset{t \notin {\mathcal{J}_{k-1}}}{\arg\min}\|({\bf I}_n-{\bf P}_{\mathcal{J}_{k-1}\cup t}){\bf y}\|_2 .$\\
 
  {\bf Step 4:-} Estimate $\boldsymbol{\beta}$ using current support: $\hat{\boldsymbol{\beta}}(\mathcal{J}_k)={\bf X}_{\mathcal{J}_k}^{\dagger}{\bf y}$. \\
  {\bf Step 5:-} Update residual: ${\bf r}^{(k)}={\bf y}-{\bf X}\hat{\boldsymbol{\beta}}=({\bf I}_n-{\bf P}_{\mathcal{J}_k}){\bf y}$. \\
  {\bf Step 6:-} Increment $k$. $k \leftarrow k+1$. \\
  {\bf Step 7:-} Repeat Steps 2-6, until stopping condition (SC)  is  met. \\
  {\bf Output:-} $\hat{\mathcal{I}}=\mathcal{J}_k$ and $\hat{\boldsymbol{\beta}}$. \\
 \hline
\end{tabular}
\caption{ OMP and OLS algorithms.}
\label{tab:omp}
\end{table}
\begin{remark}
Algorithms like SP\cite{subspacepursuit}, CoSaMP\cite{cosamp}, OMPR\cite{prateek} etc.  returns a support estimate with sparsity $k$ when used as $\textit{Alg}({\bf y},{\bf X},k)$. However, these algorithms do not exhibit the monotonicity and reproducibility of supports required for IGP.
\end{remark}
\subsection{Stopping conditions (SC) for OMP/OLS.}
Most of the theoretical properties of  OMP/OLS  are derived assuming  \textit{a priori} knowledge of true sparsity level $k_0$ in which case OMP/OLS stops after exactly $k_0$ iterations\cite{tropp2004greed,omp_necess,OLSarxiv}. When $k_0$ is not known, one has to rely on SC based on the properties of the residual ${\bf r}^{(k)}=({\bf I}_n-{\bf P}_{\hat{\mathcal{I}}_k}){\bf y}$ in Step 4 of TABLE \ref{tab:omp} as $k$ varies. Such residual based SC has attained a  level of maturity in the case of OMP. For example, OMP can be stopped in Gaussian noise if $\|{\bf r}^{(k)}\|_2<\sigma\sqrt{n+2\sqrt{n\log(n)}}$ \cite{cai2011orthogonal,omp_rip_noise} or $\|{\bf X}^T{\bf r}^{(k)}\|_{\infty}<\sigma \sqrt{2\log(p)}$\cite{cai2011orthogonal}. In the case of $l_2$ bounded noise, OMP can be stopped if $\|{\bf r}^{(k)}\|_2\leq\epsilon_2$.   Likewise, \cite{resdif}  suggested a SC based on the  residual difference ${\bf r}^{(k)}-{\bf r}^{(k-1)}$.  A generalized likelihood ratio based SC is developed in \cite{Xiong2014}.  All these residual based  SC requires the \textit{a priori} knowledge of $\sigma^2$. Knowing $k_0$ or $\{\sigma^2,\epsilon_2\}$ \textit{a priori} is not possible in many practical problems and estimating $\{\sigma^2,\epsilon_2\}$ when $n<p$  is extremely difficult. This makes OMP/OLS useless in many applications where $k_0$ or $\{\sigma^2,\epsilon_2\}$ are unknown \textit{a priori}. 
\begin{remark}
{For algorithms  $Alg \in \{\text{OMP},\text{OLS}\}$, we use the shorthand $Alg_{k_0}$ to represent the situation when $k_0$ is known \textit{a priori} and OMP/OLS run $k_0$ iterations. Likewise, $Alg_{\sigma^2}$ or $Alg_{\epsilon_2}$ represents the situation when $\sigma^2$ or $\epsilon_2$ are known \textit{a priori}  and  iterations in OMP/OLS are continued until $\|{\bf r}^{(k)}\|_2\leq \sigma\sqrt{n+2\sqrt{n\log(n)}}$  or $\|{\bf r}^{(k)}\|\leq \epsilon_2$.}
\end{remark}
\section{Tuning Free Incremental Greedy Pursuits.} 
\begin{table}\centering

\begin{tabular}{|l|}
\hline
{\bf Input:-} Observation ${\bf y}$, design matrix ${\bf X}$. Initial residue ${\bf r}^0={\bf y}$.\\
\ \ \ \ \ \ \ \ \ \ \ Repeat Steps 1-3 for $k=1:k_{max}=\floor{\frac{n+1}{2}} $. \\ \ \ \ \ \ \ \ \ \ \ \ Stop iterations before ${\bf r}^{(k)}={\bf 0}_n$ or  ${\bf X}_{\hat{\mathcal{I}}_k}$ is rank deficient.  \\
{\bf Step 1:-} Compute $\hat{\mathcal{I}_k}=Alg({\bf y},{\bf X},k)$.\\
{\bf Step 2:-} Compute the residual ${\bf r}^{(k)}=({\bf I}_n-{\bf P}_{\hat{\mathcal{I}_k}}){\bf y}$. \\
{\bf Step 3:-} Compute the residual ratio  $RR(k)=\frac{\|{\bf r}^{(k)}\|_2}{\|{\bf r}^{(k-1)}\|_2}$. \\
{\bf Step 4:-} Estimate $\hat{k}_{TF}=\underset{k=1:k_{max}}{\arg\min}RR(k)$. \\
{\bf Step 5:-} Support estimate
 $\hat{\mathcal{I}}=\hat{\mathcal{I}}_{\hat{k}_{TF}}$. \\  \ \ \ \ \ \ \ \ \ \ Signal estimate $\boldsymbol{\beta}$ as $\hat{\boldsymbol{\beta}}(\hat{\mathcal{I}})={\bf X}_{\hat{\mathcal{I}}}^{\dagger}{\bf y}$ and $\hat{\boldsymbol{\beta}}(\hat{\mathcal{I}}^C)={\bf 0}_{p-\hat{k}_{TF}}$.\\
{\bf Output:-} Support estimate $\hat{\mathcal{I}}$ and signal estimate $\hat{\boldsymbol{\beta}}$. \\
\hline
\end{tabular}
\caption{ Tuning free incremental greedy pursuits (TF-IGP).}
\label{tab:tf-omp}
\end{table}
TF-IGP outlined in TABLE \ref{tab:tf-omp} is a novel framework for using IGP when $k_0$ or $\{\sigma^2,\epsilon_2\}$ are not available.  This framework is based on the evolution of the residual ratio $RR(k)={\|{\bf r}^{(k)}\|_2}/{\|{\bf r}^{(k-1)}\|_2}$, where ${\bf r}^{(k)}=({\bf I}_n-{\bf P}_{\hat{\mathcal{I}}_k}){\bf y}$ is the residual corresponding to the support estimate $\hat{\mathcal{I}}_k=Alg({\bf y},{\bf X},k)$. {From the description of  OMP/OLS, it is clear that  the quantities $\{\hat{\mathcal{I}}_k\}_{k=1}^{k_{max}}$ and $\{{\bf r}^{(k)}\}_{k=1}^{k_{max}}$ required for the implementation of TF-IGP using OMP/OLS can be computed in a single run of OMP/OLS  with sparsity level $k_{max}$ as input.}
\subsection{Evolution of RR(k) for $k=1,\dotsc,k_{max}$ at high SNR.}
By the definition of IGP, the support estimates $\hat{\mathcal{I}}_k$ are monotonic, i.e., $\hat{\mathcal{I}}_{k_1}\subset \hat{\mathcal{I}}_{k_2} $ whenever $k_1<k_2$. This implies that the residual norms are  non decreasing with increasing $k$, i.e., $\|{\bf r}^{(k+1)}\|_2\leq \|{\bf r}^{(k)}\|_2$. Further, the iterations in TF-IGP are stopped before ${\bf r}^{(k)}={\bf 0}_n$. Hence, $RR(k)$ satisfies  $0<  RR(k)\leq  1$. The following lemma, intuitive and simple to prove is pivotal to the understanding of $RR(k)$\cite{tsp}. 
\begin{lemma}\label{Projection} Let ${\bf z}\in span({\bf X}_{\mathcal{I}})$. Then $({\bf I}_n-{\bf P}_{\hat{\mathcal{I}}_k}){\bf z}\neq {\bf 0}_n$ if $\mathcal{I} \not \subseteq \hat{\mathcal{I}}_k $ and $({\bf I}_n-{\bf P}_{\hat{\mathcal{I}}_k}){\bf z}= {\bf 0}_n$ if $\mathcal{I} \subseteq \hat{\mathcal{I}}_k  $.
\end{lemma}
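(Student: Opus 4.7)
The statement splits into two implications, and I would handle each separately.

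For the easy direction, suppose $\mathcal{I}\subseteq \hat{\mathcal{I}}_k$. Then every column of ${\bf X}_{\mathcal{I}}$ is also a column of ${\bf X}_{\hat{\mathcal{I}}_k}$, so $\mathrm{span}({\bf X}_{\mathcal{I}})\subseteq \mathrm{col}({\bf X}_{\hat{\mathcal{I}}_k})$. Since ${\bf P}_{\hat{\mathcal{I}}_k}$ is the orthogonal projector onto $\mathrm{col}({\bf X}_{\hat{\mathcal{I}}_k})$, any ${\bf z}\in\mathrm{span}({\bf X}_{\mathcal{I}})$ satisfies ${\bf P}_{\hat{\mathcal{I}}_k}{\bf z}={\bf z}$, hence $({\bf I}_n-{\bf P}_{\hat{\mathcal{I}}_k}){\bf z}={\bf 0}_n$. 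This is essentially one line.

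For the nontrivial direction, I would argue by contradiction. Write ${\bf z}={\bf X}_{\mathcal{I}}{\bf a}$ for some coefficient vector ${\bf a}\in\mathbb{R}^{|\mathcal{I}|}$ whose entries are all nonzero (in the intended usage ${\bf z}={\bf X}\boldsymbol{\beta}$ so ${\bf a}=\boldsymbol{\beta}_{\mathcal{I}}$, which has no zero entries by the definition of the support $\mathcal{I}=\mathrm{supp}(\boldsymbol{\beta})$). Assume for contradiction that $({\bf I}_n-{\bf P}_{\hat{\mathcal{I}}_k}){\bf z}={\bf 0}_n$, i.e., ${\bf z}\in \mathrm{col}({\bf X}_{\hat{\mathcal{I}}_k})$. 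Then ${\bf z}={\bf X}_{\hat{\mathcal{I}}_k}{\bf b}$ for some ${\bf b}$, and equating the two expressions gives
\begin{equation*}
{\bf X}_{\mathcal{I}}{\bf a}-{\bf X}_{\hat{\mathcal{I}}_k}{\bf b}={\bf 0}_n.
\end{equation*}
Collecting terms, this is a linear combination of the columns of ${\bf X}_{\mathcal{I}\cup\hat{\mathcal{I}}_k}$ equal to zero. Because the hypothesis $\mathcal{I}\not\subseteq\hat{\mathcal{I}}_k$ guarantees the existence of at least one $j\in\mathcal{I}\setminus\hat{\mathcal{I}}_k$, the coefficient on column $j$ equals $a_j\neq 0$; the relation is nontrivial.

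To rule this out, I would invoke linear independence of the columns of ${\bf X}_{\mathcal{I}\cup\hat{\mathcal{I}}_k}$. Two natural paths: (i) use Lemma~\ref{eigenvalue}(b) with the RIC $\delta_{|\mathcal{I}\cup\hat{\mathcal{I}}_k|}<1$, which forces ${\bf X}_{\mathcal{I}\cup\hat{\mathcal{I}}_k}^T{\bf X}_{\mathcal{I}\cup\hat{\mathcal{I}}_k}$ to be invertible and hence the columns to be linearly independent; or (ii) appeal directly to the rank assumption built into the TF-IGP algorithm together with an RIP hypothesis of order $k_0+k_{\max}$, since $|\mathcal{I}\cup\hat{\mathcal{I}}_k|\leq k_0+k\leq k_0+k_{\max}$. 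Either way, the only solution of the displayed equation is the trivial one, contradicting $a_j\neq 0$; thus $({\bf I}_n-{\bf P}_{\hat{\mathcal{I}}_k}){\bf z}\neq {\bf 0}_n$.

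The main obstacle is really a matter of carefully pinning down the implicit hypothesis: the claim is false for a general vector in $\mathrm{span}({\bf X}_{\mathcal{I}})$ (take ${\bf z}$ to be a single column already picked into $\hat{\mathcal{I}}_k$), so one must use that ${\bf z}$ has a representation whose coefficient is nonzero on some index outside $\hat{\mathcal{I}}_k$, and must have the appropriate RIC-based linear-independence condition available. Once that is in place the argument is short; the authors flag the lemma as ``intuitive and simple to prove'' precisely because, under the operating assumptions (RIP order $k_0+k_{\max}$, ${\bf z}={\bf X}\boldsymbol{\beta}$), both parts reduce to the statement that the orthogonal projector onto a subspace annihilates a vector iff the vector lies in that subspace.
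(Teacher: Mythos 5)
Your proof is correct, and it is worth noting that the paper itself supplies no argument for this lemma at all: it is stated as ``intuitive and simple to prove'' with a pointer to an external reference, so there is no in-paper proof to compare against. Your forward direction is exactly the one-line projector argument one would expect. Your converse direction is also sound, and your most valuable contribution is the observation that the lemma is \emph{false as literally stated}: for a generic ${\bf z}\in span({\bf X}_{\mathcal{I}})$ (e.g.\ ${\bf z}={\bf X}_j$ with $j\in\mathcal{I}\cap\hat{\mathcal{I}}_k$) the conclusion fails even though $\mathcal{I}\not\subseteq\hat{\mathcal{I}}_k$. The intended reading, which you correctly reconstruct, is ${\bf z}={\bf X}_{\mathcal{I}}\boldsymbol{\beta}_{\mathcal{I}}$ with every entry of $\boldsymbol{\beta}_{\mathcal{I}}$ nonzero, plus linear independence of the columns of ${\bf X}_{\mathcal{I}\cup\hat{\mathcal{I}}_k}$; your contradiction argument (a nontrivial null vector of ${\bf X}_{\mathcal{I}\cup\hat{\mathcal{I}}_k}$ with coefficient $a_j\neq 0$ on some $j\in\mathcal{I}\setminus\hat{\mathcal{I}}_k$) then closes the gap cleanly. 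The one caveat you should make explicit rather than leave as ``either path (i) or (ii)'': the TF-IGP stopping rule only guarantees that ${\bf X}_{\hat{\mathcal{I}}_k}$ itself is full rank, which does \emph{not} imply independence of the columns of the larger set $\mathcal{I}\cup\hat{\mathcal{I}}_k$, so path (ii) alone is insufficient and one genuinely needs an RIP (or spark) hypothesis of order $|\mathcal{I}\cup\hat{\mathcal{I}}_k|\leq k_0+k_{max}$, which is stronger than anything the paper's theorems explicitly assume. Since the lemma is used in the paper only for the qualitative high-SNR discussion (the quantitative results in Theorems 2--4 are re-derived via explicit RIC bounds with $\delta_{k_{sup}}$), this added hypothesis is harmless, but your write-up is more careful than the paper on this point.
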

Assume that there exist a $k_*\in \{k_0,\dotsc,k_{max}\}$ such that $\mathcal{I} \subseteq \hat{\mathcal{I}}_{k_*} $ for the first time, i.e., $k_*=\min\{k: \mathcal{I}\subseteq \hat{\mathcal{I}}_{k}\}$. The signal component in ${\bf y}$, i.e.,  ${\bf X}\boldsymbol{\beta}={\bf X}_{\mathcal{I}}\boldsymbol{\beta}_{\mathcal{I}}\in span({\bf X}_{\mathcal{I}})$. Hence, by Lemma \ref{Projection} and the monotonicity of $\hat{\mathcal{I}}_k$, we have $({\bf I}_n-{\bf P}_{\hat{\mathcal{I}}_{k}}){\bf X\boldsymbol{\beta}}\neq {\bf 0}_n$ for $k<k_*$ and $({\bf I}_n-{\bf P}_{\hat{\mathcal{I}}_{k}}){\bf X\boldsymbol{\beta}}= {\bf 0}_n$ for $k\geq k_*$. Thus, ${\bf r}^{(k)}=({\bf I}_n-{\bf P}_{\hat{\mathcal{I}}_{k}}){\bf y}=({\bf I}_n-{\bf P}_{\hat{\mathcal{I}}_{k}}){\bf X\boldsymbol{\beta}}+({\bf I}_n-{\bf P}_{\hat{\mathcal{I}}_{k}}){\bf w}$ for $k<k_*$, whereas, ${\bf r}^{(k)}=({\bf I}_n-{\bf P}_{\hat{\mathcal{I}}_{k}}){\bf w}$ for $k\geq k_*$.
 We next consider three regimes in the evolution of $RR(k)$. \\
{\bf Case 1:-)}. {\bf When $k<k_*$, i.e., $\hat{\mathcal{I}}_k \not \subset \mathcal{I}$:-} Both numerator $\|{\bf r}^{(k)}\|_2$ and denominator $\|{\bf r}^{(k-1)}\|_2$ in $RR(k)$ contain contributions from signal ${\bf X\boldsymbol{\beta}}$ and noise ${\bf w}$. Hence, as $\|{\bf w}\|_2\rightarrow 0$, $RR(k)\rightarrow \frac{\|({\bf I}_n-{\bf P}_{\hat{\mathcal{I}}_{k}}){\bf X\boldsymbol{\beta}} \|_2}{\|({\bf I}_n-{\bf P}_{\hat{\mathcal{I}}_{k-1}}){\bf X\boldsymbol{\beta}}\|_2} $, which is  strictly bounded away from zero.  \\
{\bf Case 2)}.{\bf When $k=k_*$, i.e., $\mathcal{I}\subseteq \hat{\mathcal{I}}_k$ for the first time:-} Numerator $\|{\bf r}^{(k_*)}\|_2$ in $RR(k_*)$ has contribution only from the noise ${\bf w}$, whereas, denominator $\|{\bf r}^{(k_*-1)}\|_2$ has contributions from both noise ${\bf w}$ and signal ${\bf X\boldsymbol{\beta}}$. Hence, $RR(k_*)= \frac{\|({\bf I}_n-{\bf P}_{\hat{\mathcal{I}}_{k_*}}){\bf w} \|_2}{\|({\bf I}_n-{\bf P}_{\hat{\mathcal{I}}_{k_*-1}})({\bf X\boldsymbol{\beta}}+{\bf w})\|_2}\rightarrow 0 $ as $\|{\bf w}\|_2\rightarrow 0$.  \\
{\bf Case 3):-} {\bf When $k>k_*$, i.e., both $ \mathcal{I} \subset \hat{\mathcal{I}}_{k} $ and $ \mathcal{I} \subset \hat{\mathcal{I}}_{k-1} $ :- } Both numerator and denominator have only noise components, i.e., $RR(k)=\frac{\|({\bf I}_n-{\bf P}_{\hat{\mathcal{I}}_{k}}){\bf w} \|_2}{\|({\bf I}_n-{\bf P}_{\hat{\mathcal{I}}_{k-1}}){\bf w}\|_2} $. This ratio is independent of the scaling of ${\bf w}$. Further, TF-IGP stops iterations before ${\bf r}^{(k)}={\bf 0}_n$. Hence, as $\|{\bf w}\|_2 \rightarrow 0$,  $RR(k)$ converges to a value in $0<RR(k)\leq 1$ strictly bounded away from zero. 

To summarize, at high or very high SNR, the minimal value of $RR(k)$ for $1\leq k\leq k_{max}$ will be attained at $k_*$, i.e., $\hat{k}_{TF}=\underset{1\leq k\leq k_{max}}{\min}RR(k)$ in TABLE \ref{tab:tf-omp} will be equal to $k_*$ at high SNR.  These observations are also numerically illustrated in Fig.1  where a typical realization of the  quantity $RR(k)$ is plotted for OMP algorithm. In all the four plots, the indexes selected by OMP are in the order $\{1,2,3\}$, i.e., $k_*=3$ and $\hat{\mathcal{I}}_3=\mathcal{I}$. In the top figure, where DR($\boldsymbol{\beta}$)=1, the minimum of $RR(k)$ is  attained at $k_*=3$ for both 10dB and 30dB SNR. In the bottom figure where DR($\boldsymbol{\beta}$) is higher, i.e., DR($\boldsymbol{\beta}$)=4,  the minimum of $RR(k)$ at 10dB SNR is attained at $k=1$, whereas, the minimum of $RR(k)$ is attained at $k=3$ for SNR=30dB. Hence, when SNR=10dB, $\hat{k}_{TF}$ underestimates $k_*$, whereas, $\hat{k}_{TF}=k_*$ at 30dB SNR. This validates the observations made above in the sense that $k_*$ is accurately estimated at high SNR. Fig.\ref{fig:tf-omp}  also point out that for signals with high $DR(\boldsymbol{\beta})$,  $\hat{k}_{TF}$ can underestimate $k_*$ even when the SNR is moderately high.

\begin{figure}[htb]
\includegraphics[width=\columnwidth]{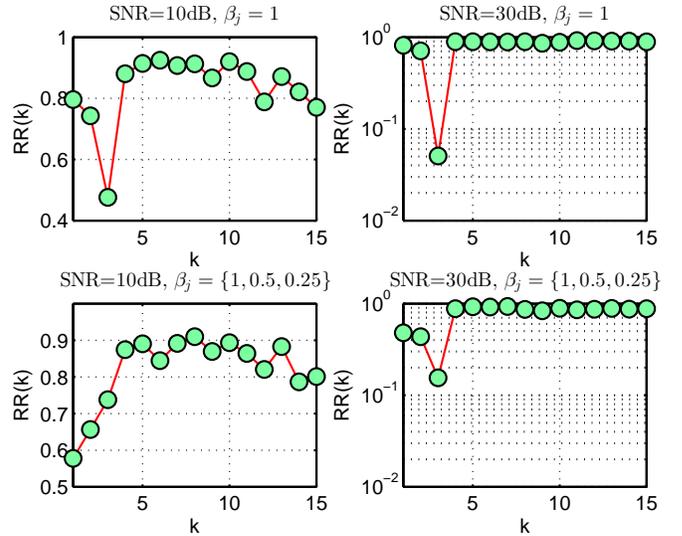}
\caption{ Evolution of $RR(k)$ for a  $32 \times 64$ matrix ${\bf X}=[{\bf I}_n,{\bf H}_n]$ described in Section \rom{7}. $\mathcal{I}=\{1,2,3\}$. (Top) $\boldsymbol{\beta}_j=1,\forall j\in \mathcal{I}$. (Bottom) $\boldsymbol{\beta}_j=(1/2)^{j-1}$ for $j=1,2$ and $3$. }
\label{fig:tf-omp}
\end{figure}

\begin{remark}
Note that the TF-IGP is designed to  estimate $k_*=\min\{k:\mathcal{I}\subseteq\hat{\mathcal{I}}_k\}$ from the sequence $\{RR(k)\}_{k=1}^{k_{max}}$.  $k_*$ will correspond to $k_0$ iff  the first $k_0$ iterations are accurate, i.e., $\hat{\mathcal{I}}_{k_0}=Alg({\bf y},{\bf X},k_0)=\mathcal{I}$. Indeed, the support estimate $\hat{\mathcal{I}}_{k_0}=Alg({\bf y},{\bf X},k_0)$ equals $\mathcal{I}$ for algorithms like OMP, OLS etc.   at high SNR under certain conditions on matrix ${\bf X}$ (see Section \rom{6}.A). In such a situation the objective of TF-IGP matches the objective of estimating $\mathcal{I}$.      When $k_*> k_0$, then $\mathcal{I}_{k_*}\supset \mathcal{I}$. Hence, if TF-IGP achieve it's stated objective of estimating $k_*$ accurately, then it will return  not the true support $\mathcal{I}$ but a superset  $\hat{\mathcal{I}}_{k_*}\supset \mathcal{I}$. Here, TF-GP includes $|\hat{\mathcal{I}}_{k_*}/ \mathcal{I}|$ number of insignificant variables in its' estimate. When $k_*>k_0$, $Alg({\bf y},{\bf X},k_0)$, i.e., $Alg_{k_0}$ itself output an erroneous support estimate, i.e., $\hat{\mathcal{I}}_{k_0}=Alg({\bf y},{\bf X},k_0)\neq \mathcal{I}$. In fact, $Alg_{k_0}$  misses  $|\mathcal{I}/ \hat{\mathcal{I}}_{k_0}|$ significant indices in $\mathcal{I}$. $\hat{\mathcal{I}}_{k_0}$  also include  $|\hat{\mathcal{I}}_{k_0}/ \mathcal{I}|$ insignificant variables. Due to this tendency of superset selection, TF-IGP can outperform $Alg_{k_0}$ in certain SNR sparsity regimes (see Section \rom{6}.C). 
\end{remark}
\subsection{Properties of RR(k) for $k>k_*$.}
{ The quantity $RR(k)$ for $k>k_*$ exhibit many interesting properties which are pivotal to the understanding of TF-IGP. As aforementioned,  ${\bf r}^{(k)}=({\bf I}_n-{\bf P}_{\hat{\mathcal{I}}_k}){\bf y}=({\bf I}_n-{\bf P}_{\hat{\mathcal{I}}_k}){\bf w}$ for $k\geq k_*$. Hence, for $k>k_*$, $RR(k)=\frac{\|({\bf I}_n-{\bf P}_{\hat{\mathcal{I}}_k}){\bf w}\|_2}{\|({\bf I}_n-{\bf P}_{\hat{\mathcal{I}}_{k-1}}){\bf w}\|_2}$. The important properties of $RR(k)$ are summarized in Theorem \ref{thm:Beta}.
\begin{thm} \label{thm:Beta}
Let the noise ${\bf w}\sim \mathcal{N}({\bf 0}_n,\sigma^2{\bf I}_n)$. RR(k) for $k>k_*$ satisfy the following properties. \\
B1). Let $F_{a,b}(x)$ be the cumulative distribution function (CDF) of a $\mathbb{B}(a,b)$ R.V and $F^{-1}_{a,b}(x)$ be its' inverse CDF. Then, $\forall \alpha>0$,  $\Gamma_{RRT}^{\alpha}=\underset{k=1,\dotsc,k_{max}}{\min}\sqrt{F_{\frac{n-k}{2},0.5}^{-1}\left(\dfrac{\alpha}{k_{max}(p-k+1)}\right)}>0$ satisfy $\mathbb{P}(\underset{k>k_*}{\min}RR(k)<\Gamma_{RRT}^{\alpha})\leq \alpha$ for all matrices ${\bf X} \in \mathbb{R}^{n\times p}$, all $\sigma^2>0$ and all algorithms $Alg$ in IGP class.\\
B2). $\underset{k>k_*}{\min}RR(k)$ for $k>k_*$ is bounded away from zero in probability, i.e., for every $\epsilon>0$, $\exists \delta>0$ such that $\mathbb{P}(\underset{k>k_*}{\min}RR(k)>\delta)\geq 1-\epsilon$.  
\end{thm}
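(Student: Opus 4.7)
The plan is to establish B1 via a union bound across $k$ combined with a per-$k$ bound expressed through the Beta CDF; B2 will then follow as a direct corollary.

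I would first observe that for $k>k_*$, both $\hat{\mathcal{I}}_{k-1}$ and $\hat{\mathcal{I}}_k$ contain $\mathcal{I}$, so $({\bf I}_n-{\bf P}_{\hat{\mathcal{I}}_j}){\bf X}\boldsymbol{\beta}={\bf 0}_n$ for $j\in\{k-1,k\}$ and $RR(k)^2$ reduces to the pure-noise ratio $\|({\bf I}_n-{\bf P}_{\hat{\mathcal{I}}_k}){\bf w}\|_2^2/\|({\bf I}_n-{\bf P}_{\hat{\mathcal{I}}_{k-1}}){\bf w}\|_2^2$. The key auxiliary identity I would prove is that, for any fixed $S\subset[p]$ with $|S|=k-1$ and any $t\notin S$, letting ${\bf u}_{S,t}$ be the unit vector spanning $col({\bf X}_{S\cup\{t\}})\cap col({\bf X}_S)^\perp$, the Pythagorean decomposition $\|({\bf I}_n-{\bf P}_S){\bf w}\|_2^2=\|({\bf I}_n-{\bf P}_{S\cup\{t\}}){\bf w}\|_2^2+({\bf u}_{S,t}^T{\bf w})^2$ together with the independence of $({\bf u}_{S,t}^T{\bf w})^2/\sigma^2\sim\chi_1^2$ and $\|({\bf I}_n-{\bf P}_{S\cup\{t\}}){\bf w}\|_2^2/\sigma^2\sim\chi_{n-k}^2$ (which follows since ${\bf u}_{S,t}\in col({\bf X}_{S\cup\{t\}})$ is orthogonal to the range of ${\bf I}_n-{\bf P}_{S\cup\{t\}}$) yields $Y_{S,t}^2:=\|({\bf I}_n-{\bf P}_{S\cup\{t\}}){\bf w}\|_2^2/\|({\bf I}_n-{\bf P}_S){\bf w}\|_2^2\sim\mathbb{B}((n-k)/2,1/2)$, a marginal that is free of $\sigma^2$, ${\bf X}$, and $\boldsymbol{\beta}$.

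Next, I would upper-bound $\mathbb{P}(RR(k)^2<\gamma,\,k>k_*)$ by $(p-k+1)F_{(n-k)/2,1/2}(\gamma)$. Since any IGP selects exactly one $t_k\in[p]\setminus\hat{\mathcal{I}}_{k-1}$ at each step, $RR(k)^2=Y_{\hat{\mathcal{I}}_{k-1},t_k}^2\ge\min_{t\notin\hat{\mathcal{I}}_{k-1}}Y_{\hat{\mathcal{I}}_{k-1},t}^2$, so the event $\{RR(k)^2<\gamma\}$ implies that at least one of the $p-k+1$ admissible ratios $\{Y_{\hat{\mathcal{I}}_{k-1},t}^2\}_{t\notin\hat{\mathcal{I}}_{k-1}}$ lies below $\gamma$. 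Conditioning on $\hat{\mathcal{I}}_{k-1}=S$, union-bounding over the $p-k+1$ choices of $t$, and invoking the Beta marginal for each $(S,t)$ gives the per-$k$ bound; substituting $\gamma=\gamma_k:=F^{-1}_{(n-k)/2,1/2}(\alpha/(k_{max}(p-k+1)))$ then yields $\mathbb{P}(RR(k)^2<\gamma_k,\,k>k_*)\le\alpha/k_{max}$. Since $(\Gamma_{RRT}^{\alpha})^2\le\gamma_k$ for each $k$ by the definition of $\Gamma_{RRT}^{\alpha}$ as a minimum, a final union bound over $k\in\{1,\ldots,k_{max}\}$ gives B1.

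The hardest step is the conditioning argument: the event $\{\hat{\mathcal{I}}_{k-1}=S\}$ is a nontrivial function of ${\bf w}$, and the ratios $Y_{S,t}^2$ depend on $({\bf I}_n-{\bf P}_S){\bf w}$, so independence is not automatic. I would handle this by decomposing the sample space into the disjoint events $\{\hat{\mathcal{I}}_{k-1}=S\}$ indexed by supports $S$ of size $k-1$ containing $\mathcal{I}$, arguing that for each such $S$ the ratios $Y_{S,t}^2$ remain stochastically bounded by the unconditional $\mathbb{B}((n-k)/2,1/2)$ law, and summing the $\mathbb{P}(\hat{\mathcal{I}}_{k-1}=S)$-weighted contributions to recover the clean bound $(p-k+1)F_{(n-k)/2,1/2}(\gamma)$. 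With B1 established, B2 is immediate: for any $\epsilon>0$ set $\delta:=\Gamma_{RRT}^{\epsilon}$, which is strictly positive because $F^{-1}_{(n-k)/2,1/2}$ is strictly positive at strictly positive arguments; B1 then gives $\mathbb{P}(\min_{k>k_*}RR(k)\ge\delta)\ge1-\epsilon$.
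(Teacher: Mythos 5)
Your proposal is correct and follows essentially the same route as the paper's Appendix A: reduce $RR(k)$ for $k>k_*$ to a noise-only projection-residual ratio, use the $\mathbb{B}(\frac{n-k}{2},\frac{1}{2})$ law of such ratios for fixed nested projections, lower-bound the data-dependent $RR(k)$ by the minimum over the $p-k+1$ admissible candidate indices, and chain a union bound over candidates, the law of total probability over $\hat{\mathcal{I}}_{k-1}$, and a union bound over $k$, with B2 following from $\Gamma^{\alpha}_{RRT}>0$. The only differences are cosmetic: you derive the Beta marginal from the Pythagorean/chi-square decomposition rather than citing it, and you explicitly flag the conditioning-on-$\hat{\mathcal{I}}_{k-1}$ subtlety that the paper handles in the same (asserted) way.
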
 
\begin{proof} Please see APPENDIX A.
\end{proof}
 Note that $\Gamma_{RRT}^{\alpha}$ in B1) of Theorem 1 is independent of the particular matrix ${\bf X}$, the operating SNR and the particular algorithm `Alg' in the IGP class. In other words, it is a function of matrix dimensions $n$, $p$ and  TF-IGP parameter $k_{max}$. 
 \begin{corollary} \label{corr:beta} {For a particular matrix ${\bf X}$ and Algorithm `Alg', Theorem 1 essentially implies that there exists a deterministic quantity $\Gamma_{Alg}({\bf X})>0$ which may depend on ${\bf X}$ and `Alg' such that  $\underset{k>k_*}{\min}RR(k)\geq \Gamma_{Alg}({\bf X})>0$ with a very high probability irrespective of the signal $\boldsymbol{\beta}$ and SNR.}
 \end{corollary}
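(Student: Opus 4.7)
The plan is to derive Corollary~\ref{corr:beta} as a direct specialization of Theorem~\ref{thm:Beta}, B1. I would fix any small $\alpha>0$---for instance $\alpha=10^{-2}$, or $\alpha=1/p$ if one wants a bound shrinking with the problem dimension---and then invoke B1 to obtain the deterministic constant
\[
\Gamma_{RRT}^{\alpha}=\min_{k=1,\dotsc,k_{max}}\sqrt{F_{\frac{n-k}{2},\frac{1}{2}}^{-1}\!\left(\frac{\alpha}{k_{max}(p-k+1)}\right)}>0,
\]
which depends only on $(n,p,k_{max},\alpha)$. Setting $\Gamma_{Alg}(\mathbf{X}):=\Gamma_{RRT}^{\alpha}$ immediately yields $\mathbb{P}(\min_{k>k_*}RR(k)\geq\Gamma_{Alg}(\mathbf{X}))\geq 1-\alpha$, which delivers the corollary once ``very high probability'' is pinned down as probability at least $1-\alpha$. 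The corollary only asks that \emph{some} positive deterministic lower bound exist (possibly depending on $\mathbf{X}$ and Alg), so using the stronger constant that is actually uniform in both is admissible.

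Next I would justify the clause ``irrespective of the signal $\boldsymbol{\beta}$ and SNR''. Whenever $k>k_*$, the monotonicity of IGP forces $\mathcal{I}\subseteq\hat{\mathcal{I}}_{k-1}\subseteq\hat{\mathcal{I}}_{k}$, so Lemma~\ref{Projection} implies that $\mathbf{X}\boldsymbol{\beta}\in\mathrm{span}(\mathbf{X}_{\mathcal{I}})$ is annihilated by both projections appearing in $RR(k)$. Consequently
\[
RR(k)=\frac{\|(\mathbf{I}_n-\mathbf{P}_{\hat{\mathcal{I}}_{k}})\mathbf{w}\|_2}{\|(\mathbf{I}_n-\mathbf{P}_{\hat{\mathcal{I}}_{k-1}})\mathbf{w}\|_2},
\]
a ratio of norms of noise projections that is scale-invariant in $\mathbf{w}$. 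Scale invariance removes all $\sigma^{2}$-dependence (hence all SNR-dependence), while the explicit cancellation of $\mathbf{X}\boldsymbol{\beta}$ removes the dependence on the signal. This is precisely why Theorem~\ref{thm:Beta} B1 holds uniformly over all $\sigma^{2}>0$ and all $\boldsymbol{\beta}$, and the same uniformity descends to $\Gamma_{Alg}(\mathbf{X})$.

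The main ``obstacle'', insofar as there is one, is only notational: $k_*$ is itself random, so $\min_{k>k_*}RR(k)$ is a minimum over a random index set. This is not a genuine difficulty because Theorem~\ref{thm:Beta} B1 is already phrased in terms of $\min_{k>k_*}$ and establishes its failure-probability bound via a union bound over all candidate $(k,\hat{\mathcal{I}}_{k})$ pairs, so no additional argument is required. A natural direction for strengthening would be to produce sharper, matrix- and algorithm-specific constants $\Gamma_{Alg}(\mathbf{X})$ by combining the RIP bounds of Lemma~\ref{eigenvalue} with tail estimates on the individual ratios $\|(\mathbf{I}_n-\mathbf{P}_{\hat{\mathcal{I}}_{k}})\mathbf{w}\|_2/\|(\mathbf{I}_n-\mathbf{P}_{\hat{\mathcal{I}}_{k-1}})\mathbf{w}\|_2$, but for the existence claim of the corollary the uniform constant $\Gamma_{RRT}^{\alpha}$ already suffices.
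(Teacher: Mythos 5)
Your proposal is correct and matches the paper's intent: the corollary is stated as an immediate consequence of Theorem~\ref{thm:Beta}~B1, obtained exactly as you do by fixing a small $\alpha$ and taking the deterministic constant $\Gamma_{RRT}^{\alpha}>0$ (which the paper itself later identifies as a ready-made universal choice for $\Gamma_{Alg}^{lb}({\bf X})$) as a valid instance of $\Gamma_{Alg}({\bf X})$. Your justification of signal- and SNR-independence via the cancellation of ${\bf X}\boldsymbol{\beta}$ for $k>k_*$ and the scale invariance of the noise-projection ratio is the same reasoning the paper uses in Appendix~A.
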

\subsection{Superset and Exact  support recovery using TF-IGP.}
 In this section, we state an important theorem which  analytically establish the potential of TF-IGP to accurately estimate $k_*=\min\{k:\mathcal{I}\subseteq\hat{\mathcal{I}_k}\}$.  Theorem 1 is stated for the case of $l_2$-bounded noise, i.e., $\|{\bf w}\|_2<\epsilon_2$. Since, Gaussian vector ${\bf w}\sim \mathcal{N}({\bf 0}_n,\sigma^2{\bf I}_n)$ satisfy $\mathbb{P}\left(\|{\bf w}\|_2>\sigma\sqrt{n+2\sqrt{n\log(n)}}\right)\leq {1}/{n}$, it is also  bounded with a very high probability.  We assume that Algorithm satisfy the following conditions.  \\
{\bf Assumption 1:-} There exists  $k_{sup}\in \{k_0,\dotsc,k_{max}\}$ and $ \epsilon_{sup}>0$ such that $k_*\leq k_{sup}$, whenever $\epsilon_2<\epsilon_{sup}$. In words, it is guaranteed  that  running $Alg$   upto the level $k_{sup}$   result in a superset of $\mathcal{I}$ whenever $\epsilon_2<\epsilon_{sup}$. \\
{\bf Assumption 2:-} There exists an  $ \epsilon_{exact}>0$ such that $k_*= k_0$, i.e., $\hat{\mathcal{I}}_{k_0}=Alg({\bf y},{\bf X},k_0)=\mathcal{I}$ whenever $\epsilon_2<\epsilon_{exact}$. In words, it is guaranteed that running $Alg$  precisely $k_0$ iterations, i.e., $Alg_{k_0}$  will recover the true support whenever $\epsilon_2<\epsilon_{exact}$. 

Assumption 1 requires \textit{Alg} to find all the indices from the true support $\mathcal{I}$ within the first $k_{sup}>k_0$ iterations. In other words, Assumption 1 allows  \textit{Alg} to make some errors in the first $k_0$ iterations. Assumption 2 is stronger than Assumption 1 in the sense that  it requires all the first $k_0$ iterations of \textit{Alg}  to be accurate. Assumption 1 guarantees only a superset selection, whereas, Assumption 2 guarantees accurate support recovery for the underlying \textit{Alg}. As we will report later, the RIC conditions imposed on matrix ${\bf X}$ to  satisfy Assumption 2 is much stringent than the conditions to satisfy Assumption 1. 
\begin{thm}\label{thm:tf}
TF-IGP support estimate $\hat{\mathcal{I}}_{\hat{k}_{TF}}$, where $\hat{k}_{TF}=\underset{k}{\arg\min}\ RR(k)$ satisfies  $\mathcal{I}\subseteq \hat{\mathcal{I}}_{\hat{k}_{TF}} $ and $|\hat{\mathcal{I}}_{\hat{k}_{TF}}|\leq k_{sup}$ if algorithm \textit{Alg} satisfy Assumption 1 and $\epsilon_2<\min(\epsilon_{sup},\epsilon_{sig},\epsilon_{\bf X})$. 
Here, $\epsilon_{sig}={\left(\sqrt{1-\delta_{k_{sup}}}\boldsymbol{\beta}_{min}\right)}/{\left(1+\dfrac{\sqrt{1+\delta_{k_{sup}}}}{\sqrt{1-\delta_{k_{sup}}}}\left(2 +  \dfrac{\boldsymbol{\beta}_{max}}{\boldsymbol{\beta}_{min}}\right)\right) }$ is an algorithm independent term depending purely on the signal $\boldsymbol{\beta}$ and  $\epsilon_{\bf X}=\dfrac{\sqrt{1-\delta_{k_{sup}}}\boldsymbol{\beta}_{min}\Gamma_{Alg}({\bf X})}{1+\Gamma_{Alg}({\bf X})}$. $0<\Gamma_{Alg}({\bf X})\leq 1$ in Corollary \ref{corr:beta} is an algorithm dependent term.
\end{thm}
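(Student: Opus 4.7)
The plan is to show that under the stated hypotheses $\hat{k}_{TF}$ coincides with $k_\ast := \min\{k : \mathcal{I} \subseteq \hat{\mathcal{I}}_k\}$, which by Assumption~1 (invoked with $\epsilon_2 < \epsilon_{sup}$) satisfies $k_\ast \leq k_{sup}$. Combined with the monotonicity of IGP this immediately yields both $\mathcal{I}\subseteq\hat{\mathcal{I}}_{\hat{k}_{TF}}$ and $|\hat{\mathcal{I}}_{\hat{k}_{TF}}|=k_\ast\leq k_{sup}$. I would establish $\hat{k}_{TF}=k_\ast$ by proving the strict inequality $RR(k_\ast) < RR(k)$ for every $k\in\{1,\dots,k_{max}\}\setminus\{k_\ast\}$, treating the cases $k>k_\ast$ and $k<k_\ast$ separately and reading off the three threshold conditions $\epsilon_{sup},\epsilon_{sig},\epsilon_{\bf X}$ from the corresponding sufficient conditions on $\epsilon_2$.

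I start by extracting a clean upper bound on $RR(k_\ast)$. Because $\mathcal{I}\subseteq\hat{\mathcal{I}}_{k_\ast}$, Lemma~2 annihilates the signal component, so $\|{\bf r}^{(k_\ast)}\|_2=\|({\bf I}_n-{\bf P}_{\hat{\mathcal{I}}_{k_\ast}}){\bf w}\|_2\leq\epsilon_2$. Minimality of $k_\ast$ combined with the one-new-index-per-step property of IGP forces $\mathcal{T}_{k_\ast-1}:=\mathcal{I}\setminus\hat{\mathcal{I}}_{k_\ast-1}$ to be a singleton whose lone signal entry has magnitude at least $\boldsymbol{\beta}_{min}$; since $\hat{\mathcal{I}}_{k_\ast-1}\cup\mathcal{T}_{k_\ast-1}\subseteq\hat{\mathcal{I}}_{k_\ast}$ has cardinality $\leq k_{sup}$, Lemma~1(e) gives $\|({\bf I}_n-{\bf P}_{\hat{\mathcal{I}}_{k_\ast-1}}){\bf X}\boldsymbol{\beta}\|_2\geq\sqrt{1-\delta_{k_{sup}}}\boldsymbol{\beta}_{min}$, and the triangle inequality then produces $\|{\bf r}^{(k_\ast-1)}\|_2\geq\sqrt{1-\delta_{k_{sup}}}\boldsymbol{\beta}_{min}-\epsilon_2$. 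Altogether $RR(k_\ast)\leq\epsilon_2/(\sqrt{1-\delta_{k_{sup}}}\boldsymbol{\beta}_{min}-\epsilon_2)$. For $k>k_\ast$, Corollary~1 supplies $\min_{k>k_\ast}RR(k)\geq\Gamma_{Alg}({\bf X})$, and requiring the upper bound on $RR(k_\ast)$ to lie below $\Gamma_{Alg}({\bf X})$ and solving the resulting linear inequality for $\epsilon_2$ reproduces exactly the condition $\epsilon_2<\epsilon_{\bf X}$.

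The remaining regime $k<k_\ast$ is the technical heart of the argument. With $\mathcal{T}_k=\mathcal{I}\setminus\hat{\mathcal{I}}_k$ non-empty and $\hat{\mathcal{I}}_k\cup\mathcal{I}\subseteq\hat{\mathcal{I}}_{k_\ast}$ of size $\leq k_{sup}$, Lemma~1(e) gives $\|{\bf r}^{(k)}\|_2\geq\sqrt{1-\delta_{k_{sup}}}\|\boldsymbol{\beta}_{\mathcal{T}_k}\|_2-\epsilon_2$. For the denominator I would begin with $\|{\bf r}^{(k-1)}\|_2\leq\sqrt{1+\delta_{k_{sup}}}\|\boldsymbol{\beta}_{\mathcal{T}_{k-1}}\|_2+\epsilon_2$ and---the crucial step---eliminate the apparent $k_0$-dependence by exploiting that $\mathcal{T}_{k-1}$ differs from $\mathcal{T}_k$ by at most one element of $\mathcal{I}$, which yields the incremental estimate $\|\boldsymbol{\beta}_{\mathcal{T}_{k-1}}\|_2\leq\|\boldsymbol{\beta}_{\mathcal{T}_k}\|_2+\boldsymbol{\beta}_{max}$. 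The resulting lower bound on $RR(k)$ is monotone increasing in $s:=\|\boldsymbol{\beta}_{\mathcal{T}_k}\|_2\geq\boldsymbol{\beta}_{min}$, so substituting the worst case $s=\boldsymbol{\beta}_{min}$ and enforcing $RR(k)>RR(k_\ast)$ gives, after cross-multiplying and a mild loosening using $\sqrt{1-\delta_{k_{sup}}}\leq\sqrt{1+\delta_{k_{sup}}}$, the closed-form sufficient condition $\epsilon_2<\epsilon_{sig}$. Assembling all three pieces under $\epsilon_2<\min(\epsilon_{sup},\epsilon_{sig},\epsilon_{\bf X})$ forces $\hat{k}_{TF}=k_\ast\in\{k_0,\dots,k_{sup}\}$ and proves the theorem. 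The main hurdle is precisely this $k<k_\ast$ case: a naive bound $\|\boldsymbol{\beta}\|_2\leq\sqrt{k_0}\boldsymbol{\beta}_{max}$ would inject $k_0$ into the sufficient condition, so the $k_0$-free form of $\epsilon_{sig}$ hinges entirely on the one-index-per-step incremental estimate just described.
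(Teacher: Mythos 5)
Your proposal is correct and follows essentially the same route as the paper: reduce to showing $\hat{k}_{TF}=k_*$ with $k_*\leq k_{sup}$ from Assumption 1, bound $RR(k_*)$ above by $\epsilon_2/(\sqrt{1-\delta_{k_{sup}}}\boldsymbol{\beta}_{min}-\epsilon_2)$, compare against $\Gamma_{Alg}({\bf X})$ for $k>k_*$ to get $\epsilon_{\bf X}$, and for $k<k_*$ use Lemma 1(e) together with the one-index-per-step estimate $\|\boldsymbol{\beta}_{u^{k-1}}\|_2\leq\|\boldsymbol{\beta}_{u^{k}}\|_2+\boldsymbol{\beta}_{max}$ and monotonicity in $\|\boldsymbol{\beta}_{u^k}\|_2$ to get $\epsilon_{sig}$. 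The incremental estimate you flag as the crucial step is exactly the paper's Eqn.~(\ref{temp_bound}) in Appendix C, and your "mild loosening" reproduces the paper's exact form of $\epsilon_{sig}$.
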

\begin{proof} Please see APPENDIX B.
\end{proof}
Corollary \ref{corr:tf} follows directly from Theorem \ref{thm:tf} by replacing $k_{sup}$ with $k_0$ and Assumption 1  with Assumption 2. 
\begin{corollary}\label{corr:tf} TF-IGP  can recover the true support $ \mathcal{I}$  if algorithm `Alg' satisfy Assumption 2 and $\epsilon_2<\min(\epsilon_{exact},\epsilon_{sig},\epsilon_{\bf X})$. Here,  $\epsilon_{sig}={\left(\sqrt{1-\delta_{k_0}}\boldsymbol{\beta}_{min}\right)}/{\left(1+\dfrac{\sqrt{1+\delta_{k_0}}}{\sqrt{1-\delta_{k_0}}}\left(2 +  \dfrac{\boldsymbol{\beta}_{max}}{\boldsymbol{\beta}_{min}}\right)\right) }$ and $\epsilon_{\bf X}={\sqrt{1-\delta_{k_0}}\boldsymbol{\beta}_{min}\Gamma_{Alg}({\bf X})}/\left({1+\Gamma_{Alg}({\bf X})}\right)$. 
\end{corollary}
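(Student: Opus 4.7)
The plan is to derive Corollary~\ref{corr:tf} directly from Theorem~\ref{thm:tf} by observing that Assumption~2 is a strictly stronger version of Assumption~1 in which one may take $k_{sup}=k_0$ and $\epsilon_{sup}=\epsilon_{exact}$, and then reading off exact recovery from superset recovery via a counting argument.

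First, I would verify that Assumption~2 forces $k_{*}=k_0$ whenever $\epsilon_2<\epsilon_{exact}$. By hypothesis, $\hat{\mathcal{I}}_{k_0}=Alg({\bf y},{\bf X},k_0)=\mathcal{I}$. Combined with the monotonicity property A1 of IGP (Definition~3), this gives $\mathcal{I}=\hat{\mathcal{I}}_{k_0}\subseteq \hat{\mathcal{I}}_{k}$ for every $k\geq k_0$, so $k_{*}=\min\{k:\mathcal{I}\subseteq\hat{\mathcal{I}}_k\}\leq k_0$. In the opposite direction, because each $\hat{\mathcal{I}}_k$ has cardinality exactly $k$ and $|\mathcal{I}|=k_0$, the containment $\mathcal{I}\subseteq\hat{\mathcal{I}}_{k_{*}}$ forces $k_{*}\geq k_0$. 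Hence $k_{*}=k_0$, which is precisely Assumption~1 instantiated with $k_{sup}=k_0$ and $\epsilon_{sup}=\epsilon_{exact}$.

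Second, I would invoke Theorem~\ref{thm:tf} with these substitutions. The threshold $\min(\epsilon_{sup},\epsilon_{sig},\epsilon_{\bf X})$ appearing in Theorem~\ref{thm:tf} becomes $\min(\epsilon_{exact},\epsilon_{sig},\epsilon_{\bf X})$, and the expressions for $\epsilon_{sig}$ and $\epsilon_{\bf X}$ — which in Theorem~\ref{thm:tf} involve $\delta_{k_{sup}}$ — now involve $\delta_{k_0}$, matching the formulas stated in the corollary verbatim. Note that $\Gamma_{Alg}({\bf X})>0$ is guaranteed by Corollary~\ref{corr:beta}, so $\epsilon_{\bf X}>0$. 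Theorem~\ref{thm:tf} then delivers $\mathcal{I}\subseteq \hat{\mathcal{I}}_{\hat{k}_{TF}}$ together with the cardinality bound $|\hat{\mathcal{I}}_{\hat{k}_{TF}}|\leq k_{sup}=k_0$.

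Finally, a one-line counting argument closes the proof: the three facts $\mathcal{I}\subseteq \hat{\mathcal{I}}_{\hat{k}_{TF}}$, $|\mathcal{I}|=k_0$, and $|\hat{\mathcal{I}}_{\hat{k}_{TF}}|\leq k_0$ together force $\hat{\mathcal{I}}_{\hat{k}_{TF}}=\mathcal{I}$, which is the exact support recovery claimed. There is no substantive obstacle in this derivation: the probabilistic content (Theorem~\ref{thm:Beta} and Corollary~\ref{corr:beta}) and the RIP-based deterministic inequalities that define $\epsilon_{sig}$ and $\epsilon_{\bf X}$ have already been absorbed in Theorem~\ref{thm:tf}. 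The only novel step is the bookkeeping observation that Assumption~2 together with IGP monotonicity collapses $k_{*}$ to $k_0$, after which the cardinality constraint promotes superset recovery to equality.
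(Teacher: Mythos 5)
Your proposal is correct and follows essentially the same route as the paper, which simply states that the corollary follows from Theorem~\ref{thm:tf} by replacing $k_{sup}$ with $k_0$ and Assumption~1 with Assumption~2. Your additional bookkeeping (monotonicity plus $|\hat{\mathcal{I}}_k|=k$ forcing $k_*=k_0$, and the cardinality argument promoting superset recovery to equality) just makes explicit what the paper leaves implicit.
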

\begin{remark}
Note that $\epsilon_{sig}$ decreases with increasing $DR(\boldsymbol{\beta})$. Hence, the SNR required for successful recovery using TF-IGP increases with $DR(\boldsymbol{\beta})$.  This is the main qualitative difference between TF-IGP and the  results for  $\text{OMP}_{k_0}$, $\text{OLS}_{k_0}$ etc. where  $\epsilon_{exact}$ depends only on $\boldsymbol{\beta}_{min}$.  This term can be attributed to the sudden fall in the residual power $\|{\bf r}^{(k)}\|_2$ when a ``very significant" entry in $\boldsymbol{\beta}$ is covered by $Alg({\bf y},{\bf X},k)$ at an intermediate stage $k<k_*$. This mimics the fall in residual power when the  ``last" entry in $\boldsymbol{\beta}$ is selected in the $k_*^{th}$ iteration. 
This later fall in residual power is what TF-IGP trying to detect. The  implication of this result is that the TF-IGP will be lesser effective while recovering $\boldsymbol{\beta}$ with high $DR(\boldsymbol{\beta})$ than in recovering signals with low $DR(\boldsymbol{\beta}) \approx 1$. This observation is also later verified through numerical simulations. Also note that any signal of finite dynamic range $DR(\boldsymbol{\beta})$ can be detected by TF-IGP, if the SNR is made sufficiently high. 
\end{remark}
\subsection{Choice of $k_{max}=\floor{\frac{n+1}{2}}$ in TF-IGP.}
  For successful operation of TF-IGP, i.e. to estimate $k_*$ accurately, it is required that $k_{max}\geq k_*$. In addition to that, it is also required that  $\{{\bf X}_{\hat{\mathcal{I}}_k}\}_{k=1}^{k_{max}}$ are full rank and the residuals $\{{\bf r}^{(k)}\}_{k=0}^{k_{max}}$ are not zero.  It is impossible to ascertain \textit{a priori} when the matrices become rank deficient or residuals become zero. Hence, one can initially set $k_{max}=n$, the maximum value of $k$ beyond which the sub-matrices are rank deficient   and terminate iterations when any of the aforementioned contingencies happen. However, running $n$ iterations of TF-IGP will be computationally demanding. Hence, we have set the value of $k_{max}$ to be $k_{max}=\floor{\frac{n+1}{2}}$. 

{The rationale for this choice of $k_{max}$ is as follows. The Spark of a matrix ${\bf X}$ is defined to be smallest value of $k$ such that $\exists \mathcal{J}\subset[p]$ of cardinality $|\mathcal{J}|=k$ and ${\bf X}_{\mathcal{J}}$ is rank deficient. It is known that   $k_0<\floor{\frac{spark({\bf X})}{2}}$ is  a necessary  condition for sparse recovery using any CS algorithm\cite{elad_book}, even when noise ${\bf w}={\bf 0}_n$. It is also known that $spark({\bf X})\leq n+1$. Hence, for accurate support recovery using any CS algorithm, it is required that $k_0<\floor{\frac{n+1}{2}}$. Thus with a choice of $k_{max}= \floor{\frac{n+1}{2}}$, whenever $Alg({\bf y},{\bf X},k_0)$ returns $\mathcal{I}$, i.e., $k_*=k_0$, $k_{max}=\floor{\frac{n+1}{2}}$ satisfies $k_{max}\geq k_*$ as required by TF-IGP. Note that $k_0<\floor{\frac{n+1}{2}}$ is a necessary condition only for the optimal NP hard $l_0$-minimization. For algorithms like OMP, OLS etc. to deliver exact recovery, i.e., $k_*=k_0$ and $\mathcal{I}=\hat{\mathcal{I}}_{k_0}=Alg({\bf y},{\bf X},k_0)$, $n$ should be as high as  $n=O(k_0^2\log(p))$. Hence, for sparsity levels $k_0$ where OMP/OLS deliver exact support recovery $k_{max}=\floor{\frac{n+1}{2}}$ satisfies $k_{max}\gg k_0$. For OMP, there also exists a situation called extended support recovery\cite{extra} where $k_*$ satisfies $k_*>k_0$ and  $k_*=O(k_0)$, whenever $n=O(k_0\log(p))$.   The choice  of $k_{max}=\floor{\frac{n+1}{2}}$ also leaves sufficient legroom to allow $k_{max}>k_*$ even when $k_*>k_0$ and $k_*=O(k_0)$. }
\begin{remark}
TF-IGP is a  tuning free framework for employing algorithms in the IGP class. The choice of $k_{max}=\floor{\frac{n+1}{2}}$ is signal, noise, matrix and algorithm independent. The user does not have to make any subjective choices in  TF-IGP. 
\end{remark}
\section{Residual ratio threshold   for IGP.}
As revealed by the analysis of TF-IGP, the performance of TF-IGP degrades significantly with increasing $DR(\boldsymbol{\beta})$. TF-IGP is well suited for  applications like  \cite{MLOMP} where $DR(\boldsymbol{\beta})= 1$.  However, there also exist many applications where  $DR(\boldsymbol{\beta})$ is high and TF-IGP framework is highly suboptimal for such applications. This motivates the novel RRT-IGP framework  which can deliver good performance irrespective of $DR(\boldsymbol{\beta})$. RRT-IGP framework is based on the following Theorem.
\begin{thm}\label{thm-rrt}
Let $\Gamma_{Alg}^{lb}({\bf X})\leq \Gamma_{Alg}({\bf X})$ be any lower bound on $\Gamma_{Alg}({\bf X})$ in Corollary 1. The  support estimate $\hat{\mathcal{I}}_{\hat{k}_{RRT}}$, where $\hat{k}_{RRT}=\max\{k:RR(k)<\Gamma_{Alg}^{lb}({\bf X})\}$  satisfies $\mathcal{I}\subseteq \hat{\mathcal{I}}_{\hat{k}_{RRT}}$ and $|\hat{\mathcal{I}}_{\hat{k}_{RRT}}|\leq k_{sup}$ if Assumption 1 is true and $\epsilon_2<\min(\epsilon_{sup},\epsilon_{RRT})$. Here $\epsilon_{RRT}={\Gamma_{Alg}^{lb}({\bf X})\sqrt{1-\delta_{k_{sup}}}\boldsymbol{\beta}_{min}}/\left({1+\Gamma_{Alg}^{lb}({\bf X})}\right)$.
\end{thm}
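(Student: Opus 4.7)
The plan is to parallel the proof of Theorem \ref{thm:tf} in APPENDIX B, but replace the adaptive threshold $\min_k RR(k)$ with the fixed threshold $\Gamma_{Alg}^{lb}({\bf X})$. Assumption 1 together with $\epsilon_2 < \epsilon_{sup}$ already gives $k_* \leq k_{sup}$, so $\hat{\mathcal{I}}_{k_*} \supseteq \mathcal{I}$ and $|\hat{\mathcal{I}}_{k_*}| = k_* \leq k_{sup}$. It then suffices to show: (i) $RR(k_*) < \Gamma_{Alg}^{lb}({\bf X})$, forcing $\hat{k}_{RRT} \geq k_*$; and (ii) on the high-probability event of Corollary \ref{corr:beta}, $RR(k) \geq \Gamma_{Alg}^{lb}({\bf X})$ for every $k > k_*$, so that $\hat{k}_{RRT} = k_*$ exactly.

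For step (i), I bound the numerator of $RR(k_*)$ trivially by $\|({\bf I}_n - {\bf P}_{\hat{\mathcal{I}}_{k_*}}){\bf w}\|_2 \leq \|{\bf w}\|_2 \leq \epsilon_2$ and apply the reverse triangle inequality to the denominator,
\[
\|({\bf I}_n - {\bf P}_{\hat{\mathcal{I}}_{k_*-1}})({\bf X}\boldsymbol{\beta} + {\bf w})\|_2 \geq \|({\bf I}_n - {\bf P}_{\hat{\mathcal{I}}_{k_*-1}}){\bf X}\boldsymbol{\beta}\|_2 - \epsilon_2.
\]
The key structural observation is that monotonicity (A1), combined with the minimality of $k_*$, forces the unique index added at step $k_*$ to lie in $\mathcal{I}$; hence $\hat{\mathcal{I}}_{k_*-1} \cup \mathcal{I} \subseteq \hat{\mathcal{I}}_{k_*}$, and this union has cardinality at most $k_* \leq k_{sup}$. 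Decomposing ${\bf X}\boldsymbol{\beta} = {\bf X}_{\hat{\mathcal{I}}_{k_*-1} \cap \mathcal{I}} \boldsymbol{\beta}_{\hat{\mathcal{I}}_{k_*-1} \cap \mathcal{I}} + {\bf X}_{\mathcal{I}\setminus\hat{\mathcal{I}}_{k_*-1}} \boldsymbol{\beta}_{\mathcal{I}\setminus\hat{\mathcal{I}}_{k_*-1}}$ and noting that the first term is annihilated by ${\bf I}_n - {\bf P}_{\hat{\mathcal{I}}_{k_*-1}}$, Lemma \ref{eigenvalue}(e) applied to the disjoint sets $\hat{\mathcal{I}}_{k_*-1}$ and $\mathcal{I}\setminus\hat{\mathcal{I}}_{k_*-1}$ yields
\[
\|({\bf I}_n - {\bf P}_{\hat{\mathcal{I}}_{k_*-1}}){\bf X}\boldsymbol{\beta}\|_2 \geq \sqrt{1-\delta_{k_{sup}}}\,\boldsymbol{\beta}_{min},
\]
since $\mathcal{I}\setminus\hat{\mathcal{I}}_{k_*-1}$ is nonempty and every entry of $\boldsymbol{\beta}$ on its indices has magnitude at least $\boldsymbol{\beta}_{min}$. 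Putting the pieces together, $RR(k_*) \leq \epsilon_2 / (\sqrt{1-\delta_{k_{sup}}}\,\boldsymbol{\beta}_{min} - \epsilon_2)$, and a one-line rearrangement shows that this upper bound is strictly less than $\Gamma_{Alg}^{lb}({\bf X})$ exactly when $\epsilon_2 < \epsilon_{RRT}$.

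Step (ii) is a direct invocation of Corollary \ref{corr:beta}, which provides $\min_{k>k_*} RR(k) \geq \Gamma_{Alg}({\bf X}) \geq \Gamma_{Alg}^{lb}({\bf X})$ with high probability; hence no index exceeding $k_*$ enters the set $\{k : RR(k) < \Gamma_{Alg}^{lb}({\bf X})\}$. Combining (i) and (ii) gives $\hat{k}_{RRT} = k_*$, so $\hat{\mathcal{I}}_{\hat{k}_{RRT}} = \hat{\mathcal{I}}_{k_*} \supseteq \mathcal{I}$ and $|\hat{\mathcal{I}}_{\hat{k}_{RRT}}| = k_* \leq k_{sup}$. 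The main obstacle, and really the only non-routine step, is packaging the RIC bound in step (i) with the correct order $k_{sup}$: without the identity $\hat{\mathcal{I}}_{k_*-1} \cup \mathcal{I} \subseteq \hat{\mathcal{I}}_{k_*}$ one would be forced into a strictly larger RIC order (and hence a weaker result). The rest of the argument is algebraic and parallels the machinery already used for Theorem \ref{thm:tf}.
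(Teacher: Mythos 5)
Your proposal is correct and follows essentially the same route as the paper: Assumption 1 handles $k_*\leq k_{sup}$, the upper bound $RR(k_*)\leq \epsilon_2/(\sqrt{1-\delta_{k_{sup}}}\boldsymbol{\beta}_{min}-\epsilon_2)$ (which the paper derives in Appendix C via the set $u^{k_*-1}=\mathcal{I}/\hat{\mathcal{I}}_{k_*-1}$ of cardinality one, exactly your decomposition) is rearranged to give $\epsilon_{RRT}$, and Corollary 1 rules out $RR(k)<\Gamma_{Alg}^{lb}({\bf X})$ for $k>k_*$. Your inline RIC argument of order $k_{sup}$ matches the paper's use of $|\hat{\mathcal{I}}_{k_*-1}|+|u^{k_*-1}|=k_*\leq k_{sup}$ with Lemma 1(e), so there is no substantive difference.
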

\begin{proof} Please see APPENDIX D.
\end{proof}
Corollary \ref{corr:rrt}  follows  from Theorem \ref{thm-rrt} by replacing $k_{sup}$ with $k_0$ and Assumption 1 with Assumption 2. 
\begin{corollary}\label{corr:rrt}
The support estimate $\hat{\mathcal{I}}_{\hat{k}_{RRT}}$, where $\hat{k}_{RRT}=\max\{k:RR(k)<\Gamma_{Alg}^{lb}({\bf X})\}$  equals $\mathcal{I}$ if Assumption 2 is true and $\epsilon_2<\min(\epsilon_{exact},\epsilon_{RRT})$. Here $\epsilon_{RRT}={\Gamma_{Alg}^{lb}({\bf X})\sqrt{1-\delta_{k_0}}\boldsymbol{\beta}_{min}}/\left({1+\Gamma_{Alg}^{lb}({\bf X})}\right)$.
\end{corollary}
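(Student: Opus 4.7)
The plan is to obtain this corollary as a direct specialization of Theorem \ref{thm-rrt}, so the work is essentially verification that the hypotheses match after the stated substitutions. The key observation I would make first is that Assumption 2 is precisely Assumption 1 instantiated with $k_{sup}=k_0$ and $\epsilon_{sup}=\epsilon_{exact}$: if $\epsilon_2<\epsilon_{exact}$, then by Assumption 2, $Alg({\bf y},{\bf X},k_0)=\mathcal{I}$, which forces $k_*\leq k_0$; conversely, since any $\hat{\mathcal{I}}_k$ containing $\mathcal{I}$ must have cardinality at least $k_0$, we also have $k_*\geq k_0$, so $k_*=k_0$. Thus the "superset level" $k_{sup}$ of Theorem \ref{thm-rrt} can be taken to equal $k_0$, and the threshold $\epsilon_{sup}$ taken to equal $\epsilon_{exact}$.

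Next I would substitute $k_{sup}=k_0$ into the bound $\epsilon_{RRT}=\Gamma_{Alg}^{lb}({\bf X})\sqrt{1-\delta_{k_{sup}}}\boldsymbol{\beta}_{min}/(1+\Gamma_{Alg}^{lb}({\bf X}))$ from Theorem \ref{thm-rrt} and verify that it matches, symbol for symbol, the expression $\epsilon_{RRT}=\Gamma_{Alg}^{lb}({\bf X})\sqrt{1-\delta_{k_0}}\boldsymbol{\beta}_{min}/(1+\Gamma_{Alg}^{lb}({\bf X}))$ stated in the corollary; the only change is $\delta_{k_{sup}}\to\delta_{k_0}$, which is automatic under the identification $k_{sup}=k_0$. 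Consequently, the hypothesis $\epsilon_2<\min(\epsilon_{exact},\epsilon_{RRT})$ of the corollary is the same as $\epsilon_2<\min(\epsilon_{sup},\epsilon_{RRT})$ of Theorem \ref{thm-rrt} under our identification, and Theorem \ref{thm-rrt} applies. Its conclusion yields $\mathcal{I}\subseteq \hat{\mathcal{I}}_{\hat{k}_{RRT}}$ together with the cardinality bound $|\hat{\mathcal{I}}_{\hat{k}_{RRT}}|\leq k_{sup}=k_0$.

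Finally I would close with a cardinality squeeze: $|\mathcal{I}|=k_0$ and $\mathcal{I}\subseteq \hat{\mathcal{I}}_{\hat{k}_{RRT}}$ with $|\hat{\mathcal{I}}_{\hat{k}_{RRT}}|\leq k_0$ forces $\hat{\mathcal{I}}_{\hat{k}_{RRT}}=\mathcal{I}$, which is the stated conclusion. I do not anticipate any real obstacle here, since the substantive content is entirely carried by Theorem \ref{thm-rrt}; the only point that requires care is the very first step, namely arguing cleanly that Assumption 2 is a genuine specialization of Assumption 1 with $k_{sup}=k_0$ (rather than merely a stronger independent hypothesis), so that the theorem can be invoked verbatim.
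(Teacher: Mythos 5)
Your proposal is correct and follows exactly the route the paper intends: the paper's proof of this corollary is precisely the remark that it follows from Theorem \ref{thm-rrt} by replacing $k_{sup}$ with $k_0$ and Assumption 1 with Assumption 2. Your added care in checking that Assumption 2 is a genuine instance of Assumption 1 (via $k_*\geq k_0$) and the final cardinality squeeze from $\mathcal{I}\subseteq\hat{\mathcal{I}}_{\hat{k}_{RRT}}$ with $|\hat{\mathcal{I}}_{\hat{k}_{RRT}}|\leq k_0$ to equality are exactly the details the paper leaves implicit.
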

Thus, if a suitable lower bound or an accurate estimate of $\Gamma_{Alg}({\bf X})$ is available, one can still estimate the support $\mathcal{I}$ or a superset of it  using $\hat{k}_{RRT}=\max\{k:RR(k)<\Gamma_{Alg}^{lb}({\bf X})\}$. This is described in the RRT-IGP algorithm given in TABLE \ref{tab:rrt}.
\begin{table}\centering

\begin{tabular}{|l|}
\hline
{\bf Input:-} Design matrix ${\bf X}$, Observation ${\bf y}$ \\ \ \ \ \ \ \ \ \ \   and lower bound $\Gamma_{Alg}^{lb}({\bf X})\leq \Gamma_{Alg}({\bf X})$.\\ 
{\bf Step 1:-} Run  TF-IGP with ${\bf y}$ as input.\\
{\bf Step 2:-} Compute $\hat{k}_{RRT}=\max\{k:RR(k)<\Gamma_{Alg}^{lb}({\bf X})\}$\\
{\bf Output:-} Support estimate $\hat{\mathcal{I}}_{\hat{k}_{RRT}}$.  \\
\hline
\end{tabular}
\caption{ Residual radio threshold based IGP.}
\label{tab:rrt}
\end{table}
As explained later, one can produce $\Gamma_{Alg}^{lb}({\bf X})$ through procedures that does not require \textit{a priori} knowledge of $k_0$ or $\{\sigma^2,\epsilon_2\}$.  Thus RRT-IGP, just like TF-IGP is also SNO. However,  unlike  $\epsilon_{sig}$ of  TF-IGP, $\epsilon_{RRT}$ of RRT-IGP depends only on $\boldsymbol{\beta}_{min}$ and is independent of $\boldsymbol{\beta}_{max}$. Hence,  RRT-IGP is unaffected by high $DR(\boldsymbol{\beta})$.
\begin{remark}\label{remark:rrt}
The performance of RRT-IGP is  sensitive to the choice of the threshold $\Gamma_{Alg}^{lb}({\bf X})$. When $\Gamma_{Alg}^{lb}({\bf X})$ is very low, then $\epsilon_{RRT}$ will be very low pushing the SNR required for  successful support or superset recovery to higher levels.  Hence, for good performance of RRT-IGP, it is important to produce lower bounds $\Gamma_{Alg}^{lb}({\bf X})$ closer to $\Gamma_{Alg}({\bf X})$.
\end{remark}
\subsection{Selection of $\Gamma_{Alg}^{lb}({\bf X})$ in RRT-IGP.} 
{As mentioned in Remark \ref{remark:rrt}, the choice of $\Gamma_{Alg}^{lb}({\bf X})$ is crucial to the performance of RRT-IGP. $\Gamma_{Alg}^{lb}({\bf X})$ can be either a lower bound  dependent on the given matrix ${\bf X}$   or it can be an universal lower bound (i.e., independent of ${\bf X}$). Universal lower bounds are more useful, because  they does not require any extra computations involving the particular matrix ${\bf X}$. A ready made universal lower bound  is $\Gamma_{RRT}^{\alpha}$  in Theorem \ref{thm:Beta} which for small values of $\alpha$ like $\alpha=0.01$  will be lower than $\Gamma_{Alg}({\bf X})$ with a very high probability. However, $\Gamma_{RRT}^{\alpha}$  involves two levels of union bounds and hence is a pessimistic bound in the sense that   $\Gamma_{RRT}^{\alpha}$ tends to be much lower than $\Gamma_{Alg}({\bf X})$. Note that a lower value of $\Gamma_{Alg}^{lb}({\bf X})$ results in an increase in the  SNR required for successful recovery. Further, $\Gamma_{RRT}^{\alpha}$ does not capture the properties of the particular `Alg'.  Next we explain a  numerical method to produce universal lower bounds on $\Gamma_{Alg}({\bf X})$ that delivered better empirical NMSE performance than $\Gamma_{RRT}^{\alpha}$.

The reproducibility property A2) of IGP  implies that the index selected in the $k^{th}$ iteration depends on the previous iterations only through  the residual ${\bf r}^{(k-1)}$ in the $k-1^{th}$ iteration. This means that ${\bf r}^{(k)}$ and hence $RR(k)$ for $k>k_*$   can be recreated by running IGP with ${\bf r}^{(k_*)}=({\bf I}_n-{\bf P}_{\hat{\mathcal{I}}_{k_*}}){\bf w}$ as input. That is, $RR(k)$ for $k>k_*$ and hence $\Gamma_{Alg}({\bf X})$ depends only on how \textit{Alg} update its indices when provided with a  noise only vector ${\bf r}^{(k_*)}=({\bf I}_n-{\bf P}_{\hat{\mathcal{I}}_{k_*}}){\bf w}$ as input. This observation motivates the noise assisted training scheme for IGP given in TABLE \ref{tab:noise} where we try to produce a universal lower bound $\Gamma_{Alg}^{lb-tr}$ on $\Gamma_{Alg}({\bf X})$ by training the particular IGP `Alg' multiple times with independently generated noise samples ${\bf y}^s\sim \mathcal{N}({\bf 0}_n,{\bf I}_n)$ and matrices ${\bf X}^s$ of the same dimensions. Unlike $\Gamma_{RRT}^{\alpha}$, the offline training in TABLE \ref{tab:noise} exploits the properties of the particular IGP `Alg' which explains its better empirical performance. The training in TABLE \ref{tab:noise} need to be done only once for a given value of $n$ and $p$. This process is completely independent of the given matrix ${\bf X}$.  Hence,  $\Gamma_{Alg}^{lb-tr}$ can be computed completely offline. }

\begin{table}\centering

\begin{tabular}{|l|}
\hline
{\bf Input:-} Matrix dimensions $n$ and $p$. Number of training symbols $N_{tr}$.\\ 

\ \ \ \ \ \ \ \ \ \ Repeat Steps 1-3 for $s=1$ to $N_{tr}$. \\
{\bf Step 1:-} Generate ${\bf y}^s\overset{i.i.d}{\sim} \mathcal{N}({\bf 0}_n,{\bf I}_n)$ and and ${\bf X}_{i,j}^s\overset{\textit{i.i.d}}{\sim}\mathcal{N}(0,1)$.\\
{\bf Step 2:-} Run IGP as outlined in TABLE \ref{tab:tf-omp} with (${\bf y}^s,{\bf X}^s$) as input.\\
{\bf Step 3:-} Compute $RR_{min}^s=\underset{1\leq k\leq k_{max}}{\min} RR(k)$. \\

{\bf Output:-}  $\Gamma_{Alg}^{lb-tr}=\underset{1\leq s\leq N_{tr}}{\min}RR_{min}^s.$ \\
\hline
\end{tabular}
\caption{ Noise Assisted Offline Training for IGP.}
\label{tab:noise}
\end{table}
\begin{remark}
{The performance of RRT-IGP depends  on the number of  training samples $N_{tr}$. Hence, RRT-IGP with $\Gamma_{Alg}^{lb-tr}$ is SNO, but not tuning free. Since the training is completely offline, one can set $N_{tr}$ to arbitrarily high values pushing $\Gamma_{Alg}^{lb-tr}$ to be lower than $\Gamma_{Alg}({\bf X})$ with a very high probability. We have observed that the estimated  $\Gamma_{Alg}^{lb-tr}$ and the resultant NMSE performance with its usage in RRT-IGP framework is largely invariant to $N_{tr}$ as long as $N_{tr}$ is of the order of hundreds. Hence, with a large value of $N_{tr}$, RRT-IGP is only very weakly dependent on $N_{tr}$.}
\end{remark}

\subsection{Effect of $n$, $p$ and $N_{tr}$  on $\Gamma_{Alg}^{lb-tr}$,  $\alpha$ on $\Gamma_{RRT}^{\alpha}$.}
\begin{figure}[htb]
\includegraphics[width=\columnwidth]{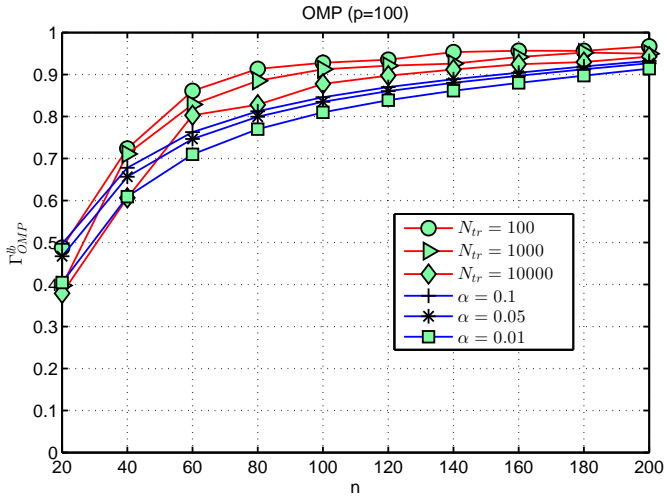}
\caption{Effect of n, p and $N_{tr}$ on $\Gamma_{Alg}^{lb-tr}$ and $\alpha$ on $\Gamma_{RRT}^{\alpha}$. }
\label{fig.lower_bounds}
\end{figure}
{In Fig.\ref{fig.lower_bounds}, we plot the effect of dimensions $n$, $p$ and number of training samples $N_{tr}$ on  $\Gamma_{Alg}^{lb-tr}$ generated by the noise assisted training scheme in TABLE \ref{tab:noise} for OMP. It can be observed from Fig.\ref{fig.lower_bounds} that   $\Gamma_{OMP}^{lb-tr}$ increases with increasing $n/p$ ratio. Similar trends were visible in a large number of numerical  simulations. It is also clear from Fig.\ref{fig.lower_bounds} that  $\Gamma_{OMP}^{lb-tr}$ does not vary much with the number of training samples $N_{tr}$ except when $n/p$ is too low (like $n/p=0.2$). {Fig.\ref{fig.lower_bounds} also demonstrates that $\Gamma^{\alpha}_{RRT}$ in Theorem 1 with  values of $\alpha$ like $\alpha=0.1$ or  $\alpha=0.01$  will be lower than $\Gamma_{Alg}^{lb-tr}$. This explains why $\Gamma_{Alg}^{lb-tr}$ is a better candidate for $\Gamma^{lb}_{Alg}({\bf X})$ in the RRT-IGP framework than $\Gamma^{\alpha}_{RRT}$.} }
\subsection{Computational complexity of TF-OMP and RRT-OMP.}
 All the quantities required for TF-OMP/RRT-OMP can be obtained from a single  computation of OMP  with sparsity level $k_{max}=\floor{\frac{n+1}{2}}$. Since computing OMP with sparsity level $k$ has complexity $O(knp)$, the online complexity of TF-OMP/RRT-OMP is $O(n^2p)$.   When $k_0\ll n$, the complexity of TF-OMP/RRT-OMP is nearly $n/(2k_0)$ times higher than the $O(k_0np)$ complexity of $\text{OMP}_{k_0}$. However, unlike $\text{OMP}_{k_0}$, TF-OMP/RRT-OMP does not require \textit{a priori} knowledge of $k_0$. The complexity of PaTh(OMP), i.e., OMP applied in the  SNO PaTh framework is $O(k_0np)$. However, TF-OMP/RRT-OMP significantly outperforms PaTh(OMP) in many situations.  The  tuning free  SPICE algorithm is solved using an iterative scheme where each iteration has complexity  $O\left((n+p)^3\right)$\cite{spice_like}.  The complexity of TF-OMP/RRT-OMP is significantly lower than the complexity of SPICE. Similar complexity comparisons hold true for TF-OLS/RRT-OLS  also.
\section{Comparison of TF-OMP/RRT-OMP with $\text{OMP}_{k_0}$.}
We next compare Theorems 2-3 and Corollaries 2-3 in the light of existing literature on support recovery using $\text{OMP}_{k_0}$ and $\text{OMP}_{\epsilon_2}$.  Similar conclusions hold true for OLS in the light of support recovery conditions in \cite{ERC-OMP,OLSarxiv}.
\subsection{Exact support recovery using $\text{OMP}_{k_0}$ and $\text{OMP}_{\epsilon_2}$ .}
{ The best known conditions for successful support recovery using $\text{OMP}_{k_0}$ and $\text{OMP}_{\epsilon_2}$ in bounded noise $\|{\bf w}\|_2\leq \epsilon_2$ is summarized below. Please refer to \cite{omp_necess} for details.
\begin{lemma}\label{lemma:success}
 $\text{OMP}_{k_0}$ or $\text{OMP}_{\epsilon_2}$  successfully recover the support $\mathcal{I}$ if $\delta_{k_0+1}<{1}/{\sqrt{k_0+1}}$ and $\epsilon_2\leq \epsilon_{exact}=
 \boldsymbol{\beta}_{min}\sqrt{1-\delta_{k_0+1}}\left[1+\dfrac{\sqrt{1-\delta_{k_0+1}^2}}{1-\sqrt{k_0+1}\delta_{k_0+1}}\right]^{-1}.$ 
\end{lemma}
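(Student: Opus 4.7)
The plan is to prove the success of $\text{OMP}_{k_0}$ by strong induction on the iteration index $k$, showing that for $k=1,\ldots,k_0$, the index $t^k$ selected in Step 2 of TABLE \ref{tab:omp} lies in $\mathcal{I}\setminus \mathcal{J}_{k-1}$, whenever $\delta_{k_0+1}<1/\sqrt{k_0+1}$ and $\epsilon_2\le \epsilon_{exact}$. The base case and inductive step collapse to the same argument: assuming $\mathcal{J}_{k-1}\subseteq \mathcal{I}$, I must establish that
\[
M_{\text{good}}:=\max_{j\in\mathcal{I}\setminus\mathcal{J}_{k-1}}|\mathbf{X}_j^T\mathbf{r}^{(k-1)}| \;>\; \max_{j\notin\mathcal{I}}|\mathbf{X}_j^T\mathbf{r}^{(k-1)}|=:M_{\text{bad}}.
\]

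First I would decompose $\mathbf{r}^{(k-1)}=(\mathbf{I}_n-\mathbf{P}_{\mathcal{J}_{k-1}})\mathbf{X}_{\mathcal{I}\setminus\mathcal{J}_{k-1}}\boldsymbol{\beta}_{\mathcal{I}\setminus\mathcal{J}_{k-1}}+(\mathbf{I}_n-\mathbf{P}_{\mathcal{J}_{k-1}})\mathbf{w}$, using that $(\mathbf{I}_n-\mathbf{P}_{\mathcal{J}_{k-1}})\mathbf{X}_{\mathcal{J}_{k-1}}=\mathbf{0}$. For the signal piece I would invoke Lemma \ref{eigenvalue}(e) to lower bound $\|(\mathbf{I}_n-\mathbf{P}_{\mathcal{J}_{k-1}})\mathbf{X}_{\mathcal{I}\setminus\mathcal{J}_{k-1}}\mathbf{v}\|_2\ge \sqrt{1-\delta_{k_0}}\|\mathbf{v}\|_2$, and then pass to correlations via the standard inequality
\[
\max_{j\in\mathcal{I}\setminus\mathcal{J}_{k-1}}|\mathbf{X}_j^T\mathbf{z}|\;\ge\;\frac{\|\mathbf{X}_{\mathcal{I}\setminus\mathcal{J}_{k-1}}^T\mathbf{z}\|_2}{\sqrt{k_0-k+1}}.
\]
Combined with a lower bound on $\|\mathbf{X}_{\mathcal{I}\setminus\mathcal{J}_{k-1}}^T(\mathbf{I}_n-\mathbf{P}_{\mathcal{J}_{k-1}})\mathbf{X}_{\mathcal{I}\setminus\mathcal{J}_{k-1}}\boldsymbol{\beta}_{\mathcal{I}\setminus\mathcal{J}_{k-1}}\|_2$ obtained from Lemma \ref{eigenvalue}(b) and (e), this yields a signal contribution to $M_{\text{good}}$ of the form $c_1(\delta_{k_0+1})\boldsymbol{\beta}_{\min}$. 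For $M_{\text{bad}}$, the orthogonality $(\mathbf{I}_n-\mathbf{P}_{\mathcal{J}_{k-1}})\mathbf{X}_{\mathcal{J}_{k-1}}=\mathbf{0}$ lets me apply Lemma \ref{eigenvalue}(d) with $\mathcal{J}_1\subset\mathcal{I}^C$ and $\mathcal{J}_2=\mathcal{I}\setminus\mathcal{J}_{k-1}$, $|\mathcal{J}_1\cup\mathcal{J}_2|\le k_0+1$, giving the signal contribution to $M_{\text{bad}}$ an upper bound of order $\delta_{k_0+1}\|\boldsymbol{\beta}_{\mathcal{I}\setminus\mathcal{J}_{k-1}}\|_2$.

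The noise contributions to both $M_{\text{good}}$ and $M_{\text{bad}}$ are bounded using Lemma \ref{eigenvalue}(c) applied to $\mathbf{X}_j^T(\mathbf{I}_n-\mathbf{P}_{\mathcal{J}_{k-1}})\mathbf{w}$; in particular, $|\mathbf{X}_j^T(\mathbf{I}_n-\mathbf{P}_{\mathcal{J}_{k-1}})\mathbf{w}|\le \sqrt{1+\delta_{k_0+1}}\,\epsilon_2$ (after absorbing the $\|(\mathbf{I}_n-\mathbf{P}_{\mathcal{J}_{k-1}})\mathbf{w}\|_2\le \|\mathbf{w}\|_2\le \epsilon_2$ bound into a column-norm estimate for the augmented subset). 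Combining, $M_{\text{good}}>M_{\text{bad}}$ reduces to an inequality of the form $\bigl(1-\sqrt{k_0+1}\,\delta_{k_0+1}\bigr)\boldsymbol{\beta}_{\min}\sqrt{1-\delta_{k_0+1}} > \bigl(1+\sqrt{1-\delta_{k_0+1}^2}\bigr)\epsilon_2/\bigl(\text{factor}\bigr)$, which rearranges to exactly the stated bound $\epsilon_2\le \epsilon_{exact}$ once $\delta_{k_0+1}<1/\sqrt{k_0+1}$ is imposed to make the factor $1-\sqrt{k_0+1}\,\delta_{k_0+1}$ positive. The main technical obstacle will be to keep track of the RIP-order bookkeeping so that the final constants consolidate into precisely the quoted expression rather than a looser one; the standard trick of replacing $\delta_{k_0}$ by $\delta_{k_0+1}$ via Lemma \ref{eigenvalue}(a) keeps everything at a single RIP order.

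Finally, for the $\text{OMP}_{\epsilon_2}$ variant, I must check that the stopping rule $\|\mathbf{r}^{(k)}\|_2\le\epsilon_2$ triggers at exactly $k=k_0$. For $k<k_0$, using the already-established fact that correct indices are chosen and Lemma \ref{eigenvalue}(e), one gets $\|\mathbf{r}^{(k)}\|_2\ge \sqrt{1-\delta_{k_0+1}}\|\boldsymbol{\beta}_{\mathcal{I}\setminus\mathcal{J}_k}\|_2-\epsilon_2 \ge \sqrt{1-\delta_{k_0+1}}\,\boldsymbol{\beta}_{\min}-\epsilon_2$, which remains strictly larger than $\epsilon_2$ under the hypothesis on $\epsilon_2$; while $\|\mathbf{r}^{(k_0)}\|_2=\|(\mathbf{I}_n-\mathbf{P}_{\mathcal{I}})\mathbf{w}\|_2\le \|\mathbf{w}\|_2\le \epsilon_2$, so iterations halt at $k_0$ with $\hat{\mathcal{I}}=\mathcal{I}$.
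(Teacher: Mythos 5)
First, note that the paper does not actually prove this lemma: it is imported verbatim from the reference \cite{omp_necess} (``Please refer to \cite{omp_necess} for details''), so the comparison has to be against the known proof in that literature rather than against anything in this manuscript.

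Your induction skeleton (show each of the $k_0$ greedy selections picks an index in $\mathcal{I}\setminus\mathcal{J}_{k-1}$, then verify the stopping rule for $\text{OMP}_{\epsilon_2}$) is the right frame, and your treatment of the noise terms and of the $\epsilon_2$-stopping rule is fine. The genuine gap is in the core step: bounding $M_{\text{good}}$ and $M_{\text{bad}}$ \emph{separately} --- $M_{\text{good}}\ge \|{\bf X}_{\mathcal{I}\setminus\mathcal{J}_{k-1}}^T{\bf z}\|_2/\sqrt{k_0-k+1}\ge (1-\delta)\|\boldsymbol{\beta}_{u}\|_2/\sqrt{k_0-k+1}$ via Cauchy--Schwarz and Lemma \ref{eigenvalue}(e), versus $M_{\text{bad}}\le \delta_{k_0+1}\|\boldsymbol{\beta}_{u}\|_2$ via Lemma \ref{eigenvalue}(d) --- provably cannot produce the factor $1-\sqrt{k_0+1}\,\delta_{k_0+1}$. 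In the worst case ($k=1$, $|u|=k_0$) that route requires $\frac{1-\delta}{\sqrt{k_0}}>\delta$, i.e.\ $\delta_{k_0+1}<1/(1+\sqrt{k_0})$, which is strictly smaller than $1/\sqrt{k_0+1}$; the ``bookkeeping'' you defer is not bookkeeping but the heart of the matter. The sharp threshold is obtained in \cite{omp_necess} (building on Mo's noiseless argument) by lower-bounding the \emph{difference} $\|{\bf X}_{\mathcal{I}\setminus\mathcal{J}}^T{\bf P}^{\perp}{\bf X}_{u}\boldsymbol{\beta}_{u}\|_\infty-\|{\bf X}_{\mathcal{I}^C}^T{\bf P}^{\perp}{\bf X}_{u}\boldsymbol{\beta}_{u}\|_\infty$ jointly, using a specially constructed test vector and the RIP applied to sums and differences of augmented vectors, which yields $\bigl(1-\sqrt{k_0-k+2}\,\delta_{k_0+1}\bigr)\|\boldsymbol{\beta}_{u}\|_2/\sqrt{k_0-k+1}$ directly; this is the ingredient your sketch is missing and cannot be recovered from Lemma \ref{eigenvalue} alone. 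As written, your argument proves a correct but strictly weaker statement (with $1/(1+\sqrt{k_0})$ in place of $1/\sqrt{k_0+1}$ and a correspondingly smaller $\epsilon_{exact}$), not the lemma as quoted.
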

{In words, if $\delta_{k_0+1}$ is sufficiently low, then $\text{OMP}_{k_0}$ or $\text{OMP}_{\epsilon_2}$ will recover the true support at high SNR.  This condition is also worst case necessary in the following sense.  There exist \textit{some matrix $\tilde{\bf X}$} with $\delta_{k_0+1}\geq {1}{\sqrt{k_0+1}}$ for which there exist \textit{a $k_0$ sparse signal} $\tilde{\boldsymbol{\beta}}$ that cannot be recovered using $\text{OMP}_{k_0}$. Note that $\delta_{k_0+1}<1/\sqrt{k_0+1}$  is not a necessary condition for the given matrix ${\bf X}$.   We next compare and contrast the sufficient conditions for exact support recovery in Corollaries 2-3 (reproduced in TABLE \ref{tab:guarentee}) with the result in Lemma \ref{lemma:success}. }

\begin{table}[t]\centering

\begin{tabular}{|c|c|}
\hline
TF-OMP&RRT-OMP \\
 $\epsilon_2\leq \min(\epsilon_{exact},\epsilon_{sig},\epsilon_{\bf X})$ &$\epsilon_{2}\leq \min(\epsilon_{exact},\epsilon_{RRT})$   \\ \hline
$\epsilon_{X}=\dfrac{\Gamma_{Alg}({\bf X})\sqrt{1-\delta_{k_0}}\boldsymbol{\beta}_{min}}{1+\Gamma_{Alg}({\bf X})}$  & $\epsilon_{RRT}=\sqrt{1-\delta_{k_0}}\boldsymbol{\beta}_{min}$ \\ 
 $\epsilon_{sig}=\dfrac{\sqrt{1-\delta_{k_0}}\boldsymbol{\beta}_{min}}{\left(1+\dfrac{\sqrt{1+\delta_{k_0}}}{\sqrt{1-\delta_{k_0}}}\left(2 +  \dfrac{\boldsymbol{\beta}_{max}}{\boldsymbol{\beta}_{min}}\right)\right) }$ & $\times \dfrac{\Gamma_{Alg}^{lb}({\bf X})}{1+\Gamma_{Alg}^{lb}({\bf X})}$ \\
\hline
\end{tabular}
\caption{Support recovery conditions: TF-OMP/RRT-OMP. }
\label{tab:guarentee}
\end{table}
First consider the case of RRT-OMP. Note that the quantity $SNR_{excess}^{RRT}=\dfrac{\epsilon_{exact}}{\epsilon_{RRT}}$ is a measure of excess SNR required for RRT-OMP to ensure successful support recovery in comparison with $\text{OMP}_{k_0}$. 
 Substituting the values of $\epsilon_{exact}$ and $\epsilon_{RRT}$ and using the bound $\delta_{k_0}\leq\delta_{k_0+1}$ in a) of Lemma 1 gives 
\begin{equation}
SNR_{excess}^{RRT}\leq \dfrac{1+\frac{1}{\Gamma_{Alg}^{lb}({\bf X})}}{1+\frac{\sqrt{1-\delta_{k_0+1}^2}}{1-\sqrt{k_0+1}\delta_{k_0+1}} }.
\end{equation} 
Note that $\frac{\sqrt{1-\delta_{k_0+1}^2}}{1-\sqrt{k_0+1}\delta_{k_0+1}}=\left(\frac{1-\delta_{k_0+1}}{1-\sqrt{k_0+1}\delta_{k_0+1}}\right)\sqrt{\frac{1+\delta_{k_0+1}}{1-\delta_{k_0+1}}}\geq 1$. Consequently, $SNR_{excess}^{RRT}\leq 0.5 \left(1+\frac{1}{\Gamma_{Alg}^{lb}({\bf X})}\right)$.   For $\Gamma_{Alg}^{lb}({\bf X})=0.4$, $SNR_{excess}^{RRT}\leq 1.75$ and for $\Gamma_{Alg}^{lb}({\bf X})=0.8$, $SNR_{excess}^{RRT}\leq 1.125$. {From Fig.\ref{fig.lower_bounds}, one can see that lower bounds $\Gamma_{Alg}^{lb-tr}$ and $\Gamma^{\alpha}_{RRT}$ on $\Gamma_{Alg}({\bf X})$ used for implementing RRT-OMP  increases fast with increasing $n/p$ (in Fig.\ref{fig.lower_bounds} $\Gamma_{Alg}^{lb-tr}\geq 0.8$ as long as $n/p>0.5$).} Hence, the excess SNR required for RRT-OMP to accomplish successful support recovery is negligible as long as $n/p$ is moderately high. 

Next we consider the terms $\epsilon_{sig}$ and $\epsilon_{\bf X}$ in TF-OMP. By definition, $\Gamma_{Alg}({\bf X})\geq \Gamma_{Alg}^{lb}({\bf X})$ and hence $SNR_{excess}^{\bf X}=\dfrac{\epsilon_{exact}}{\epsilon_{\bf X}}$ follows exactly that of $SNR_{excess}^{RRT}$.     Similar to $SNR_{excess}^{RRT}$, one can bound $SNR_{excess}^{sig}=\dfrac{\epsilon_{exact}}{\epsilon_{sig}}$ as 
\begin{equation}
SNR_{excess}^{sig}\leq 0.5 {\left(1+\frac{\sqrt{1+\delta_{k_0}}}{\sqrt{1-\delta_{k_0}}}\left(2 +  \frac{\boldsymbol{\beta}_{max}}{\boldsymbol{\beta}_{min}}\right)\right) }
\end{equation}
{Note that exact recovery is possible only if $\delta_{k_0}\leq\delta_{k_0+1}\leq 1/\sqrt{k_0+1}$. Hence, for exact recovery,  $\delta_{k_0}$ should be very low. Consequently, one can approximate $\delta_{k_0}\approx 0$ and hence, $SNR_{excess}^{sig}\leq 0.5 (3+\frac{\boldsymbol{\beta}_{max}}{\boldsymbol{\beta}_{min}})$.
For uniform signals, i.e., $\frac{\boldsymbol{\beta}_{max}}{\boldsymbol{\beta}_{min}}\approx 1$, $SNR_{excess}^{sig}\leq 2$.  However, $SNR_{excess}^{sig}$  increases tremendously with the increase in $DR(\boldsymbol{\beta})$. To summarize,  the performance of RRT-GP $(\forall\boldsymbol{\beta})$ and TF-IGP ($\boldsymbol{\beta}$ with low $DR(\boldsymbol{\beta}))$  compares very favourably with $\text{OMP}_{k_0}$ or $\text{OMP}_{\epsilon_2}$ in terms of the SNR required for exact recovery. } }
\subsection{High SNR consistency (HSC) of TF-OMP.}
HSC of variable selection techniques  in Gaussian noise, i.e., ${\bf w} \sim \mathcal{N}({\bf 0}_,\sigma^2{\bf I}_n)$ has received considerable attention in signal processing community \cite{tsp,cs_tsp}. HSC is  defined as follows. \\
{\bf Definition 4:-} A support estimate $\hat{\mathcal{I}}$ of $\mathcal{I}=supp(\boldsymbol{\beta})$ is high SNR consistent iff
 $PE=\mathbb{P}(\hat{\mathcal{I}}\neq \mathcal{I})$ satisfies $\underset{\sigma^2 \rightarrow 0 }{\lim}PE=0$.  \\
The necessary and sufficient conditions for the HSC of LASSO and OMP  are  derived in  \cite{cs_tsp}. OMP with SC  $\|{\bf r}^{(k)}\|_2\leq \gamma$ or $\|{\bf X}^T{\bf r}^{(k)}\|_{\infty}\leq \gamma$ are high SNR consistent if $\underset{\sigma^2 \rightarrow 0}{\lim}\gamma=0$ and $\underset{\sigma^2 \rightarrow 0}{\lim}\dfrac{\gamma}{\sigma}=\infty$. It was also shown that $\text{OMP}_{k_0}$ is  high SNR consistent, whereas, $\text{OMP}_{\sigma^2}$ is inconsistent at high SNR. These results are useful only if  \textit{a priori} knowledge of $k_0$ or $\sigma^2$ are available.  We next establish the HSC of TF-OMP. This is a first time a SNO CS algorithm is reported  to achieve HSC. 
 \begin{thm}TF-OMP is high SNR consistent for any  $\boldsymbol{\beta}$ with $|supp(\boldsymbol{\beta})|\leq k_0$ whenever  $\delta_{k_0+1}< {1}/{\sqrt{k_0+1}}$. 
 \end{thm}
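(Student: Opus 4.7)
The plan is to combine the deterministic support recovery guarantee of Corollary 2 with the fact that both $\|{\bf w}\|_2$ and the probability that $\min_{k>k_*}RR(k)$ is small can be controlled as $\sigma\to 0$. The key structural observation is that in Corollary 2 the thresholds $\epsilon_{exact}$, $\epsilon_{sig}$, and $\epsilon_{\bf X}$ depend on ${\bf X}$, $\boldsymbol{\beta}$, and on $\Gamma_{Alg}({\bf X})$, but none of them involve $\sigma^2$. Meanwhile, for Gaussian ${\bf w}\sim\mathcal{N}({\bf 0}_n,\sigma^2{\bf I}_n)$, the noise energy $\|{\bf w}\|_2/\sigma$ has a fixed $\chi_n$ distribution, so $\|{\bf w}\|_2\to 0$ in probability as $\sigma\to 0$.

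First I would fix an arbitrary $\eta>0$. By Theorem 1 (equivalently, Corollary 1) applied to OMP, there is a deterministic threshold $\Gamma^*=\Gamma^*(\eta,{\bf X})>0$ with $\mathbb{P}(\min_{k>k_*}RR(k)<\Gamma^*)\leq \eta/2$; crucially, $\Gamma^*$ does not depend on $\sigma^2$ because, for $k>k_*$, $RR(k)=\|({\bf I}_n-{\bf P}_{\hat{\mathcal{I}}_k}){\bf w}\|_2/\|({\bf I}_n-{\bf P}_{\hat{\mathcal{I}}_{k-1}}){\bf w}\|_2$ is scale invariant in ${\bf w}$. Next, the RIP hypothesis $\delta_{k_0+1}<1/\sqrt{k_0+1}$ together with Lemma 4 shows that $\text{OMP}_{k_0}({\bf y},{\bf X},k_0)=\mathcal{I}$ whenever $\|{\bf w}\|_2\leq \epsilon_{exact}$, which is exactly Assumption 2 (with $k_*=k_0$ on that event) and puts us squarely in the hypothesis of Corollary 2. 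I would then set
\begin{equation}
\tau=\min\!\left(\epsilon_{exact},\epsilon_{sig},\frac{\sqrt{1-\delta_{k_0}}\,\boldsymbol{\beta}_{min}\,\Gamma^*}{1+\Gamma^*}\right)>0,
\end{equation}
which is strictly positive and independent of $\sigma$. On the joint event $\{\|{\bf w}\|_2<\tau\}\cap\{\min_{k>k_*}RR(k)\geq \Gamma^*\}$, Corollary 2 (instantiated with $\Gamma_{Alg}({\bf X})$ replaced by the admissible lower bound $\Gamma^*$) yields $\hat{\mathcal{I}}_{\hat{k}_{TF}}=\mathcal{I}$. A union bound then gives
\begin{equation}
\mathbb{P}(\hat{\mathcal{I}}_{\hat{k}_{TF}}\neq \mathcal{I})\leq \mathbb{P}(\|{\bf w}\|_2\geq \tau)+\eta/2 = \mathbb{P}(\chi_n\geq \tau/\sigma)+\eta/2,
\end{equation}
and because $\tau$ does not depend on $\sigma$, the first term tends to $0$ as $\sigma\to 0$. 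Consequently $\limsup_{\sigma^2\to 0}\mathbb{P}(\hat{\mathcal{I}}_{\hat{k}_{TF}}\neq \mathcal{I})\leq \eta/2$; since $\eta>0$ is arbitrary, the limit equals zero, which is the HSC definition.

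The main obstacle, and the step requiring the most care, is justifying that $\Gamma^*$ can indeed be chosen uniformly in $\sigma$. This is a scale-invariance claim: beyond $k_*$ the reproducibility property A2 ensures that the indices OMP selects, and hence the entire residual sequence $\{{\bf r}^{(k)}\}_{k>k_*}$, are homogeneous of degree one in ${\bf w}$, so every $RR(k)$ is homogeneous of degree zero in ${\bf w}$ and therefore distributionally independent of $\sigma$. With this invariance in hand the argument becomes a clean double-limit: first fix the tolerance $\eta$ on the $\Gamma^*$-event, then let $\sigma\to 0$ so that the noise term vanishes, then let $\eta\to 0$.
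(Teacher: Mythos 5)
Your proof is correct and follows the same basic strategy as the paper's: invoke the deterministic recovery guarantee of Corollary~3 (exact recovery whenever $\|{\bf w}\|_2$ is below a threshold that does not involve $\sigma^2$), note that $\|{\bf w}\|_2^2/\sigma^2\sim\chi^2_n$ has a $\sigma$-free distribution, and conclude that the failure probability vanishes as $\sigma^2\to 0$. The one place where you genuinely diverge is in how the term $\epsilon_{\bf X}$ is handled, and your treatment is actually the more careful one. The paper simply declares that TF-OMP recovers the support whenever $\|{\bf w}\|_2<\epsilon_{TF}=\min(\epsilon_{exact},\epsilon_{sig},\epsilon_{\bf X})$ and treats $\epsilon_{TF}$ as a deterministic constant, even though $\epsilon_{\bf X}$ is built from $\Gamma_{Alg}({\bf X})$ of Corollary~1, which only lower-bounds $\min_{k>k_*}RR(k)$ \emph{with high probability}. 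You instead fix a tolerance $\eta$, extract from Theorem~1 a $\sigma$-independent threshold $\Gamma^*$ with exception probability at most $\eta/2$ (justified by the degree-zero homogeneity of $RR(k)$ in ${\bf w}$ for $k>k_*$), run the deterministic argument on the intersection of the two good events, and then send $\sigma\to 0$ followed by $\eta\to 0$. This double-limit patches a small but real gap in the paper's one-line argument and costs nothing in generality. The only blemish is a citation slip: the exact-recovery condition $\delta_{k_0+1}<1/\sqrt{k_0+1}$ giving Assumption~2 is Lemma~3 in the paper, not Lemma~4, which is immaterial since you wrote the proof blind.
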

 \begin{proof}
  TF-OMP recover the correct support whenever $\|{\bf w}\|_2< \epsilon_{TF}=\min(\epsilon_{exact},\epsilon_{sig},\epsilon_{\bf X})$ and   $\epsilon_{TF}>0$ is constant strictly bounded away from zero (Corollary \ref{corr:tf} and Lemma \ref{lemma:success}). Hence, $\mathbb{P}(\hat{\mathcal{I}}=\mathcal{I})$ satisfies $\mathbb{P}(\hat{\mathcal{I}}=\mathcal{I})\geq \mathbb{P}(\|{\bf w}\|_2^2\leq \epsilon_{TF}^2)=\mathbb{P}(\dfrac{\|{\bf w}\|_2^2}{\sigma^2}\leq \dfrac{\epsilon_{TF}^2}{\sigma^2}) $. Note that $T=\dfrac{\|{\bf w}\|_2^2}{\sigma^2}\sim \chi^2_n$ and $T$ is a bounded in probability R.V with distribution  independent of $\sigma^2$.  Hence,  $\underset{\sigma^2 \rightarrow 0}{\lim}\mathbb{P}(\hat{\mathcal{I}}= \mathcal{I})\geq \underset{\sigma^2 \rightarrow 0}{\lim}\mathbb{P}(T<\dfrac{\epsilon_{TF}^2}{\sigma^2})=1$.
\end{proof}
\begin{remark}\label{PE_floor}{Once the lower bounds $\Gamma^{lb}_{\text{OMP}}({\bf X})$ in RRT-OMP satisfy  $\Gamma^{lb}_{\text{OMP}}({\bf X})\leq \Gamma_{\text{OMP}}({\bf X})$ almost surely, then RRT-OMP is also high SNR consistent. However, both the offline training scheme in TABLE \ref{tab:rrt} with high $N_{tr}$ or $\Gamma^{\alpha}_{RRT}$ in Theorem \ref{thm:Beta} with very small $\alpha$ guarantees $\Gamma^{lb-tr}_{\text{OMP}}\leq \Gamma_{\text{OMP}}({\bf X})$ or $\Gamma^{\alpha}_{RRT}\leq \Gamma_{\text{OMP}}({\bf X})$ with a very high probability, not almost surely. Hence, RRT-OMP with  $\Gamma^{lb}_{\text{OMP}}({\bf X})$ produced using these schemes are not guaranteed to be high SNR consistent. Numerical simulations in Section \rom{7} indicates that PE performance of RRT-OMP at high SNR is  better than that of $\text{OMP}_{\sigma^2}$.}
\end{remark}
\subsection{Impact of extended recovery in TF-OMP/RRT-OMP.} 
Both TF-IGP and RRT-IGP frameworks try to estimate $k_*=\min\{k:\hat{\mathcal{I}}_k\supseteq \mathcal{I}\}$, i.e., the smallest superset generated by the IGP solution path.  So far we have considered the case when $k_*=k_0$ in which case TF-OMP/RRT-OMP try to estimate  $\mathcal{I}$ directly. We next  consider the case  when $k_*>k_0$, a situation referred to as extended recovery\cite{extra,prateek} in literature.  Lemma \ref{lemma:extended} summarizes the latest results on extended recovery for  OMP. 
\begin{lemma}\label{lemma:extended}
  OMP can recover any $k_0$ sparse signal in $2k_0$ iterations whenever $\delta_{4k_0}\leq 0.2 $ or in $3k_0$ iterations whenever $\delta_{5k_0}\leq 0.33$\cite{prateek}.
 \end{lemma}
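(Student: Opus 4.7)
Since this lemma is imported verbatim from \cite{prateek}, the natural plan is to recreate the argument used there rather than to invent a new route. Extended recovery proofs for OMP generically proceed by a potential-function argument on the residual, combined with careful RIP bookkeeping on subsets whose size can grow as large as $2k_0 + k_0 = 3k_0$ (or $3k_0 + k_0 = 4k_0$), which is precisely why RIP orders $4k_0$ and $5k_0$ appear in the two statements. My plan is to mirror that structure.

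The first step would be to set up the noiseless inductive state at iteration $k$. Write $\hat{\mathcal{I}}_k$ for OMP's support after $k$ iterations and $T_k = \mathcal{I}\setminus\hat{\mathcal{I}}_k$ for the as-yet-unrecovered true indices. In the noiseless regime the residual equals ${\bf r}^{(k)} = ({\bf I}_n-{\bf P}_{\hat{\mathcal{I}}_k}){\bf X}_{T_k}\boldsymbol{\beta}_{T_k}$, since all indices in $\hat{\mathcal{I}}_k \cap \mathcal{I}$ have been explained away by the orthogonal projection and the false indices in $\hat{\mathcal{I}}_k\setminus \mathcal{I}$ contribute nothing to ${\bf X}\boldsymbol{\beta}$. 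The set $\hat{\mathcal{I}}_k \cup \mathcal{I}$ has size at most $k + k_0$; for $k\leq 2k_0$ this is $\leq 3k_0$, and the forthcoming selection step will push us into submatrices of size $4k_0$ (resp.\ $5k_0$ for the $k=3k_0$ regime), which fixes the order of $\delta$ needed.

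Next I would quantify OMP's one-step progress. Using part (e) of Lemma \ref{eigenvalue} on $\|{\bf r}^{(k)}\|_2$ gives a lower bound of the form $(1-\delta_{3k_0})\|\boldsymbol{\beta}_{T_k}\|_2^2$ while part (d) controls the cross correlations between columns in $\hat{\mathcal{I}}_k$ and columns outside. The greedy criterion $t^{k+1}=\arg\max_t |{\bf X}_t^T {\bf r}^{(k)}|$ then forces $|{\bf X}_{t^{k+1}}^T{\bf r}^{(k)}| \geq \|{\bf X}_{T_k}^T{\bf r}^{(k)}\|_\infty$, and Lemma \ref{eigenvalue}(b)-(d) turn this into a statement of the form
\begin{equation}
\|{\bf r}^{(k+1)}\|_2^2 \leq \rho(\delta)\,\|{\bf r}^{(k)}\|_2^2,
\end{equation}
where $\rho(\delta)<1$ whenever $\delta$ sits below the stated thresholds. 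Iterating this contraction for $2k_0$ (resp.\ $3k_0$) steps, together with a counting argument that each step either removes an index from $T_k$ or contracts the residual by $\rho$, leads to $T_{2k_0}=\emptyset$ (resp.\ $T_{3k_0}=\emptyset$), which is exact recovery.

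The delicate point, and in my view the main obstacle, is pinning down the sharp constants $0.2$ and $0.33$. Loose concatenation of Lemma \ref{eigenvalue}(b)-(e) easily yields some thresholds, but to hit the advertised numbers one has to carry out the RIP manipulations in the tightest possible form: the analysis in \cite{prateek} splits the residual into its true-index and false-index contributions and optimises over the worst case number of false picks $|\hat{\mathcal{I}}_k\setminus \mathcal{I}|$ that OMP can afford before each remaining true index becomes the greedy maximiser. I would follow that case split, keeping the ratio $\boldsymbol{\beta}_{\max}/\boldsymbol{\beta}_{\min}$ out of the argument by working with $\ell_2$ quantities $\|\boldsymbol{\beta}_{T_k}\|_2$, and verify that the per-step contraction constants $\rho$ stay below $1$ at the claimed RIP levels. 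The $3k_0$ variant is weaker in its threshold because the extra budget of $k_0$ iterations relaxes the requirement on per-step progress, which is why $\delta_{5k_0}\leq 0.33$ suffices; this is the same trade-off one sees throughout the extended-support literature.
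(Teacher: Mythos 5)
The paper does not prove this lemma at all: it is imported wholesale from \cite{prateek}, so there is no in-paper argument to compare yours against. Judged on its own terms, your proposal correctly identifies the generic shape of extended-recovery proofs (noiseless residual ${\bf r}^{(k)}=({\bf I}_n-{\bf P}_{\hat{\mathcal{I}}_k}){\bf X}_{T_k}\boldsymbol{\beta}_{T_k}$, RIP orders driven by $|\hat{\mathcal{I}}_k\cup\mathcal{I}\cup\{t^{k+1}\}|$, greedy selection turned into quantitative progress via Lemma~\ref{eigenvalue}), but it is a plan rather than a proof, and the two places where it stops short are exactly the places where the lemma lives or dies.

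First, the uniform contraction $\|{\bf r}^{(k+1)}\|_2^2\leq\rho(\delta)\|{\bf r}^{(k)}\|_2^2$ with a fixed $\rho<1$ cannot, by itself, yield \emph{exact} recovery of \emph{any} $k_0$-sparse signal in exactly $2k_0$ iterations: a geometric decay never reaches zero in finitely many steps, and since the lemma places no restriction on $\boldsymbol{\beta}_{min}$ or $DR(\boldsymbol{\beta})$ there is no fixed floor below which the residual forces $T_k=\emptyset$. The arguments in the literature (Zhang's $30k_0$ result and the sharpened constants in \cite{prateek}) avoid this with an amortized potential analysis over subsets of the remaining coefficients, showing that after a bounded number of iterations the energy attributable to each block of coefficients is eliminated, not merely shrunk; your "each step either removes an index or contracts the residual" gesture is pointing at this but does not carry it out, and without it the argument fails for signals with large dynamic range. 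Second, the constants $0.2$ and $0.33$ are the entire content of the statement beyond "extended recovery is possible," and your proposal explicitly defers their verification. As written, the proposal would establish at best a qualitative version of the lemma with unspecified thresholds and an unspecified (possibly signal-dependent) iteration count, which is materially weaker than the claim being cited.
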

The requirement $\delta_{4k_0}\leq 0.2$ for extended recovery is much weaker than the condition $\delta_{k_0+1}\leq 1/{\sqrt{k_0+1}}$ required for exact recovery\cite{extra}. There is also a qualitative difference between these two conditions. For a random matrix ${\bf X}_{i,j}\overset{i.i.d}{\sim}\mathcal{N}(0,1)$, $\delta_{ck}<a$ will hold true with a high probability whenever $n=O\left(\frac{ck}{a^2}\log(\frac{p}{k})\right)$. Hence, for $\delta_{4k_0}<0.2$, one need only $n=O\left(k_0\log(p)\right)$ measurements, whereas, for exact recovery, i.e., $\delta_{k_0+1}<1/\sqrt{k_0+1}$, one need a significantly higher $n=O\left(k_0^2\log(p)\right)$ measurements. Hence, for a fixed $n$, the range of $k_0$ that allow for extended recovery is much higher than that of exact recovery. Without loss of generality,  we focus on the condition $\delta_{4k_0}\leq 0.2$ which ensures that $k_*\leq 2k_0$.   Extended recovery results are available only for noiseless  data.  However, one can conjecture that these results hold true for noisy case also as long as SNR is sufficiently high, i.e., $\exists \epsilon_{sup}>0$ such that  $\epsilon_2<\epsilon_{sup}$ implies $k_*\leq 2k_0$.

 Consider a sparsity level $k_0$ where exact support recovery condition for OMP, i.e.,  $\delta_{k_0+1}<1/\sqrt{k_0+1}$ does not hold and extended recovery condition $\delta_{4k_0}<0.2$ hold true. From the difference in the scaling rules of both these conditions, i.e.,  $n=O(k_0^2\log(p))$ and $n=O\left(k_0\log(p)\right)$, it is true that for many matrices such sparsity regimes exist.  Also assume that the support $\mathcal{I}$ with $|\mathcal{I}|=k_0$ is sampled uniformly from $[p]$. {This sparsity regime implies that for many signals $\boldsymbol{\beta}\in \mathcal{B}_1$, the support estimate returned by $\text{OMP}_{k_0}$, i.e., $\hat{\mathcal{I}}_{k_0}=\text{OMP}({\bf y},{\bf X},k_0)$  misses some indices, i.e., $|\mathcal{I}/\hat{\mathcal{I}}_{k_0}|>0$, whereas, for some signals $\boldsymbol{\beta}\in \mathcal{B}_2$, $\text{OMP}_{k_0}$ returns the correct estimate.}  For  signals where $\text{OMP}_{k_0}$ gives erroneous output, i.e., $\boldsymbol{\beta}\in \mathcal{B}_1$, the following bound is proved  in APPENDIX E.
\begin{equation}\label{mismatch1}
\|\boldsymbol{\beta}-\hat{\boldsymbol{\beta}}\|_2{\geq}(1-\frac{\delta_{2k_0}}{1-\delta_{k_0}})\boldsymbol{\beta}_{min}-\frac{\epsilon_2}{\sqrt{1-\delta_{k_0}}}, \forall \epsilon_2>0.
\end{equation}
For the same signal $\boldsymbol{\beta}\in \mathcal{B}_1$,  consider the case with RRT-OMP when $\epsilon_2\leq \min(\epsilon_{sup},\epsilon_{RRT})$. At this SNR level, $\delta_{4k_0}\leq 0.2$ implies that  $k_*\leq 2k_0$ and RRT-OMP detects $k_*$ accurately, i.e., $\hat{k}_{RRT}=k_*$ and the support estimate $\hat{\mathcal{I}}_{\hat{k}_{RRT}}$ satisfies $|\hat{\mathcal{I}}_{\hat{k}_{RRT}}|\leq 2k_0$ and $\mathcal{I}\subset \hat{\mathcal{I}}_{\hat{k}_{RRT}}$ (Theorem \ref{thm-rrt}). From a)  and c) of Lemma 1 and $\hat{k}_{RRT}=k_*\leq 2k_0$, we have
\begin{equation}\label{match}
\begin{array}{ll}
\|\boldsymbol{\beta}-\hat{\boldsymbol{\beta}}\|_2&=\|\boldsymbol{\beta}_{\hat{\mathcal{I}}_{k_*}}-{\bf X}_{\hat{\mathcal{I}}_{k_*}}^{\dagger}({\bf X}_{\hat{\mathcal{I}}_{k_*}}\boldsymbol{\beta}_{\hat{\mathcal{I}}_{k_*}}+{\bf w})\|_2 \\
&=\|{\bf X}_{\hat{\mathcal{I}}_{k_*}}^{\dagger}{\bf w}\|_2\leq  \dfrac{\epsilon_2}{\sqrt{1-\delta_{2k_0}}}, 
\end{array}
\end{equation} 
$\forall \epsilon_2\leq \min(\epsilon_{sup},\epsilon_{RRT})$. As SNR increases, the error in the $\text{OMP}_{k_0}$ estimate for signals $\boldsymbol{\beta}\in \mathcal{B}_1$ satisfies $\|\boldsymbol{\beta}-\hat{\boldsymbol{\beta}}\|_2\gtrsim (1-\frac{\delta_{2k_0}}{1-\delta_{k_0}})\boldsymbol{\beta}_{min}$, whereas, the error in RRT-OMP estimate (\ref{match}) converges to zero. Next consider signals $\boldsymbol{\beta}\in \mathcal{B}_2$ for which  $\text{OMP}_{k_0}$ returns correct support, i.e., $k_*=k_0$. For $\boldsymbol{\beta}\in \mathcal{B}_2$, RRT-OMP also identify the true support correctly, whenever $\epsilon_2\leq \min(\epsilon_{sup},\epsilon_{RRT})$. Following (\ref{match}), the error  $\|\boldsymbol{\beta}-\hat{\boldsymbol{\beta}}\|_2$ for $\boldsymbol{\beta}\in \mathcal{B}_2$  at high SNR for both $\text{OMP}_{k_0}$ and RRT-OMP satisfies $\|\boldsymbol{\beta}-\hat{\boldsymbol{\beta}}\|_2\leq\frac{\epsilon_2}{\sqrt{1-\delta_{k_0}}}$, $\forall \epsilon_2\leq \min(\epsilon_{sup},\epsilon_{RRT})$. This error converges to zero as SNR increases.   Hence, when the supports are randomly sampled, $\text{OMP}_{k_0}$ suffers from error floors at high SNR due to the irrecoverable signals $\boldsymbol{\beta}\in \mathcal{B}_1$, whereas, RRT-OMP recover all signals and does not suffer from error floors.  This analysis explains why RRT-OMP outperform $\text{OMP}_{k_0}$ in certain SNR sparsity regimes (Figures 4 and 6 in Section \rom{7}.)
\begin{remark} Unlike $\text{OMP}_{k_0}$ where the number of iterations are fixed \textit{a priori}, iterations in $\text{OMP}_{\epsilon_2}$ or $\text{OMP}_{\sigma^2}$  does not stop until $\|{\bf r}^{(k)}\|_2\leq\epsilon_2$ or $\|{\bf r}^{(k)}\|_2\leq\sigma\sqrt{n+2\sqrt{n\log(n)}}$. Hence, the iterations in $\text{OMP}_{\epsilon_2}$ and $\text{OMP}_{\sigma^2}$ can go beyond $k_0$  until all the entries in $\boldsymbol{\beta}$ are selected, i.e.,  $\text{OMP}_{\epsilon_2}$ and $\text{OMP}_{\sigma^2}$ can automatically adjust to extended recovery. This explains why $\text{OMP}_{\sigma^2}$ and  performs better than $\text{OMP}_{k_0}$ in  Section \rom{7}.
\end{remark}
\section{Numerical Simulations}
In this section, we numerically evaluate the performance of TF-IGP/RRT-IGP  and provide insights into the  strengths and shortcomings of the same. Due to space constraints, simulation results are provided only for variants of OMP. Exactly similar inferences can be made for OLS too.  For satisfactory  asymptotic performance,  the user defined parameter $c$ in PaTh(OMP)  has to satisfy $c>1$.  For finite dimensional problems, a choice of $0.5<c<1.5$ is recommended\cite{vats2014path}. Hence, we set the parameter $c=1.1$. RRT-OMP uses $\Gamma^{lb-tr}_{Alg}$ in TABLE \ref{tab:rrt} with $N_{tr}=1000$.   All results in Fig.3-7  are computed after performing $10^4$ iterations. 
\subsection{Matrix and Signal Models.} 
We considered two matrix models in our simulations. One is the usual Gaussian random matrix with \textit{i.i.d} $\mathcal{N}(0,1)$ and $l_2$ normalized columns. These matrices are independently generated in each iteration. The second matrix we consider is the structured matrix formed by the concatenation of two orthonormal matrices,  ${\bf I}_n$ and $n\times n$ Hadamard matrix ${\bf H}_n$, i.e., ${\bf X}=[{\bf I}_n,{\bf H}_n]$. ${\bf X}=[{\bf I}_n,{\bf H}_n]$ guarantees exact recovery of all signals by $\text{OMP}_{k_0}$ with $k_0\leq 1/\sqrt{n}$\cite{elad_book}.
We consider two signal models for simulations. One is the uniform signal model where all non zero entries  of $\boldsymbol{\beta}$ are sampled randomly from $\{1,-1\}$.  Second   is the random signal model where the non zero entries are  sampled independently from a $\mathcal{N}(0,1)$ distribution, i.e., $\boldsymbol{\beta}_j\overset{i.i.d}{\sim}\mathcal{N}(0,1)$ for $j \in \mathcal{I}$. Random signal model exhibits a very high $DR(\boldsymbol{\beta})$, whereas, the uniform signal model exhibits $DR(\boldsymbol{\beta})=1$. \squeezeup
\subsection{ Small sample performance. }
In this section, we evaluate the performance of algorithms when $n,p$ and $k_0$ are small.  For the uniform signal model in Fig.\ref{fig.Had32_MSE}, the performance of TF-OMP and RRT-OMP matches the performance of $\text{OMP}_{\sigma^2}$ throughout the SNR range and $\text{OMP}_{k_0}$ right from SNR=10dB.  PaTh(OMP) suffers from severe error floors. For the random signal model, the performance of TF-OMP deteriorates significantly because of the high $DR(\boldsymbol{\beta})$, whereas, RRT-OMP performs very close to that of $\text{OMP}_{\sigma^2}$ and $\text{OMP}_{k_0}$. In both signal models, RRT-OMP performs  significantly better than PaTh(OMP) which is also SNO. Unlike the  uniform signal model, the performance of PaTh(OMP) is much better in random signal model. A similar trend is visible in  Fig.\ref{fig.random32_MSE} except that $\text{OMP}_{k_0}$ exhibit NMSE floors when the signal is uniform and   TF-OMP/RRT-OMP outperforms $\text{OMP}_{k_0}$  and matches the performance of $\text{OMP}_{\sigma^2}$. These results can be explained by the reasoning given in Section \rom{6}.C. 

{In Fig.\ref{fig.PE}, we present the support recovery performance of algorithms. From the left side of Fig.\ref{fig.PE}, one can see that the PE of $\text{OMP}_{k_0}$ and TF-OMP decreases with increasing SNR, whereas, $\text{OMP}_{\sigma^2}$ and PaTh(OMP) suffer error floors. RRT-OMP also exhibits PE flooring at high SNR at a PE level much lower than that of $\text{OMP}_{\sigma^2}$. This is explained in Remark \ref{PE_floor}.  Note that ${\bf X}=[{\bf I}_{32},{\bf H}_{32}]$ guarantees exact support recovery whenever $k_0\leq 3$. However, there exist a non zero probability that a $32\times 64$  random matrix fails to satisfy the RIC condition required for exact recovery. This explains the  PE floors at high SNR  in the R.H.S of Fig.\ref{fig.PE}. Nevertheless, the PE of RRT-OMP and TF-OMP matches that of $\text{OMP}_{k_0}$ and $\text{OMP}_{\sigma^2}$.}
\begin{figure}[htb]
\includegraphics[width=\columnwidth]{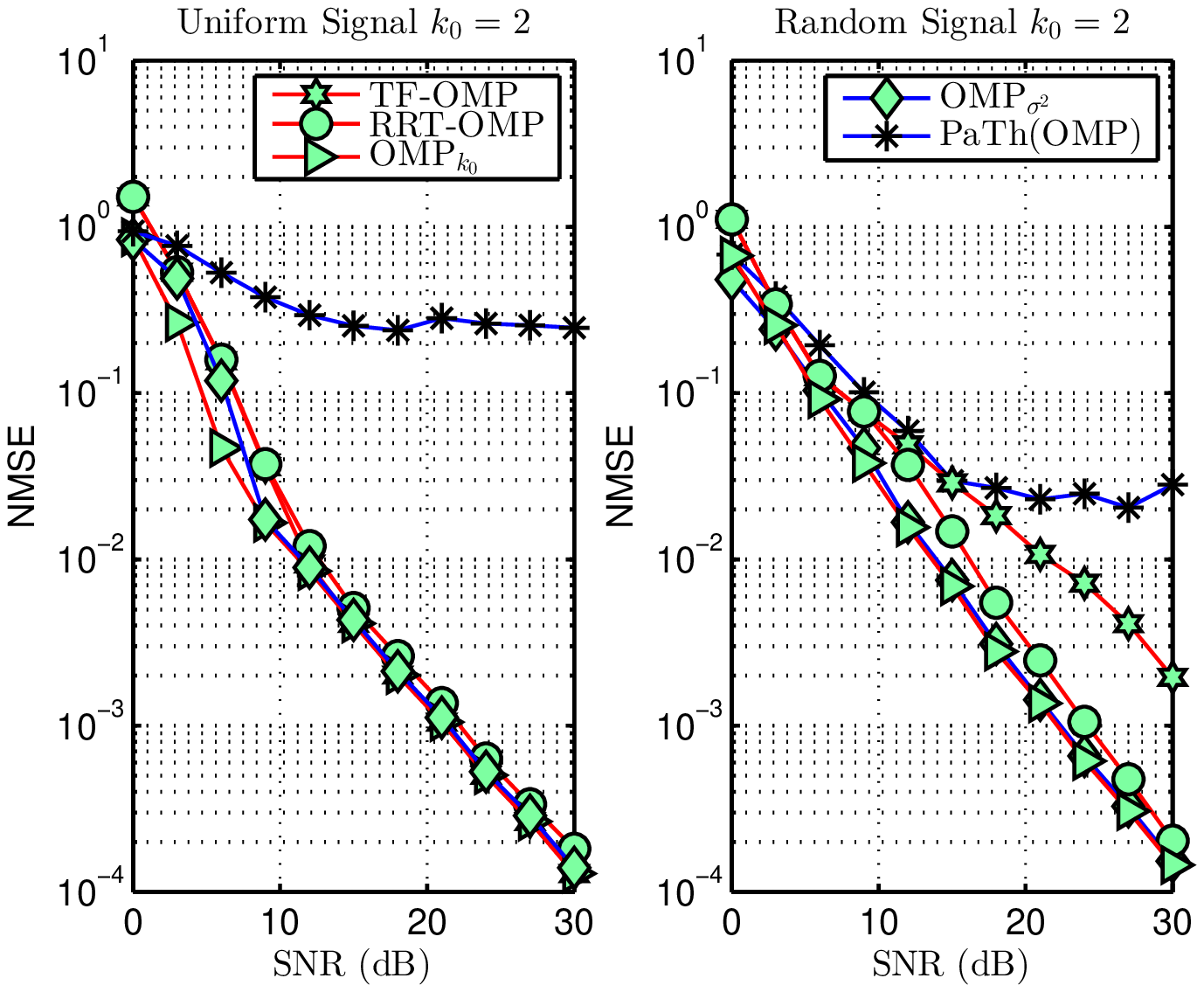}
\caption{NMSE performance: ${\bf X}=[{\bf I}_{16},{\bf H}_{16}]$ and $k_0=2$. }
\label{fig.Had32_MSE}
\end{figure}
\begin{figure}[htb]
\includegraphics[width=\columnwidth]{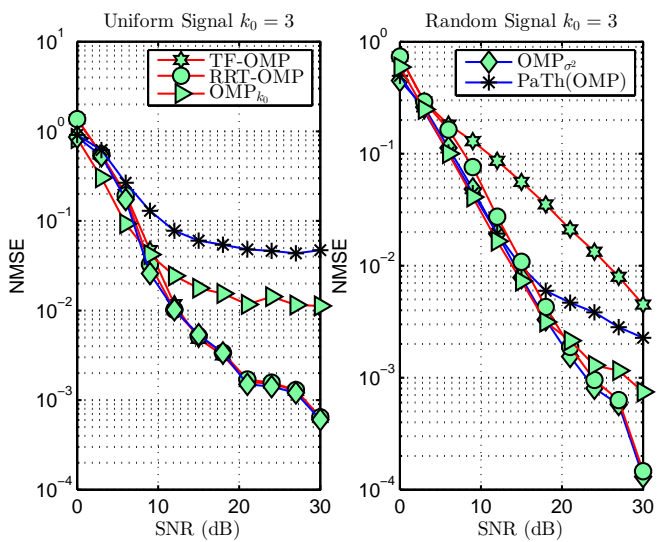}
\caption{NMSE performance: ${\bf X}_{32\times 64}$ is random and $k_0=3$. }
\label{fig.random32_MSE}
\end{figure}
\begin{figure}
\includegraphics[width=\columnwidth]{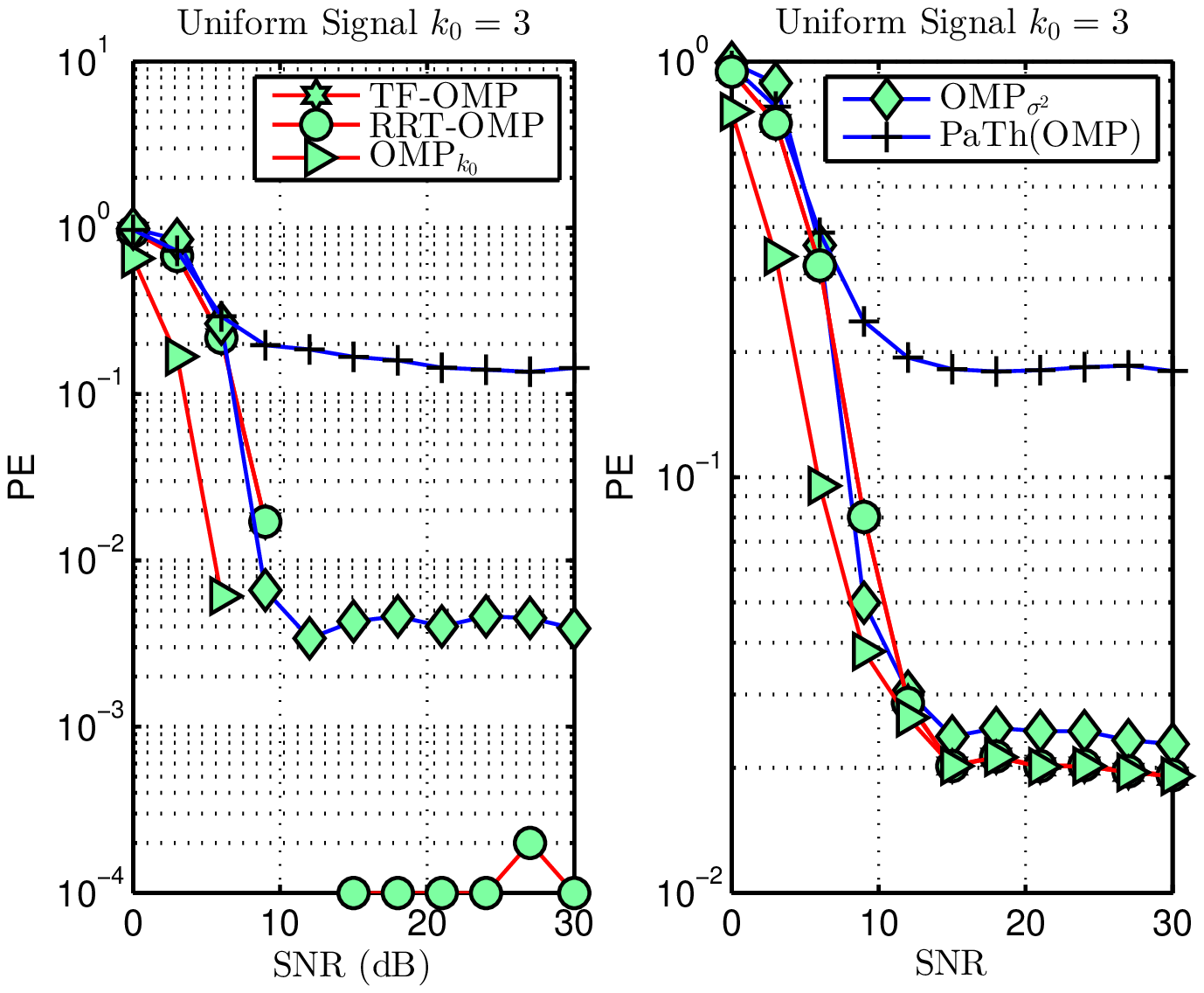}
\caption{PE: (left) ${\bf X}=[{\bf I}_{32},{\bf H}_{32}]$. (right)  ${\bf X}_{32 \times 64}$ random. }
\label{fig.PE}
\end{figure}
\subsection{ Large sample performance.}
Next we consider the performance of algorithms in Fig.\ref{fig.n200_unif} when $n$ and $p$ are high and signal is uniform.  From Fig.\ref{fig.n200_unif}, one can see that at low SNR (SNR=10dB), both TF-OMP and RRT-OMP outperforms $\text{OMP}_{\sigma^2}$ and  $\text{OMP}_{k_0}$, whereas at high SNR,  TF-OMP, RRT-OMP and $\text{OMP}_{\sigma^2}$ outperforms $\text{OMP}_{k_0}$. The performance of PaTh(OMP) follows that of RRT-OMP at low $k_0$, however, the performance of PaTh(OMP)  deteriorates significantly as $k_0$ increases. We next consider the performance of algorithms in Fig.\ref{fig.had512_rand} where $n$ and $p$ are high and signal is random.  TF-OMP performs  badly due to high $DR(\boldsymbol{\beta})$. RRT-OMP performs worse than PaTh(OMP), $\text{OMP}_{k_0}$ and $\text{OMP}_{\sigma^2}$ when SNR is low, i.e., 10dB and $k_0$ is high. When $k_0$ is low, RRT-OMP performs as good as $\text{OMP}_{k_0}$. At the  moderate 20dB SNR, RRT-OMP performs similar to or better than PaTh(OMP),  $\text{OMP}_{k_0}$ and $\text{OMP}_{\sigma^2}$ at all values of $k_0$. 
\begin{figure}[htb]
\includegraphics[width=\columnwidth]{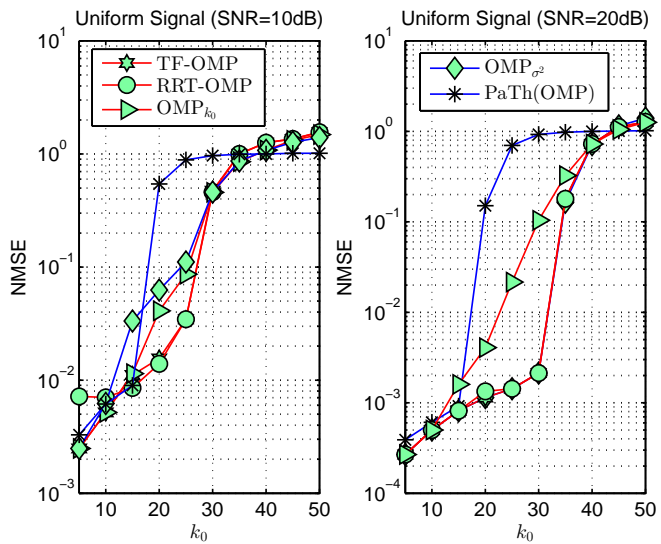}
\caption{NMSE performance: ${\bf X}$ random. $n=200$, $p=1000$.  }
\label{fig.n200_unif}
\end{figure}

\begin{figure}[htb]
\includegraphics[width=\columnwidth]{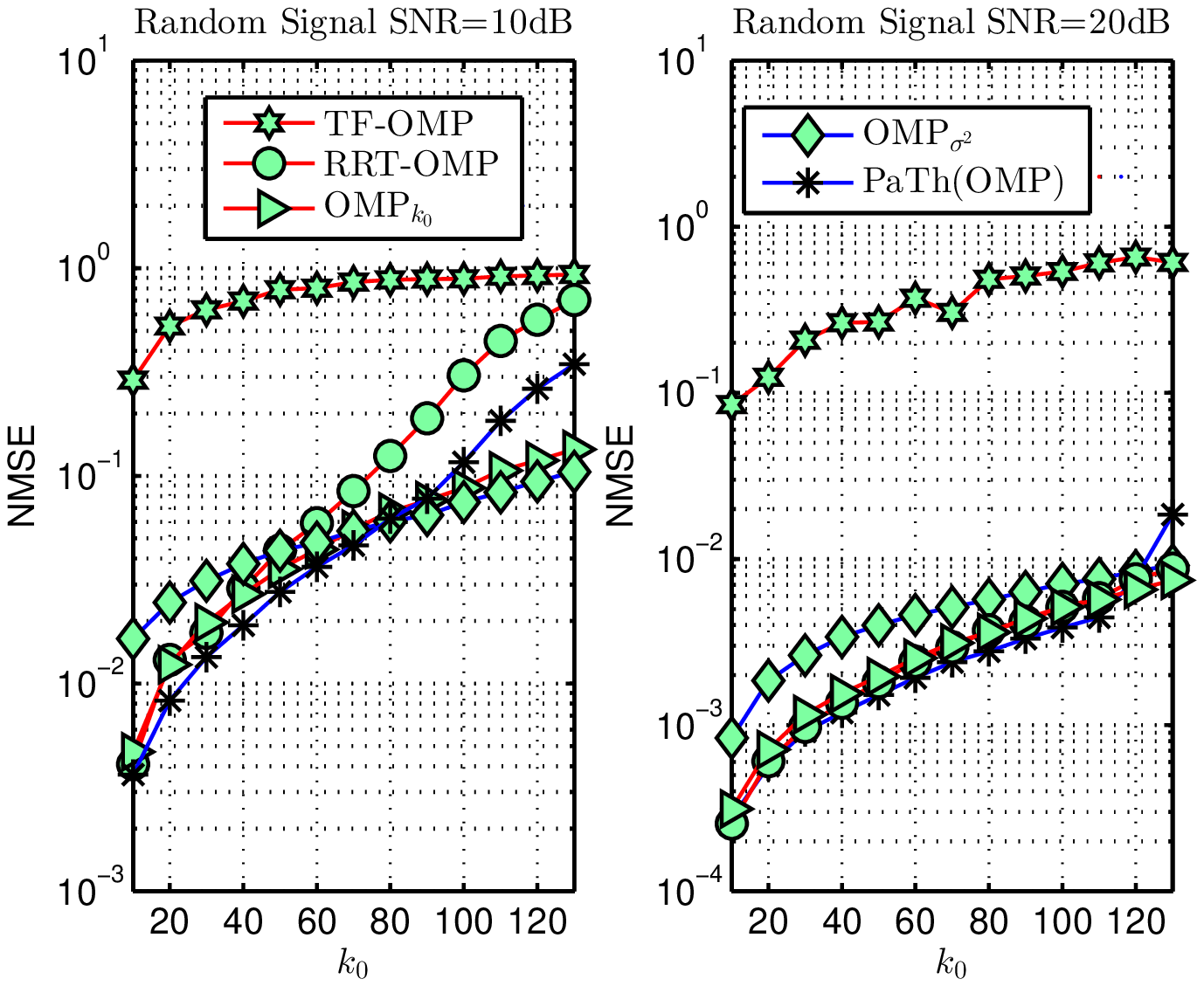}
\caption{NMSE performance ${\bf X}=[{\bf I}_{512},{\bf H}_{512}]$.  }
\label{fig.had512_rand}
\end{figure}

To summarize, the performance of TF-OMP and RRT-OMP are similar to or better than $\text{OMP}_{k_0}$ and  $\text{OMP}_{\sigma^2}$ at all SNR and significantly better than that of PaTh(OMP) when the signal model is uniform. When the signal model is random, the performance of TF-OMP degrades significantly, whereas, the performance of RRT-OMP closely matches the performance of $\text{OMP}_{k_0}$ and  $\text{OMP}_{\sigma^2}$ except when both $k_0$ is high and SNR is low.
\section{Conclusion and Future Research}
This article developed  two novel  frameworks  to achieve efficient sparse recovery using OMP and OLS algorithms when both sparsity $k_0$ and noise statistics $\{\sigma^2,\epsilon_2\}$ are unavailable. The performance of this framework is analysed both analytically and numerically.   The broader area of CS involves  many  scenarios other than the linear regression model considered in this article.  However, most CS algorithms involve tuning parameters that depends on nuisance parameters like $\sigma^2$ which are difficult to estimate. Hence, it is of tremendous importance to develop  SNO  and computationally efficient  algorithms like TF-IGP/RRT-IGP  for other CS applications also.
\section*{Appendix A:Proof of Theorem 1.}
\begin{proof}
{Reiterating, $k_*=\min\{k:\mathcal{I}\subseteq \hat{\mathcal{I}}_k\}$, where $\hat{\mathcal{I}}_k=Alg({\bf y},{\bf X},k)$ is the support estimate returned by `Alg' at sparsity level $k$.   $k_{max}$ is the maximum sparsity level of `Alg' in TABLE \ref{tab:tf-omp}. $RR(k)$ for $k\in \{k_*+1,\dotsc,k_{max}\}$ satisfies $RR(k)=\frac{\|({\bf I}_n-{\bf P}_{\hat{\mathcal{I}_k}}){\bf w}\|_2}{\|({\bf I}_n-{\bf P}_{\hat{\mathcal{I}}_{k-1}}){\bf w}\|_2}$, where ${\bf w}\sim \mathcal{N}(0,\sigma^2{\bf I}_n)$. Dividing both numerator and denominator of $RR(k)$ by $\sigma$ gives $RR(k)= \frac{\|({\bf I}_n-{\bf P}_{\hat{\mathcal{I}_k}}){\bf z}\|_2}{\|({\bf I}_n-{\bf P}_{\hat{\mathcal{I}}_k}){\bf z}\|_2}$, where ${\bf z}={\bf w}/\sigma\sim \mathcal{N}({\bf 0}_n,{\bf I}_n)$. B1) of Theorem \ref{thm:Beta} states that  $\mathbb{P}(\underset{k_*+1\leq k\leq k_{max}}{\min}RR(k)<\Gamma_{RRT}^{\alpha})\leq \alpha$. Here, $\Gamma_{RRT}^{\alpha}=\underset{k=1,\dotsc,k_{max}}{\min}\sqrt{F_{\frac{n-k}{2},0.5}^{-1}\left(\dfrac{\alpha}{k_{max}(p-k+1)}\right)}>0$ and $F^{-1}_{a,b}(x)$ is the inverse CDF of a $\mathbb{B}(a,b)$ R.V. Note that the Beta R.V is  bounded away from zero in probability. This implies that $\forall \alpha>0$, $F_{\frac{n-k}{2},0.5}^{-1}\left(\dfrac{\alpha}{k_{max}(p-k+1)}\right)>0$ and  $\Gamma_{RRT}^{\alpha}>0$. Hence,  B2) of Theorem 1 is a direct consequence of B1). Next we prove B1) of Theorem 1.}

    First of all note that $k_*$, $\hat{\mathcal{I}}_k$,   $RR(k)$ etc. are all R.V with unknown distribution. The proof of B1) follows by lower bounding  $RR(k)$ for $k>k_*$  using R.V with known distributions. We first consider the behaviour of $RR(k)$ when  signal $\boldsymbol{\beta}={\bf 0}_p$ ($\mathcal{I}=\phi$) in which case $k_*=1$.  Later we generalize the result to the case $\boldsymbol{\beta}\neq {\bf 0}_p$ in which case $k_*\geq k_0=|\mathcal{I}|$.  The crux of the proof is the following Lemma proved in (41) of \cite{tsp} using Result 5.3.7 of \cite{ravishanker2001first}.  
\begin{lemma}\label{beta}
Consider a  deterministic sequence of projection matrices $\{{\bf P}_k\}_{k=1}^{k_{max}}$ of rank $k$ projecting onto an increasing sequence of subspaces $\mathcal{S}_1\subset \mathcal{S}_2\dotsc \subset\mathcal{S}_{k_{max}}$. Then $\dfrac{\|({\bf I}_n-{\bf P}_k){\bf z}\|_2^2}{\|({\bf I}_n-{\bf P}_{k-1}){\bf z}\|_2^2}\sim  \mathbb{B}(\dfrac{n-k}{2},\dfrac{1}{2})$ whenever ${\bf z} \sim \mathcal{N}({\bf 0}_n,{\bf I}_n)$. 
\end{lemma}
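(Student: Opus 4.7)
The plan is to exploit the nested structure $\mathcal{S}_{k-1}\subset \mathcal{S}_k$ to decompose the denominator of the ratio into two orthogonal pieces, identify the pieces as independent chi-squares, and then invoke the standard representation of the Beta distribution as the ratio $X/(X+Y)$ of independent chi-squares.

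First I would record the elementary projection identity
\[
{\bf I}_n-{\bf P}_{k-1}=({\bf I}_n-{\bf P}_k)+({\bf P}_k-{\bf P}_{k-1}),
\]
noting that since $\mathcal{S}_{k-1}\subset \mathcal{S}_k$ the difference ${\bf P}_k-{\bf P}_{k-1}$ is itself an orthogonal projection (onto the one-dimensional complement of $\mathcal{S}_{k-1}$ inside $\mathcal{S}_k$), and that its range is orthogonal to the range of ${\bf I}_n-{\bf P}_k$. Consequently
\[
\|({\bf I}_n-{\bf P}_{k-1}){\bf z}\|_2^2=\|({\bf I}_n-{\bf P}_k){\bf z}\|_2^2+\|({\bf P}_k-{\bf P}_{k-1}){\bf z}\|_2^2,
\]
which lets me rewrite the ratio of interest as $U/(U+V)$ with $U=\|({\bf I}_n-{\bf P}_k){\bf z}\|_2^2$ and $V=\|({\bf P}_k-{\bf P}_{k-1}){\bf z}\|_2^2$.

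Next I would identify the marginal laws of $U$ and $V$ and establish their independence. Because ${\bf z}\sim\mathcal{N}({\bf 0}_n,{\bf I}_n)$ is rotationally invariant, for any orthogonal projection ${\bf Q}$ of rank $r$, $\|{\bf Q}{\bf z}\|_2^2\sim\chi^2_r$; applied to ${\bf I}_n-{\bf P}_k$ (rank $n-k$) and ${\bf P}_k-{\bf P}_{k-1}$ (rank $1$), this gives $U\sim\chi^2_{n-k}$ and $V\sim\chi^2_1$. For independence, I would note that $({\bf I}_n-{\bf P}_k){\bf z}$ and $({\bf P}_k-{\bf P}_{k-1}){\bf z}$ are jointly Gaussian with cross-covariance $({\bf I}_n-{\bf P}_k)({\bf P}_k-{\bf P}_{k-1})={\bf 0}$, so these vectors are independent and hence so are their squared norms.

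Finally I would invoke the standard fact that if $U\sim\chi^2_a$ and $V\sim\chi^2_b$ are independent, then $U/(U+V)\sim\mathbb{B}(a/2,b/2)$; this is precisely Result 5.3.7 of \cite{ravishanker2001first} and follows from writing $U/2$ and $V/2$ as independent Gamma variates with shape parameters $a/2$ and $b/2$. Substituting $a=n-k$ and $b=1$ yields the claimed $\mathbb{B}\!\left(\tfrac{n-k}{2},\tfrac12\right)$ law. The proof is essentially bookkeeping; the only subtle point is verifying that ${\bf P}_k-{\bf P}_{k-1}$ is genuinely a projection of rank one with range orthogonal to $\mathcal{S}_k^{\perp}$, which uses the nestedness hypothesis $\mathcal{S}_{k-1}\subset\mathcal{S}_k$ in an essential way—absent this, the identity $({\bf I}_n-{\bf P}_k)({\bf P}_k-{\bf P}_{k-1})={\bf 0}$ need not hold and the ratio need not be Beta distributed.
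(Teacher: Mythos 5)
Your proof is correct and follows essentially the route the paper indicates: the paper does not prove Lemma~\ref{beta} itself but cites it as established in \cite{tsp} via Result 5.3.7 of \cite{ravishanker2001first}, which is precisely the independent chi-square-to-Beta representation you invoke after the orthogonal decomposition $\|({\bf I}_n-{\bf P}_{k-1}){\bf z}\|_2^2=\|({\bf I}_n-{\bf P}_k){\bf z}\|_2^2+\|({\bf P}_k-{\bf P}_{k-1}){\bf z}\|_2^2$. Your verification that ${\bf P}_k-{\bf P}_{k-1}$ is a rank-one projection orthogonal to ${\bf I}_n-{\bf P}_k$, and that the two Gaussian components are therefore independent, fills in the bookkeeping correctly.
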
 
\subsection{Producing a lower bound for $RR(k)$ when $\boldsymbol{\beta}= {\bf 0}_p$.} 
 First consider running $Alg$ when $\boldsymbol{\beta}= {\bf 0}_p$, i.e., input of `Alg' is ${\bf y}={\bf w}$. By  the definition of $k_{max}$, all sub matrices ${\bf X}_{\hat{\mathcal{I}}_k}$ are full rank and hence  the projection matrices ${\bf P}_{\hat{\mathcal{I}}_k}$ have rank $k$. Further, by  the monotonicity of supports in IGP, ${\bf P}_{\hat{\mathcal{I}}_k}$  are projecting onto an increasing sequence of subspaces. However, the projection matrices in IGP are not fixed \textit{a priori}. Rather,  the indices are selected from the data itself making the exact computation of the distribution of $RR(k)$ extremely difficult. 

Consider the step $k-1$ of the IGP. Current support estimate $\hat{\mathcal{I}}_{k-1}$ is itself a R.V.  Let $\mathcal{L}_{k-1}\subseteq \{[p]/\hat{\mathcal{I}}_{k-1}\}$ represents the set of all all possible indices $l$ at stage $k-1$ such that ${\bf X}_{\hat{\mathcal{I}}_{k-1}\cup l}$ is full rank. Clearly,  $|\mathcal{L}_{k-1}|\leq p-|\mathcal{I}_{k-1}|=p-k+1$. Define the conditional R.V $Z_k^{l}|\hat{\mathcal{I}}_{k-1}=\frac{\|({\bf I}_n-{\bf P}_{\hat{\mathcal{I}}_{k-1}\cup l}){\bf z}\|_2^2}{\|({\bf I}_n-{\bf P}_{\hat{\mathcal{I}}k-1}){\bf z}\|_2^2}$ for $l \in \mathcal{L}_{k-1}$.  By Lemma \ref{beta},  $Z_k^{l}|\hat{\mathcal{I}}_{k-1}$  satisfies 
\begin{equation}
Z_k^{l}|\hat{\mathcal{I}}_{k-1} \sim \mathcal{B}\left(\frac{n-k}{2},\frac{1}{2}\right), \ \forall l \in \mathcal{L}_{k-1}\ \textit{and}\ k\geq 1.
\end{equation}
Since the index selected in the $k-1^{th}$ iteration belongs to  $\mathcal{L}_{k-1}$, it follows that conditioned on $\hat{\mathcal{I}}_{k-1}$,
\begin{equation}
\underset{l\in \mathcal{L}_{k-1}}{\min}\sqrt{Z_k^l|\hat{\mathcal{I}}_{k-1}}\leq RR(k). 
\end{equation}
Let $\delta_k=\sqrt{F_{\frac{n-k}{2},0.5}^{-1}\left(\frac{\alpha}{k_{max}(p-k+1)}\right)}$. By definition, $\Gamma_{RRT}^{\alpha}$  satisfies $\Gamma_{RRT}^{\alpha}=\underset{k}{\min}\ \delta_k\leq \delta_k,\forall k$. It follows that 
\begin{equation}\label{firstbound}
\begin{array}{ll}
\mathbb{P}(RR(k)<\delta_k|\hat{\mathcal{I}}_{k-1})&\leq \mathbb{P}(\underset{l\in \mathcal{L}_{k-1}}{\min}\sqrt{Z_k^l|\hat{\mathcal{I}}_{k-1}}<\delta_k) \\
& \overset{(a)}{\leq} \sum\limits_{l \in \mathcal{L}_{k-1}}\mathbb{P}({Z_k^l}|\hat{\mathcal{I}}_{k-1}<\delta_k^2)\overset{(b)}{\leq} \dfrac{\alpha}{k_{max}}
\end{array}
\end{equation}
(a) in Eqn.\ref{firstbound} follows from the union bound. By the definition of $\delta_k$, $\mathbb{P}({Z_k^l}<\delta_k^2)=\dfrac{\alpha}{k_{max}(p-k+1)}$. (b) follows from this and the fact that $|\mathcal{L}_{k-1}|\leq p-k+1$. Note that $\hat{\mathcal{I}}_{k-1}$ is a R.V itself. Let $\mathcal{S}_{k-1}$ represents the set of all possible values of $\hat{\mathcal{I}}_{k-1}$. Since, $\mathbb{P}(RR(k)<\delta_k|\hat{\mathcal{I}}_{k-1})\leq {\alpha}/{k_{max}}$ is independent of $\hat{\mathcal{I}}_{k-1}$, it follows from the law of total probability that
\begin{equation}\label{bound}
\begin{array}{ll}
\mathbb{P}(RR(k)<\delta_k)&=\sum\limits_{\tilde{\mathcal{I}}_{k-1} \in \mathcal{S}_{k-1}}\mathbb{P}(RR(k)<\delta_k|\tilde{\mathcal{I}}_{k-1})\mathbb{P}(\tilde{\mathcal{I}}_{k-1}) \\
&{\leq} \dfrac{\alpha}{k_{max}}\sum\limits_{\tilde{\mathcal{I}}_{k-1} \in \mathcal{S}_{k-1}}\mathbb{P}(\tilde{\mathcal{I}}_{k-1})=\dfrac{\alpha}{k_{max}}
\end{array}
\end{equation}
When $\boldsymbol{\beta}= {\bf 0}_p$, the  bound in (\ref{bound}) is valid for all $k\geq k_*= 1$. 
\subsection{Extension to the case when $\boldsymbol{\beta}\neq {\bf 0}_p$.}
When $\boldsymbol{\beta}\neq {\bf 0}_p$ and  $k\leq k_*$, numerator and denominator of $RR(k)=\frac{\|({\bf I}_n-{\bf P}_{\hat{\mathcal{I}_k}}){\bf y}\|_2}{\|({\bf I}_n-{\bf P}_{\hat{\mathcal{I}}_k}){\bf y}\|_2}$ contain signal terms. The presence of signal terms prevent the application of Lemma \ref{beta}. Hence, it is not necessary  that the bound (\ref{bound}) hold true for $k\leq k_*$. 
However, for $k>k_*$, the signal components ${\bf X}\boldsymbol{\beta}$ are vanished and $RR(k)$ returns to the form $RR(k)=\frac{\|({\bf I}_n-{\bf P}_{\hat{\mathcal{I}_{k}}}){\bf z}\|_2}{\|({\bf I}_n-{\bf P}_{\hat{\mathcal{I}}_{k-1}}){\bf z}\|_2}$, where ${\bf z}\sim\mathcal{N}({\bf 0}_n,{\bf I}_n)$. Hence, it is true that $\mathbb{P}(RR(k)<\delta_k)\leq \dfrac{\alpha}{k_{max}}$ for $k>k_*$ even when $\boldsymbol{\beta}\neq {\bf 0}_p$. It then follows that
\begin{equation}\label{finalbound}
\begin{array}{ll}
\mathbb{P}(\underset{k>k_*}{\min}RR(k)\leq \Gamma^{\alpha}_{RRT})&\overset{(a)}{\leq}\sum\limits_{k>k_*}\mathbb{P}(RR(k)\leq \Gamma^{\alpha}_{RRT})  \\
&\overset{(b)}{\leq}\sum\limits_{k>k_*} \mathbb{P}(RR(k)\leq \delta_k) \\&\overset{(c)}{\leq}  \dfrac{(k_{max}-k_*)\alpha}{k_{max}}\leq \alpha.
\end{array}
\end{equation}
(a) in (\ref{finalbound}) follows from the union bound, (b) follows from $\Gamma^{\alpha}_{RRT}=\underset{k=1,\dotsc,k_{max}}{\min}\delta_k$ and (c) follows from $\mathbb{P}(RR(k)<\delta_k)\leq \dfrac{\alpha}{k_{max}}$ for $k>k_*$. (\ref{finalbound}) proves B1) of  Theorem \ref{thm:Beta}. 
\end{proof}
 \section*{Appendix B: Proof of Theorem \ref{thm:tf}.}
\begin{proof}
{Theorem \ref{thm:tf} states that the  TF-IGP support estimate $\hat{\mathcal{I}}_{\hat{k}_{TF}}$, where $\hat{k}_{TF}=\underset{k}{\arg\min}\ RR(k)$  satisfies $\mathcal{I}\subseteq \hat{\mathcal{I}}_{\hat{k}_{TF}}$ with  $|\hat{\mathcal{I}}_{\hat{k}_{TF}}|\leq k_{sup}$, if $\|{\bf w}\|_2\leq \epsilon_2\leq \min(\epsilon_{sup},\epsilon_{sig},\epsilon_{\bf X})$. For this to happen, it is sufficient that E1)-E2) occurs simultaneously.  \\
E1).  $k_*=\min\{k: \mathcal{I}\subseteq\hat{\mathcal{I}}_k\}$ satisfies $k_*\leq k_{sup}$ and  \\
E2). $\hat{k}_{TF}=\underset{k}{\arg\min}\ RR(k)$ equals $k_*$. }\\
Assumption 1  implies that  the event E1) occurs if $\epsilon_2<\epsilon_{sup}$. Hence, it is sufficient to show that E2) is true, i.e., $RR(k)>RR(k_*)$ for $k<k_*$ and   $RR(k)>RR(k_*)$ for $k>k_*$ under the assumption that  the noise ${\bf w}$ satisfies $\epsilon_2 \leq \epsilon_{sup}$, i.e., E1) is true. First consider the condition  $RR(k)>RR(k_*)$ for $k<k_*$. The following bounds on $RR(k_*)$ and $RR(k)$ are derived in APPENDIX C. 
\begin{equation}\label{ub_tk0}
RR(k_*)=\dfrac{\|{\bf r}^{(k_*)}\|_2}{\|{\bf r}^{(k_*-1)}\|_2} \leq \dfrac{\epsilon_2}{\sqrt{1-\delta_{k_{sup}}}\boldsymbol{\beta}_{min}-\epsilon_2}, \forall \epsilon_2<\epsilon_{sup}. \ \ \ 
\end{equation}
\begin{equation}\label{lb_on_klessk0}
RR(k)\geq \dfrac{\sqrt{1-\delta_{k_{sup}}}\boldsymbol{\beta}_{min}-\epsilon_2}{\sqrt{1+\delta_{k_{sup}}}(\boldsymbol{\beta}_{max}+\boldsymbol{\beta}_{min})+\epsilon_2},\  \ \forall k<k_*\  \text{and}
\end{equation}
$\forall\epsilon_2<\epsilon_{sup}$. For $RR(k)>RR(k_*),\forall k<k_*$, it is sufficient  that the lower bound (\ref{lb_on_klessk0}) on $RR(k)$ for $k<k_*$ is larger than the upper bound (\ref{ub_tk0}) on $RR(k_*)$, i.e., 
\begin{equation}\label{ee1}
\dfrac{\sqrt{1-\delta_{k_{sup}}}\boldsymbol{\beta}_{min}-\epsilon_2}{\sqrt{1+\delta_{k_{sup}}}(\boldsymbol{\beta}_{max}+\boldsymbol{\beta}_{min})+\epsilon_2}\geq   \dfrac{\epsilon_2}{\sqrt{1-\delta_{k_{sup}}}\boldsymbol{\beta}_{min}-\epsilon_2}.
\end{equation}
(\ref{ee1})  is true whenever $\epsilon_2\leq \min(\epsilon_{sup},{\epsilon_{sig}})$, where 
\begin{equation}\label{epsb}
{\epsilon_{sig}}= \frac{\sqrt{1-\delta_{k_{sup}}}\boldsymbol{\beta}_{min}}{1+\frac{\sqrt{1+\delta_{k_{sup}}}}{\sqrt{1-\delta_{k_{sup}}}} \left(2+  \frac{\boldsymbol{\beta}_{max}}{\boldsymbol{\beta}_{min}} \right)}. 
\end{equation}

Next  consider the condition $RR(k)>RR(k_*),\forall k<k_*$   assuming  that $\epsilon_2<\epsilon_{sup}$, i.e.,  $\mathcal{I}\subseteq \hat{\mathcal{I}}_{k_*}$. By  Corollary \ref{corr:beta},  $ RR(k)> \Gamma_{Alg}({\bf X}),\forall k\geq k_*$ and $0<\Gamma_{Alg}({\bf X})\leq 1$  is a constant. At the same time,  the  bound (\ref{ub_tk0}) on $RR(k_*)$  is a decreasing function of $\epsilon_2$.  Hence, $\exists {\epsilon_{\bf X}}>0$ given by
\begin{equation}\label{epsilon_c}
{\epsilon_{\bf X}}=\dfrac{\sqrt{1-\delta_{k_{sup}}}\boldsymbol{\beta}_{min}\Gamma_{Alg}({\bf X})}{1+\Gamma_{Alg}({\bf X})}
\end{equation} 
such that   $RR(k_*)<RR(k),\forall k>k_*$ if $\epsilon_2<\min(\epsilon_{sup},{\epsilon_{\bf X}})$. Combining (\ref{epsb}) and (\ref{epsilon_c}), it follows that both E1) and E2) are satisfied  if $\epsilon_2<\min(\epsilon_{sup},{\epsilon_{sig}},{\epsilon_{\bf X}})$.   
\end{proof}
\section*{Appendix C: Bounds (\ref{ub_tk0}) and (\ref{lb_on_klessk0}) in Theorem \ref{thm:tf}.}
{ E1) in APPENDIX B states that $k_*=\min\{k:\mathcal{I}\subseteq \hat{\mathcal{I}}_k\}$ satisfies $k_0\leq k_*\leq k_{sup}$ whenever $\epsilon_2\leq \epsilon_{sup}$.  This implies that the 
support estimate $\hat{\mathcal{I}}_{k_*}=Alg({\bf X},{\bf y},k_*)$ is the ordered set $\{t_1,t_2,\dotsc,t_{k_*}\}$ such that $t_{k_*} \in \mathcal{I}$ and $\{t_1,t_2,\dotsc,t_{k_*-1}\}$ contains the rest $k_0-1$ entries in $\mathcal{I}$ and $k_*-k_0$ indices in $\mathcal{I}^C$.  Applying triangle inequality $\|{\bf a}+{\bf b}\|_2\leq \|{\bf a}\|_2+\|{\bf b}\|_2$, reverse triangle inequality $\|{\bf a}+{\bf b}\|_2\geq \|{\bf a}\|_2-\|{\bf b}\|_2$  and the bound $\|({\bf I}_n-{\bf P}_{\hat{\mathcal{I}_k}}){\bf w}\|_2\leq \|{\bf w}\|_2\leq \epsilon_2 $ to  $\|{\bf r}^{(k)}\|_2=\|({\bf I}_n-{\bf P}_{\hat{\mathcal{I}_k}}){\bf X}\boldsymbol{\beta}+({\bf I}_n-{\bf P}_{\hat{\mathcal{I}_k}}){\bf w}\|_2$  gives }
\begin{equation}\label{aaa1}
\|({\bf I}_n-{\bf P}_{\hat{\mathcal{I}_k}}){\bf X}\boldsymbol{\beta}\|_2-\epsilon_2\leq \|{\bf r}^{(k)}\|_2\leq \|({\bf I}_n-{\bf P}_{\hat{\mathcal{I}_k}}){\bf X}\boldsymbol{\beta}\|_2+\epsilon_2.
\end{equation}
Let ${u^k}=\mathcal{I}/\hat{\mathcal{I}}_k$  denotes the indices in $\mathcal{I}$ that are not selected after the $k^{th}$ iteration. 
Note that $({\bf I}_n-{\bf P}_{\hat{\mathcal{I}}_k}){\bf X}\boldsymbol{\beta}=({\bf I}_n-{\bf P}_{\hat{\mathcal{I}}_k}){\bf X}_{u^k}\boldsymbol{\beta}_{u^k}$. Further, $|\hat{\mathcal{I}}_k|+|u^k|\leq k_*\leq k_{sup}$ and $\hat{\mathcal{I}}_k\cap u^k=\phi$. Hence, by  a) and e) of  Lemma 1,
\begin{equation}\label{aaa2}
\sqrt{1-\delta_{k_{sup}}}\|\boldsymbol{\beta}_{u^k}\|_2\leq \|({\bf I}_n-{\bf P}_{\hat{\mathcal{I}}_k}){\bf X}_{u^k}\boldsymbol{\beta}_{u^k}\|_2\leq \sqrt{1+\delta_{k_{sup}}}\|\boldsymbol{\beta}_{u^k}\|_2.
\end{equation} 
Substituting (\ref{aaa2}) in (\ref{aaa1}) gives
\begin{equation}\label{Caibound}
\sqrt{1-\delta_{k_{sup}}}\|\boldsymbol{\beta}_{u^k}\|_2-\epsilon_2\leq\|{\bf r}^{(k)}\|_2\leq \sqrt{1+\delta_{k_{sup}}}\|\boldsymbol{\beta}_{u^k}\|_2+\epsilon_2.
\end{equation}
Since all $k_0$ non zero entries in $\boldsymbol{\beta}$ are selected after $k_*$ iterations, $u^{k_*}=\phi$  and hence $\|\boldsymbol{\beta}_{ u^{k_*}}\|_2=0$. Likewise, from the definition of $k_*=\min\{k:\mathcal{I}\subseteq \hat{\mathcal{I}}_k\}$, only one entry in $\boldsymbol{\beta}$ is left out after $k_*-1$ iterations. Hence, $|u^{k_*-1}|=1$ and  $\|\boldsymbol{\beta}_{u^{k_*-1}}\|_2\geq \boldsymbol{\beta}_{min}$. Substituting these values  in (\ref{Caibound}) gives $\|{\bf r}^{(k_*)}\|_2\leq \epsilon_2$ and $\|{\bf r}^{(k_*-1)}\|_2\geq \sqrt{1-\delta_{k_{sup}}}\boldsymbol{\beta}_{min}-\epsilon_2$. Hence, $RR(k_*)$ is  bounded by
\begin{equation*}
RR(k_*)=\dfrac{\|{\bf r}^{(k_*)}\|_2}{\|{\bf r}^{(k_*-1)}\|_2} \leq \dfrac{\epsilon_2}{\sqrt{1-\delta_{k_{sup}}}\boldsymbol{\beta}_{min}-\epsilon_2}, \forall \epsilon_2<\epsilon_{sup}.
\end{equation*}
which is the bound in (\ref{ub_tk0}). 
Next we lower bound $RR(k)$ for $k<k_*$. Note that $\boldsymbol{\beta}_{u^{k-1}}=\boldsymbol{\beta}_{u^{k}}+\boldsymbol{\beta}_{u^{k-1}/u^{k}}$
after appending enough zeros in appropriate locations. $\boldsymbol{\beta}_{u^{k-1}/u^{k}}$ can be either a vector of all zeros or can have one non zero entry.  Hence, $\|\boldsymbol{\beta}_{u^{k-1}/u^{k}}\|_2\leq \boldsymbol{\beta}_{max}$. Applying triangle inequality to $\boldsymbol{\beta}_{u^{k-1}}=\boldsymbol{\beta}_{u^{k}}+\boldsymbol{\beta}_{u^{k-1}/u^{k}}$ gives
the bound 
\begin{equation}\label{temp_bound}
\|\boldsymbol{\beta}_{u^{k-1}}\|_2\leq \|\boldsymbol{\beta}_{u^{k}}\|_2+\|\boldsymbol{\beta}_{u^{k-1}/u^{k}}\|_2 \leq  \|\boldsymbol{\beta}_{u^{k}}\|_2+  \boldsymbol{\beta}_{max}
\end{equation}
Applying (\ref{temp_bound}) and (\ref{Caibound}) in $RR(k)$ for $k<k_*$ gives
\begin{equation}\label{A1bound}
\begin{array}{ll}
RR(k)=\dfrac{\|{\bf r}^{(k)}\|_2}{\|{\bf r}^{(k-1)}\|_2} &\geq \dfrac{\sqrt{1-\delta_{k_{sup}}}\|\boldsymbol{\beta}_{u^k}\|_2-\epsilon_2}{\sqrt{1+\delta_{k_{sup}}}\|\boldsymbol{\beta}_{u^{k-1}}\|_2+\epsilon_2}\\
&\geq \dfrac{\sqrt{1-\delta_{k_{sup}}}\|\boldsymbol{\beta}_{u^k}\|_2-\epsilon_2}{\sqrt{1+\delta_{k_{sup}}}\left[\|\boldsymbol{\beta}_{u^{k}}\|_2+\boldsymbol{\beta}_{max}\right]+\epsilon_2}\\
\end{array}
\end{equation}
for all $\epsilon_2\leq \epsilon_{sup}$. The R.H.S of (\ref{A1bound}) can be rewritten as
\begin{equation}\label{A1bound2}
\begin{array}{ll}
\dfrac{\sqrt{1-\delta_{k_{sup}}}\|\boldsymbol{\beta}_{u^k}\|_2-\epsilon_2}{\sqrt{1+\delta_{k_{sup}}}\left[\|\boldsymbol{\beta}_{u^{k}}\|_2+\boldsymbol{\beta}_{max}\right]+\epsilon_2}=\dfrac{\sqrt{1-\delta_{k_{sup}}}}{\sqrt{1+\delta_{k_{sup}}}} \\
\ \ \ \ \    \times \left(1-\dfrac{\dfrac{\epsilon_2}{\sqrt{1-\delta_{k_{sup}}}}+\dfrac{\epsilon_2}{\sqrt{1+\delta_{k_{sup}}}}+\boldsymbol{\beta}_{max}}{\|\boldsymbol{\beta}_{u^{k}}\|_2+\boldsymbol{\beta}_{max}+\dfrac{\epsilon_2}{\sqrt{1+\delta_{k_{sup}}}}}\right)
\end{array}
\end{equation}
From (\ref{A1bound2}) it is clear that the R.H.S of (\ref{A1bound}) decreases with decreasing $\|\boldsymbol{\beta}_{u^k}\|_2$.  Note  that the minimum value of $\|\boldsymbol{\beta}_{u^{k}}\|_2$ is $\boldsymbol{\beta}_{min}$ itself. Hence, substituting $\|\boldsymbol{\beta}_{u^{k}}\|_2\geq \boldsymbol{\beta}_{min}$ in (\ref{A1bound})  will give the following bound in (\ref{lb_on_klessk0}).
\begin{equation*}\label{lb_on_klessk0_temp}
RR(k)\geq \dfrac{\sqrt{1-\delta_{k_{sup}}}\boldsymbol{\beta}_{min}-\epsilon_2}{\sqrt{1+\delta_{k_{sup}}}(\boldsymbol{\beta}_{max}+\boldsymbol{\beta}_{min})+\epsilon_2},\  \ \forall k<k_*\  \text{and} \  \epsilon_2<\epsilon_{sup}.
\end{equation*}
\section*{Appendix D: Proof of Theorem \ref{thm-rrt}.}
\begin{proof}
{Theorem \ref{thm-rrt} states that the   support estimate $\hat{\mathcal{I}}_{\hat{k}_{RRT}}$, where $\hat{k}_{RRT}=\max\{k:RR(k)<\Gamma_{Alg}^{lb}({\bf X})\}$  satisfies $\mathcal{I} \subseteq \hat{\mathcal{I}}_{\hat{k}_{RRT}}$ and $|\hat{\mathcal{I}}_{\hat{k}_{RRT}}|\leq k_{sup}$, if $\|{\bf w}\|_2\leq \epsilon_2\leq \min(\epsilon_{sup},\epsilon_{RRT})$. For this to happen, it is sufficient  that   $k_*=\min\{k: \mathcal{I}\subseteq \hat{\mathcal{I}}_{k}\}\leq k_{sup}$  and $\hat{k}_{RRT}=\max\{k:RR(k)<\Gamma_{Alg}^{lb}({\bf X})\}$ equals $k_*$. } By Assumption 1, $k_*\leq k_{sup}$ whenever $\epsilon_2<\epsilon_{sup}$. Note that $\hat{k}_{RRT}=k_*$, iff  $RR(k_*)<\Gamma_{Alg}^{lb}({\bf X})$ and $RR(k)\geq \Gamma_{Alg}^{lb}({\bf X})$ for $k>k_*$. By the definition of $\Gamma_{Alg}^{lb}({\bf X})$,  $\Gamma_{Alg}^{lb}({\bf X})\leq \Gamma_{Alg}({\bf X})< RR(k)$ for $k>k_*$ at all SNR. Hence, to show $\hat{k}_{RRT}=k_*$, one only need  to show that $RR(k_*)<\Gamma_{Alg}^{lb}({\bf X})$. From (\ref{ub_tk0}), $RR(k_*)$ satisfy $RR(k_*)\leq \dfrac{\epsilon_2}{\sqrt{1-\delta_{k_{sup}}}\boldsymbol{\beta}_{min}-\epsilon_2}$, whenever $\epsilon_2<\epsilon_{sup}$. Thus, $RR(k_*)<\Gamma_{Alg}^{lb}({\bf X})$ if $\epsilon_2\leq  \min(\epsilon_{sup},{\epsilon}_{RRT})$, where
\begin{equation}
{\epsilon}_{RRT}=\dfrac{\sqrt{1-\delta_{k_{sup}}}\boldsymbol{\beta}_{min}\Gamma_{Alg}^{lb}({\bf X})}{1+\Gamma_{Alg}^{lb}({\bf X})}.
\end{equation}
Hence,  the support estimate $\hat{\mathcal{I}}_{\hat{k}_{RRT}}=\hat{\mathcal{I}}_{k_*}$  satisfies $\mathcal{I}\subseteq \hat{\mathcal{I}}_{\hat{k}_{RRT}}$ with $|\hat{\mathcal{I}}_{\hat{k}_{RRT}}|=k_*\leq k_{sup}$, if $\epsilon_2<\min(\epsilon_{sup},{\epsilon_{RRT}})$. 
\end{proof}

\section*{Appendix E: Bound (\ref{mismatch1}) in Section \rom{6}.C. }
Let $\hat{\mathcal{I}}_{k_0}=\text{OMP}({\bf y},{\bf X},k_0)$ be the support estimated by $\text{OMP}_{k_0}$. Then the corresponding estimate $\hat{\boldsymbol{\beta}}$ satisfies  $\hat{\boldsymbol{\beta}}_{\hat{\mathcal{I}}_{k_0}}={\bf X}_{\hat{\mathcal{I}}_{k_0}}^{\dagger}{\bf y}$ and $\hat{\boldsymbol{\beta}}_{\hat{\mathcal{I}}_{k_0}^C}={\bf 0}_{p-k_0}$. $\overline{\bf a}_{\mathcal{J}}$ denotes the vector in $\mathbb{R}^p$ such that $\overline{\bf a}_{\mathcal{J}}(i)={\bf a}_i$ for $i\in\mathcal{J}$ and $\overline{\bf a}_{\mathcal{J}}(i)=0$ for $i\notin\mathcal{J}$. $\hat{\boldsymbol{\beta}}$ satisfies  $\boldsymbol{\beta}-\hat{\boldsymbol{\beta}}=
\overline{\boldsymbol{\beta}_{\mathcal{I}/\hat{\mathcal{I}}_{k_0}}}+
\overline{\boldsymbol{\beta}_{\hat{\mathcal{I}}_{k_0}}-\hat{\boldsymbol{\beta}}_{\hat{\mathcal{I}}_{k_0}}\   }$. 
For signals $\boldsymbol{\beta}\in \mathcal{B}_1$, $|\mathcal{I}/\hat{\mathcal{I}}_{k_0}|>0$. Then, the following bounds hold true.
\begin{equation}\label{mismatch}
\begin{array}{ll}
\|\boldsymbol{\beta}-\hat{\boldsymbol{\beta}}\|_2&\overset{(a)}{\geq} \|\boldsymbol{\beta}_{\mathcal{I}/\hat{\mathcal{I}}_{k_0}}\|_2- \|\boldsymbol{\beta}_{\hat{\mathcal{I}}_{k_0}}-\hat{\boldsymbol{\beta}}_{\hat{\mathcal{I}}_{k_0}}\|_2 \\
&=\|\boldsymbol{\beta}_{\mathcal{I}/\hat{\mathcal{I}}_{k_0}}\|_2-\|\boldsymbol{\beta}_{\hat{\mathcal{I}}_{k_0}}-{\bf X}_{\hat{\mathcal{I}}_{k_0}}^{\dagger}({\bf X}_{\mathcal{I}}\boldsymbol{\beta}_{\mathcal{I}}+{\bf w})\|_2 \\
&\overset{(b)}{\geq} \|\boldsymbol{\beta}_{\mathcal{I}/\hat{\mathcal{I}}_{k_0}}\|_2-\|\boldsymbol{\beta}_{\hat{\mathcal{I}}_{k_0}}-{\bf X}_{\hat{\mathcal{I}}_{k_0}}^{\dagger}{\bf X}_{\hat{\mathcal{I}}_{k_0}\cap \mathcal{I}}\boldsymbol{\beta}_{\hat{\mathcal{I}}_{k_0}\cap \mathcal{I}} \|_2 \\
& \  -\|{\bf X}_{\hat{\mathcal{I}}_{k_0}}^{\dagger}{\bf X}_{{\mathcal{I}}/ \hat{\mathcal{I}}_{k_0}}\boldsymbol{\beta}_{{\mathcal{I}}/ \hat{\mathcal{I}}_{k_0}} \|_2-\|{\bf X}_{\hat{\mathcal{I}}_{k_0}}^{\dagger}{\bf w}\|_2 \\
\end{array}
\end{equation} 
 (a)  in (\ref{mismatch}) follows from reverse triangle inequality $\|{\bf a}+{\bf b}\|_2\geq \|{\bf a}\|_2-\|{\bf b}\|_2$ and b) follows from the triangle inequality $\|{\bf a}+{\bf b}\|_2\leq \|{\bf a}\|_2+\|{\bf b}\|_2$. Further, ${\bf X}_{\hat{\mathcal{I}}_{k_0}}^{\dagger}{\bf X}_{\hat{\mathcal{I}}_{k_0}\cap \mathcal{I}}\boldsymbol{\beta}_{\hat{\mathcal{I}}_{k_0}\cap \mathcal{I}}={\bf X}_{\hat{\mathcal{I}}_{k_0}}^{\dagger}{\bf X}_{\hat{\mathcal{I}}_{k_0}}[\boldsymbol{\beta}_{\hat{\mathcal{I}}_{k_0}\cap \mathcal{I}}^T,{\bf 0}_{k_0-|\hat{\mathcal{I}}_{k_0}\cap \mathcal{I}|}^T]^T=[\boldsymbol{\beta}_{\hat{\mathcal{I}}_{k_0}\cap \mathcal{I}}^T,{\bf 0}_{k_0-|\hat{\mathcal{I}}_{k_0}\cap \mathcal{I}|}^T]^T$. Hence, $\boldsymbol{\beta}_{\hat{\mathcal{I}}_{k_0}}-{\bf X}_{\hat{\mathcal{I}}_{k_0}}^{\dagger}{\bf X}_{\hat{\mathcal{I}}_{k_0}\cap \mathcal{I}}\boldsymbol{\beta}_{\hat{\mathcal{I}}_{k_0}\cap \mathcal{I}}={\bf 0}_{k_0}$. Note that the sets $\hat{\mathcal{I}}_{k_0}$ and $\mathcal{I}/\hat{\mathcal{I}}_{k_0}$ are disjoint and $|\hat{\mathcal{I}}_{k_0}|+|\mathcal{I}/\hat{\mathcal{I}}_{k_0}|\leq 2k_0$. The following bounds thus follows from b) and d) of Lemma 1.
\begin{equation*}
\begin{array}{ll}
\|{\bf X}_{\hat{\mathcal{I}}_{k_0}}^{\dagger}{\bf X}_{{\mathcal{I}}/ \hat{\mathcal{I}}_{k_0}}\boldsymbol{\beta}_{{\mathcal{I}}/ \hat{\mathcal{I}}_{k_0}} \|_2&=\|\left({\bf X}_{\hat{\mathcal{I}}_{k_0}}^{T}{\bf X}_{\hat{\mathcal{I}}_{k_0}}\right)^{-1}{\bf X}_{\hat{\mathcal{I}}_{k_0}}^{T}{\bf X}_{{\mathcal{I}}/ \hat{\mathcal{I}}_{k_0}}\boldsymbol{\beta}_{{\mathcal{I}}/ \hat{\mathcal{I}}_{k_0}} \|_2 \\
&\leq \dfrac{1}{1-\delta_{k_0}}\|{\bf X}_{\hat{\mathcal{I}}_{k_0}}^{T}{\bf X}_{{\mathcal{I}}/ \hat{\mathcal{I}}_{k_0}}\boldsymbol{\beta}_{{\mathcal{I}}/ \hat{\mathcal{I}}_{k_0}} \|_2 \\
&\leq \dfrac{\delta_{2k_0}}{1-\delta_{k_0}}\|\boldsymbol{\beta}_{\mathcal{I}/\hat{\mathcal{I}}_{k_0}}\|_2.
\end{array}
\end{equation*}
Also by c) of Lemma 1, $\|{\bf X}_{\hat{\mathcal{I}}_{k_0}}^{\dagger}{\bf w}\|_2\leq \dfrac{\epsilon_2}{\sqrt{1-\delta_{k_0}}}$. Substituting these results in (\ref{mismatch}) gives
$ \|\boldsymbol{\beta}-\hat{\boldsymbol{\beta}}\|_2{\geq}(1-\dfrac{\delta_{2k_0}}{1-\delta_{k_0}})\|\boldsymbol{\beta}_{\mathcal{I}/\hat{\mathcal{I}}_{k_0}}\|_2-\dfrac{\epsilon_2}{\sqrt{1-\delta_{k_0}}}$. Bound (\ref{mismatch1}) then follows from $\|\boldsymbol{\beta}_{\mathcal{I}/\hat{\mathcal{I}}_{k_0}}\|_2\geq \boldsymbol{\beta}_{min}$. 

\bibliography{compressive.bib}
\bibliographystyle{IEEEtran}

\end{document}